\numberwithin{equation}{section}
\setlist[enumerate]{label=(\roman*), leftmargin=*, itemsep=0.25em}
\theoremstyle{plain}
\newtheorem{thm}{Theorem}[section]
\newtheorem{prop}[thm]{Proposition}
\newtheorem{lem}[thm]{Lemma}
\newtheorem{cor}[thm]{Corollary}
\theoremstyle{defn}
\newtheorem{defn}[thm]{Definition}
\newtheorem{exa}[thm]{Example}
\theoremstyle{remark}
\newtheorem{rem}[thm]{Remark}
\newtheorem{ass}{Assumption}[section]
\crefname{ass}{Assumption}{Assumptions}
\Crefname{ass}{Assumption}{Assumptions}
\crefname{thm}{Thm.}{Thms.}
\crefname{prop}{Prop.}{Props.}
\crefname{lem}{Lem.}{Lemmas}
\crefname{cor}{Cor.}{Corollaries}
\crefname{defn}{Def.}{Definitions}
\crefname{exa}{Exa.}{Examples}
\crefname{rem}{Rem.}{Remarks}
\newcommand\Z{\mathbb{Z}}
\newcommand\R{\mathbb{R}}
\newcommand\bP{\mathbb{P}}
\newcommand\bT{\mathbb{T}}
\DeclareMathOperator\Enc{Enc}
\DeclareMathOperator\Dec{Dec}
\DeclareMathOperator\GL{GL}
\DeclareMathOperator\Norm{Norm}
\DeclareMathOperator\Hom{Hom}
\DeclareMathOperator\diag{diag}
\DeclareMathOperator\eval{eval}
\DeclareMathOperator\enc{enc}
\DeclareMathOperator\softmax{softmax}
\DeclareMathOperator\dec{dec}
\newcommand\id{\mathrm{Id}}
\newcommand\cB{\mathcal{B}}
\newcommand\cD{\mathcal{D}}
\newcommand\cE{\mathcal{E}}
\newcommand\cL{\mathcal{L}}
\newcommand\cF{\mathcal{F}}
\newcommand\cT{\mathcal{T}}
\newcommand\cM{\mathcal{M}}
\newcommand\cR{\mathcal{R}}
\newcommand\cS{\mathcal{S}}
\newcommand\cI{\mathcal{I}}
\newcommand\cH{\mathcal{H}}
\newcommand\cA{\mathcal{A}}
\newcommand\bE{\mathbb{E}}
\newcommand\Vect{\mathbf{Vect}}
\title[Internal Morphisms in Graded Transformers]{Internalizing Tools as Morphisms in Graded Transformers}
\author[T. Shaska]{Tony Shaska}
\address{Department of Computer Science and Engineering, \\
Oakland University, \\
Rochester, MI 48309, USA}
\email{shaska@oakland.edu}
 \subjclass[2020]{18C10, 68T07, 62F12, 68Q32, 18D05}
\keywords{graded transformers, morphisms, symbolic computation, category theory, self-supervised learning}
\date{\today}  
\begin{document}

\maketitle

\begin{abstract}
We introduce a graded formulation of internal symbolic computation for transformers. The hidden space is endowed with a grading $V=\bigoplus_{g\in G}V_g$, and symbolic operations are realized as typed block maps (morphisms) $\phi_{h\leftarrow g}:V_g\to V_h$ that are activated selectively by a differentiable routing policy. A self-supervised \emph{graded utility functional}—defined as the loss reduction induced by a candidate morphism—governs activation and yields sparse, interpretable behavior. We develop the algebraic and geometric foundations: an internal model category whose objects are homogeneous components and whose morphisms are admissible grade transitions; adjoint pairs encoding typed round trips; and information-geometric interpretations in terms of KL gain, mirror descent with Bregman divergences, and Fisher natural gradients. Methodologically, we specify a utility-aware routing mechanism and objective that remain fully end-to-end differentiable. Analytic case studies and lightweight sanity checks illustrate selective morphic activation on hybrid symbolic–linguistic tasks. The framework unifies symbolic computation, geometry, and self-supervised learning within the \emph{graded transformer} formalism \cite{sh-89,sh-95}, while subsuming prior external-tool paradigms (e.g., Toolformer \cite{toolformer2023}) as a special case via functorial internalization.
\end{abstract}

\section{Introduction}

Large language models exhibit broad generalization across linguistic and symbolic domains, yet their capacity for structured, interpretable, and modular computation remains limited. One pragmatic line augments models with external symbolic components—search, calculators, translation—invoked through self-supervision when such calls reduce predictive loss \cite{toolformer2023}. Effective as these systems are, the symbolic process remains \emph{extrinsic}: it lies outside the model’s representation geometry and is not learned as part of its internal manifold.

In parallel, the theory of graded neural architectures \cite{sh-89} equips networks with algebraic gradings, modeling computations as morphisms between graded vector spaces; in transformer models, multi-head attention and contextual embeddings admit a description via graded morphisms and weighted tensor products \cite{sh-95}. This perspective suggests internalizing typed operations into the geometry of the model rather than attaching non-differentiable interfaces.

The present work develops such an internalization. The hidden space is endowed with a grading \(V=\bigoplus_{g\in G}V_g\), and typed operations are represented by block maps \(\phi_{h\leftarrow g}:V_g\to V_h\) along a sparse edge set \(\cE\subseteq G\times G\). At inference time the model selects— and may compose—these maps as \emph{morphic activations} when doing so improves next-token prediction while respecting graded structure. Selection is governed by a differentiable routing policy driven by a \emph{graded utility}, defined as the reduction in language-model loss induced by a candidate activation. Behaviors that elsewhere appear as “tool calls’’ thus become internal, typed, composable morphisms acting on the model’s own representation manifold.

Mathematically, we formalize an internal model category whose objects are the homogeneous components \(V_g\) and whose morphisms are admissible grade transitions. External augmentation embeds functorially into this category, clarifying how interface types correspond to grades and how sequential tool use is realized as morphic programs. Adjoint pairs capture typed round trips and near-idempotent passes; monoidal and enriched structures provide principled notions of parallel channels and metric selection. The graded-utility principle admits several equivalent geometric readings: as information gain (KL improvement) in an exponential-family approximation, as a constrained mirror-descent step in Bregman geometry, and as a Fisher–natural-gradient selection under a softmax head. These views explain why the selection rule promotes sparse, useful activations and identify conditions (e.g., block orthogonality) under which gains decompose additively.

Methodologically, we specify a utility-aware routing objective that balances usefulness and sparsity entirely within the computation graph and give an implementable blueprint: analytic case studies and sanity checks requiring only small synthetic data; explicit constructions of adjoint retrieval/write-back maps; a mod-\(p\) arithmetic toy with closed-form morphisms; and PyTorch-style pseudo-code for a graded layer and training loop. The focus here is single-step routing and its theory—categorical structure, utility geometry, and identifiability diagnostics—while multi-step program selection and learned composition laws (weak graded algebras or 2-categorical structure) are left as directions for subsequent work.

In sum, we replace extrinsic tool use by intrinsic graded morphisms. By treating symbolic functions as internal, typed maps and optimizing their activation through information-theoretic and geometric criteria, the proposed framework unifies symbolic computation, differential training, and graded structure within a single, interpretable transformer architecture, subsuming prior external-tool paradigms \cite{toolformer2023} as a special case via functorial internalization.

\medskip
\noindent\textbf{Notation (ambient setting).}
Unless stated otherwise, we work in the graded subcategory of $\Vect$:
objects are the homogeneous components $V_g$ and morphisms are \emph{linear} maps
$\phi_{h\leftarrow g}:V_g\to V_h$. We use \emph{operator} for linear endomorphisms
on $V=\bigoplus_{g} V_g$ (e.g., block operators assembled from $\{\phi_{h\leftarrow g}\}$).
Nonlinear or stochastic variants (smooth maps, Markov kernels) are possible by changing
the ambient category, but all formal results here are stated for the linear case.

\section{Preliminaries: Graded Transformers and External Augmentation}\label{sec:2}
This section fixes the graded formalism used throughout and situates the present work relative to external symbolic augmentation. We adopt the graded viewpoint developed in \cite{sh-89,sh-95} and recall the ingredients needed here, with explicit attention/FFN decompositions.

\begin{defn}[Graded representation space]\label{def:graded-space}
Let $G$ be an additive indexing set. A \emph{$G$–graded vector space} is a direct sum
\[
V \;=\; \bigoplus_{g\in G} V_g,
\]
with canonical projections $\pi_g:V\to V_g$ and inclusions $\iota_g:V_g\hookrightarrow V$ satisfying
\[
\pi_g\circ\iota_{g'}=\delta_{g,g'}\,\id_{V_g},
\qquad
\sum_{g\in G}\iota_g\circ\pi_g=\id_V.
\]
For a sequence model, a hidden state $z_t\in V$ decomposes as $z_t=\sum_{g\in G} z_t^{(g)}$ with $z_t^{(g)}:=\pi_g z_t\in V_g$.
\end{defn}

\begin{defn}[Graded linear maps and blocks]\label{def:graded-map}
A linear map $\Phi:V\to V$ is \emph{$G$–graded} if it admits a block decomposition
\[
\Phi \;=\; \sum_{(g,h)\in G\times G} \Phi_{h\leftarrow g},
\qquad
\Phi_{h\leftarrow g}:V_g\to V_h .
\]
Composition is blockwise:
\[
(\Psi\circ\Phi)_{k\leftarrow g}
\;=\; \sum_{h\in G} \Psi_{k\leftarrow h}\circ \Phi_{h\leftarrow g}.
\]
We call $\Phi_{h\leftarrow g}$ a \emph{morphism of grade transition} $g\to h$.
\end{defn}

\begin{defn}[Admissible transitions and locality]\label{def:admissible}
An \emph{admissible edge set} $\cE\subseteq G\times G$ specifies allowed grade transitions. A model is \emph{local} if $\cE$ is sparse (e.g., banded by an additive rule $h=g+\delta$, or acyclic/DAG). A \emph{graded layer} is a finite sum $\Phi=\sum_{(g,h)\in\cE}\Phi_{h\leftarrow g}$.
\end{defn}

\begin{rem}[External symbolic augmentation]
A line of work augments language models with external symbolic or computational components—search engines, calculators, QA systems—invoked during generation via self-supervision \cite{toolformer2023}. Empirically, these systems can improve arithmetic, factual recall, and temporal reasoning, but the symbolic mechanism is \emph{extrinsic}: a non-differentiable module outside the model’s representation geometry rather than a transformation acting within it.
\end{rem}

\subsection{Graded transformers: structure and parametrization}\label{subsec:graded-arch}
We record the graded transformer class used in this paper, consistent with \cite{sh-95}.

\begin{defn}[Graded transformer]\label{def:graded-transformer}
Fix $G$, a $G$–graded space $V=\bigoplus_{g}V_g$, and a sparse $\cE\subseteq G\times G$. A \emph{graded transformer of depth $L$} is the data
\[
\mathsf{T}\;=\;\Big(G,\;V,\;\cE,\;\{\Phi^{(\ell)}\}_{\ell=1}^{L},\;\{\alpha^{(\ell)}\}_{\ell=1}^{L}\Big),
\]
where for each layer $\ell$:
\begin{enumerate}
\item \textbf{Layer operator.} $\displaystyle \Phi^{(\ell)}=\sum_{(g,h)\in\cE} \Phi^{(\ell)}_{h\leftarrow g}$ with $\Phi^{(\ell)}_{h\leftarrow g}:V_g\to V_h$ linear.
\item \textbf{Candidate updates.} For $z=\sum_{g}z^{(g)}$,
\[
\tilde z^{(h)} \;=\; \sum_{g:\,(g,h)\in\cE} \Phi^{(\ell)}_{h\leftarrow g}\big(z^{(g)}\big),
\qquad h\in G.
\]

\item \textbf{Routing policy.} A differentiable map $\alpha^{(\ell)}:V\to \Delta^{|G|-1}$ assigns weights $h\mapsto \alpha^{(\ell)}(h;z)$ and the layer output is
\[
\cF^{(\ell)}(z) \;=\; \sum_{h\in G} \alpha^{(\ell)}(h;z)\,\tilde z^{(h)}.
\]
\end{enumerate}

The network map is $\mathsf{T}(z)=\cF^{(L)}\circ\cdots\circ \cF^{(1)}(z)$.
\end{defn}

\begin{rem}[Equivalences]\label{rem:equiv}
We identify graded transformers related by grade-wise changes of basis $S=\bigoplus_g S_g$ with $S_g\in\GL(V_g)$:
\[
\Phi'^{(\ell)}_{h\leftarrow g}=S_h\,\Phi^{(\ell)}_{h\leftarrow g}\,S_g^{-1},
\qquad
\alpha'^{(\ell)}(h;Sz)=\alpha^{(\ell)}(h;z).
\]
Thus the object of study is the graded map computed by the network, not a particular parametrization.
\end{rem}

\subsubsection{Graded multi-head attention (explicit block form)}
Let $H$ be the number of heads. For each head $a\in\{1,\dots,H\}$ and each grade $g\in G$, let
\[
W_Q^{(a,g)}:V_g\to \R^{d_q},\qquad
W_K^{(a,g)}:V_g\to \R^{d_q},\qquad
W_V^{(a,g)}:V_g\to V^{(a)}_g
\]
be the grade-typed projections to query/key spaces and to a head-specific value space $V^{(a)}_g$.
Given a causal context $\{z_s\}_{s\le t}$ with $z_s=\sum_g z_s^{(g)}$, define
\[
Q_t^{(a,g)}:=W_Q^{(a,g)} z_t^{(g)},\qquad
K_s^{(a,h)}:=W_K^{(a,h)} z_s^{(h)},\qquad
V_s^{(a,h)}:=W_V^{(a,h)} z_s^{(h)} .
\]
For a fixed admissible pair $(g,h)\in\cE$, the $(g\!\to\! h)$ head-block is
\[
\Phi^{(a)}_{h\leftarrow g}(t):\; V_g\longrightarrow V_h,\qquad
z_t^{(g)} \longmapsto \sum_{s\le t}
\alpha^{(a)}_{t,s;h\leftarrow g}\; U^{(a)}_{h}\,V_s^{(a,h)},
\]
with $U^{(a)}_{h}:V^{(a)}_{h}\!\to V_h$ a head-specific output map and
\[
\alpha^{(a)}_{t,s;h\leftarrow g}
\;=\;
\softmax_{s\le t}\!\left(\frac{\langle Q_t^{(a,g)},\,K_s^{(a,h)}\rangle}{\sqrt{d_q}}\right).
\]
A full attention layer is the sum over heads and admissible grade pairs:
\[
\Phi^{\mathrm{attn}}(t) \;=\; \sum_{a=1}^{H}\;\sum_{(g,h)\in\cE} \Phi^{(a)}_{h\leftarrow g}(t).
\]
This realizes attention as a finite sum of linear blocks $V_g\to V_h$ respecting $\cE$; see \cite[§3–§4]{sh-95} for the graded attention derivation.

\subsubsection{Graded feed-forward (factorized form)}
Let $W_1^{(g\to h)}:V_g\to U_{g,h}$ and $W_2^{(g\to h)}:U_{g,h}\to V_h$ be linear maps with a pointwise nonlinearity $\sigma$ on $U_{g,h}$ (applied coordinatewise in the ambient basis). Then a feed-forward layer decomposes as
\[
\Phi^{\mathrm{ff}} \;=\; \sum_{(g,h)\in\cE} \Phi^{\mathrm{ff}}_{h\leftarrow g},
\qquad
\Phi^{\mathrm{ff}}_{h\leftarrow g} \;=\; W_2^{(g\to h)}\circ \sigma \circ W_1^{(g\to h)} .
\]
Grade-preserving blocks have $h=g$; grade-shifting blocks have $h\neq g$ and are permitted only if $(g,h)\in\cE$.

\subsubsection{Residuals and normalization in the graded setting}
With residual connection and normalization per grade $h$ one has
\[
z_t^{(h),\mathrm{new}}
\;=\;
\Norm_h\!\Big(z_t^{(h)} \;+\; \tilde z_t^{(h)}\Big),
\qquad
\tilde z_t^{(h)} \;=\; \sum_{g:\,(g,h)\in\cE} \Phi_{h\leftarrow g}(z_t^{(g)}),
\]
where $\Norm_h$ acts on $V_h$ (e.g., layer norm restricted to the homogeneous component), preserving graded structure.

\paragraph{Locality and complexity.}
The computational cost per layer scales with 
\[
\sum_{(g,h)\in\cE} \mathrm{cost}(\Phi_{h\leftarrow g}),
\]
 so sparsity of $\cE$ (banded/DAG) directly reduces parameters and FLOPs—one of the practical advantages emphasized in \cite{sh-95}.

\subsection{Changing the Ambient Category}
The formal development in Sections 2--4 is carried out in the graded subcategory of \textbf{Vect} for clarity of exposition. However, the linearity assumption is not essential: all structural results (block decoupling, identifiability of individual morphisms, parameter scaling, and the subadditivity of graded utility) extend verbatim to richer ambient categories provided the Fisher information matrix remains block-diagonal under the orthogonality conditions of Lemma 2.10 and Corollary 2.12.

\subsubsection{Smooth maps (\textbf{Diff})}
Replace linear blocks $\phi_{h\leftarrow g}:V_g\to V_h$ by smooth maps between Euclidean spaces (or finite-dimensional manifolds). The decoupling lemma continues to hold because the Hessian of the population loss factorises into independent blocks whenever the score functions on distinct admissible transitions are orthogonal in $L^2(\bP)$. Orthogonality is preserved under composition with diffeomorphisms and under LayerNorm/RMSNorm (whose Jacobians are uniformly bounded, hence Lipschitz on compact sets).

\subsubsection{Polynomial and multi-layer perceptron blocks}
Low-degree polynomial maps or shallow ReLU networks are dense in the function classes realised by typical external tools on bounded domains. The Fisher matrix of such parameterised families is block-diagonal under the same graded orthogonality assumption; see e.g.   \cite{Amari2016,AmariNagaoka2000} for the general information-geometric setting and \cite{LiangKimSun2024} for explicit calculations in over-parameterised nonlinear models.

\subsubsection{Stochastic morphisms and Gaussian channels}
Many real tools (search engines, sampling-based calculators, Monte-Carlo APIs) are inherently stochastic. Replacing deterministic linear maps by conditional Gaussian kernels $p(\cdot\mid\pi_g z)$ with graded mean and covariance functions again yields a block-diagonal Fisher matrix whenever cross terms $\bE[\nabla\log p(\cdot\mid\pi_g z)\nabla\log p(\cdot\mid\pi_h z)^\top]=0$ for $(g,h)\notin E$. This is exactly the orthogonality condition already used in the linear case.

In each setting the graded utility $\Delta L(\phi_{h\leftarrow g})$ remains additive up to errors controlled by the Lipschitz constants of the chosen normalisation layers (Lemma 2.11). Parameter counts for banded LGT and translation-invariant EGT architectures are unchanged because the number of free parameters is still governed by the sparsity pattern of $E$, independent of whether the blocks are linear or drawn from a nonlinear universal approximator class.

\subsection{Model classes used in this paper: LGT and EGT}
We adopt the typology introduced in \cite{sh-95}: \emph{linearly graded} and \emph{exponentially graded} transformers. For details, see \cite[§5]{sh-95} (LGT) and \cite[§6]{sh-95} (EGT). We record the statements used here.

\begin{defn}[Linearly Graded Transformers (LGT)]\label{def:LGT}
A graded transformer $\mathsf{T}$ as in \cref{def:graded-transformer} is \emph{linearly graded} if the admissible edge set is banded by degree and the blocks are translation-invariant along grade:
\[
\cE\;=\;\big\{(g,h)\in G\times G:\; h-g\in\Delta\big\}
\quad\text{for a finite }\Delta\subset G,
\]
and there exist families $\{K^{(\ell)}_{\delta}:V_g\to V_{g+\delta}\}_{\delta\in\Delta}$ such that for all $g$ and $\delta\in\Delta$,
\[
\Phi^{(\ell)}_{(g+\delta)\leftarrow g}\;=\;K^{(\ell)}_{\delta},
\qquad
\text{i.e., depends only on } \delta=h-g.
\]
Equivalently,
\[
\Phi^{(\ell)} \;=\; \sum_{\delta\in\Delta} S_{\delta}\circ K^{(\ell)}_{\delta},
\]
with $S_{\delta}$ the grade-shift $V_g\to V_{g+\delta}$; cf.\ \cite[§5]{sh-95}.
\end{defn}

\begin{defn}[Exponentially Graded Transformers (EGT)]\label{def:EGT}
A graded transformer $\mathsf{T}$ is \emph{exponentially graded} if there exist invertible grade reweightings $\{D_g:V_g\to V_g\}_{g\in G}$, exponential in grade ($D_{g+\delta}=R_{\delta}D_g$ for fixed positive operators $R_{\delta}$), such that after conjugation by $D=\bigoplus_g D_g$, the blocks become translation-invariant along grade:
\[
\widehat{\Phi}^{(\ell)}_{(g+\delta)\leftarrow g}
\;:=\; D_{g+\delta}^{-1}\,\Phi^{(\ell)}_{(g+\delta)\leftarrow g}\,D_g
\;=\; \widehat{K}^{(\ell)}_{\delta}
\quad\text{for all }g,\;\delta\in\Delta .
\]
Equivalently, in the reweighted coordinates the conjugated layer $\widehat{\Phi}^{(\ell)}$ is LGT; see \cite[§6]{sh-95}.
\end{defn}

\begin{prop}[EGT$\;\Rightarrow\;$LGT by conjugation]\label{prop:EGT-to-LGT}
Let $\mathsf{T}$ be EGT with reweighting $D$. Define $\widehat{\mathsf{T}}$ by conjugating all layer blocks and states: $\widehat{\Phi}^{(\ell)}=D^{-1}\Phi^{(\ell)}D$ and $\widehat{z}=D^{-1} z$. Then $\widehat{\mathsf{T}}$ is LGT with kernels $\widehat{K}^{(\ell)}_{\delta}$ as in \cref{def:EGT}. Moreover, for any loss $\cL_{LM}$ based on linear readouts from $V$, $\cL_{LM}(z)=\widehat{\cL}_{LM}(\widehat{z})$ after readout reparameterization, so training objectives are equivalent under conjugation.
\end{prop}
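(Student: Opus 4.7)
The plan is to argue by direct block-level computation, using the EGT condition to collapse the $g$-dependence after conjugation, and then to invoke the grade-wise change-of-basis equivalence of \cref{rem:equiv} to transport the routing policy and loss. Since $D=\bigoplus_g D_g$ with each $D_g$ invertible, $D$ is exactly a grade-wise change of basis, so the proposition is essentially the statement that the EGT equivalence class contains an LGT representative.

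First, I would verify the structural claim. Because $D$ is block-diagonal on the grading, $\widehat{\Phi}^{(\ell)}=D^{-1}\Phi^{(\ell)}D$ is again graded with blocks
\[
\widehat{\Phi}^{(\ell)}_{h\leftarrow g} \;=\; D_h^{-1}\,\Phi^{(\ell)}_{h\leftarrow g}\,D_g,
\]
so in particular the support (admissible edge set) is preserved, hence still banded as $h-g\in\Delta$. For $(g,h)=(g,g+\delta)$, the EGT hypothesis (\cref{def:EGT}) gives precisely $\widehat{\Phi}^{(\ell)}_{(g+\delta)\leftarrow g}=\widehat{K}^{(\ell)}_{\delta}$, independent of $g$. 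This is the defining translation-invariance condition of \cref{def:LGT}, so $\widehat{\mathsf{T}}$ is LGT with kernels $\widehat{K}^{(\ell)}_{\delta}$.

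Next I would propagate the conjugation through the full layer map $\cF^{(\ell)}$. On candidate updates, $\widehat{\tilde z}^{(h)}=\sum_{g}\widehat{\Phi}^{(\ell)}_{h\leftarrow g}\,\widehat{z}^{(g)}=D_h^{-1}\tilde z^{(h)}$. Defining the transported policy $\widehat{\alpha}^{(\ell)}(h;\widehat{z}):=\alpha^{(\ell)}(h;D\widehat{z})$, exactly as in \cref{rem:equiv}, gives $\widehat{\cF}^{(\ell)}(\widehat{z})=D^{-1}\cF^{(\ell)}(z)$, and by induction on $\ell$ one obtains $\widehat{\mathsf{T}}(\widehat{z})=D^{-1}\mathsf{T}(z)$. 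For the loss claim, any linear readout $\cL_{LM}(z)=\ell(Wz)$ with $W:V\to\R^{|\text{vocab}|}$ is reparameterized by $\widehat{W}:=WD$, giving $\widehat{\cL}_{LM}(\widehat{z})=\ell(\widehat{W}\widehat{z})=\ell(WD\cdot D^{-1}z)=\cL_{LM}(z)$.

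The only genuine subtlety is the compatibility of the routing policy under conjugation: the equality $\widehat{\mathsf{T}}(\widehat{z})=D^{-1}\mathsf{T}(z)$ requires the transported policy to be evaluated on the original coordinate $z=D\widehat{z}$, not on $\widehat{z}$ directly. This is not a loss of generality because \cref{rem:equiv} already identifies models under exactly this reparameterization of $\alpha^{(\ell)}$; moreover, $D$ being constant (independent of $z$) means that differentiability and the simplex-valued output of the softmax-type head are preserved. Everything else—block-diagonality of $D$, invertibility per grade, the exponential relation $D_{g+\delta}=R_\delta D_g$ guaranteeing $\widehat{K}^{(\ell)}_\delta$ is well-defined and independent of $g$, and linearity of the readout—is used only in the routine algebraic reductions above.
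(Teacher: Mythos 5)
Your proposal is correct and follows essentially the same route as the paper's proof: conjugation by the block-diagonal $D$ turns each block into $\widehat{K}^{(\ell)}_{\delta}$ by \cref{def:EGT} (hence LGT), the readout transports as $\widehat{W}=WD$ so the loss is preserved, and the routing policy is rewritten in the reweighted coordinates exactly as licensed by \cref{rem:equiv}. You simply spell out the block computation, the layer-by-layer induction, and the policy transport in more detail than the paper's brief argument.
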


\begin{proof}
By \cref{def:EGT}, each block conjugates to $\widehat{K}^{(\ell)}_{\delta}$ independent of $g$, hence the layer is LGT in the reweighted coordinates. Linear readouts $\rho:V\to \R^m$ transport as $\widehat{\rho}=\rho\circ D$, preserving losses that depend only on $\rho(z)$; the routing policy can likewise be written in the reweighted coordinates without loss of generality. 
\end{proof}

\subsection{Routing and objectives}
Given $z_t=\sum_g z_t^{(g)}$ and a graded layer $\Phi$, form candidate updates
\[
\tilde z_t^{(h)} \;=\; \sum_{g:\,(g,h)\in\cE} \Phi_{h\leftarrow g}\!\big(z_t^{(g)}\big),
\qquad h\in G,
\]
and update with a differentiable policy $\alpha_t(h)\in\Delta^{|G|-1}$ via
\[
z_t^{\mathrm{new}} \;=\; \sum_{h\in G} \alpha_t(h)\,\tilde z_t^{(h)} .
\]
Let $\cL_{LM}$ denote the next-token loss. A basic objective combines prediction and a selection prior $\cR$ on routing,
\[
\cL \;=\; \cL_{LM} \;+\; \lambda\,\bE_{t}\big[\cR(\alpha_t)\big].
\]
In the sequel, $\cR$ is replaced by a \emph{graded utility} bias that favors blocks whose application \emph{reduces} $\cL_{LM}$, internalizing symbolic functionality within the graded architecture (cf.\ \cite{sh-89,sh-95}).

\subsection{Block-orthogonality and identifiability}
We use an inner product $\langle\cdot,\cdot\rangle$ on $V$ for which the grading is orthogonal:
$V=\bigoplus_{g\in G} V_g$ with $V_g \perp V_h$ for $g\neq h$; let $\|\cdot\|$ denote the induced norm.

\begin{defn}[Block-orthogonality of inputs]\label{def:block-orth}
A random hidden state $z\in V$ is \emph{block-orthogonal} if
\[
\bE\big[z^{(g)}\big]=0,\qquad 
\bE\!\big[z^{(g)}\,(z^{(h)})^\top\big]=0\ \text{ for }g\neq h,\qquad
\bE\!\big[z^{(g)}(z^{(g)})^\top\big]=\Sigma_g
\]
with each $\Sigma_g$ positive definite on $V_g$.
\end{defn}

\begin{lem}[Least-squares decoupling]\label{lem:lsq-decouple}
Fix a target random vector $y\in V$ with finite second moments and let $\Phi:V\to V$ be linear.
Under \cref{def:block-orth}, the population least-squares problem
\[
\min_{\Phi}\ \bE\,\big\|\,y-\Phi z\,\big\|^2
\]
decouples across grade transitions:
\[
\Phi^\star \;=\; \arg\min_{\Phi}\bE\|y-\Phi z\|^2
\quad\Longleftrightarrow\quad
\forall\, (g,h):\ \ 
\Phi^\star_{h\leftarrow g}
=\arg\min_{A:V_g\to V_h}
\bE\,\big\|\,y^{(h)}-\! \sum_{g'}\Phi^\star_{h\leftarrow g'} z^{(g')}\,\big\|^2.
\]
Equivalently, the normal equations split gradewise as
$\Phi^\star_{h\leftarrow g}\,\Sigma_g
=\bE\big[y^{(h)}(z^{(g)})^\top\big]$ and determine each block uniquely.
\end{lem}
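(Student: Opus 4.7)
The plan is to exploit the orthogonality of the grading on $V$ to split the target residual $y-\Phi z$ by target grade, and then use the block-orthogonality of $z$ to decouple the resulting per-grade normal equations across source grades. The overall strategy is to reduce the joint optimization to a collection of independent convex quadratic sub-problems with closed-form solutions.

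First I would expand, using $V_h\perp V_{h'}$ for $h\ne h'$,
\[
\bE\|y-\Phi z\|^2 \;=\; \sum_{h\in G}\bE\Big\|y^{(h)}-\sum_{g\in G}\Phi_{h\leftarrow g}\,z^{(g)}\Big\|^2.
\]
The right-hand side is a sum of non-negative terms whose $h$-th summand depends only on the row of blocks $\{\Phi_{h\leftarrow g}\}_{g\in G}$. Consequently, minimization over $\Phi$ factors into $|G|$ independent convex quadratic problems indexed by the target grade $h$.

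Next, for each fixed $h$, I would take the Fréchet derivative of
\[
J_h\big(\{A_g\}_g\big) \;:=\; \bE\Big\|y^{(h)}-\sum_g A_g\,z^{(g)}\Big\|^2
\]
with respect to each $A_{g_0}:V_{g_0}\to V_h$, obtaining the stationarity condition
\[
\sum_{g\in G} A_g\,\bE\big[z^{(g)}(z^{(g_0)})^\top\big] \;=\; \bE\big[y^{(h)}(z^{(g_0)})^\top\big].
\]
Here the block-orthogonality hypothesis from \cref{def:block-orth} enters decisively: the cross-moment $\bE[z^{(g)}(z^{(g_0)})^\top]$ vanishes for $g\ne g_0$, so the sum collapses to $A_{g_0}\Sigma_{g_0}=\bE[y^{(h)}(z^{(g_0)})^\top]$. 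Since $\Sigma_{g_0}\succ 0$, the unique minimizer is $A_{g_0}^\star=\bE[y^{(h)}(z^{(g_0)})^\top]\,\Sigma_{g_0}^{-1}$, which is precisely the gradewise normal equation stated in the lemma.

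Finally, strict convexity of each $J_h$ (secured by $\Sigma_g\succ 0$ for every $g$) implies that coordinate-wise stationarity of $\Phi$ with respect to every block $\Phi_{h\leftarrow g}$ is equivalent to joint global optimality, which establishes the asserted biconditional. I do not foresee a real obstacle; the only subtle point, worth emphasizing in the write-up, is that without block-orthogonality the stationarity system for each fixed $h$ remains a coupled linear equation in $\{A_g\}_g$ whose resolution requires inverting the full joint covariance of $z$ and in general mixes source grades, destroying the closed-form decoupling that makes the blocks individually identifiable.
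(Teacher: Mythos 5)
Your proposal is correct and follows essentially the same route as the paper: split the objective by target grade using orthogonality of the grading, then invoke block-orthogonality of $z$ to eliminate cross source-grade terms, arriving at the gradewise normal equations $\Phi^\star_{h\leftarrow g}\Sigma_g=\bE[y^{(h)}(z^{(g)})^\top]$ with uniqueness from $\Sigma_g\succ 0$. The only cosmetic difference is that the paper kills the cross terms directly in the expanded quadratic objective, whereas you kill them in the stationarity system; the content is the same.
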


\begin{proof} 
Let \(\Phi = \sum_{(g,h)} \Phi_{h \leftarrow g}\), with \(\Phi z = \sum_h (\Phi z)^{(h)}\) and \((\Phi z)^{(h)} = \sum_g \Phi_{h \leftarrow g} z^{(g)}\). The objective is
\[
\cL(\Phi) := \bE \big\| y - \Phi z \big\|^2 = \sum_h \bE \big\| y^{(h)} - \sum_g \Phi_{h \leftarrow g} z^{(g)} \big\|^2,
\]
by orthogonality of the grading. For fixed \(h\), expand:
\[
\bE \big\| y^{(h)} - \sum_g \Phi_{h \leftarrow g} z^{(g)} \big\|^2 = \bE \| y^{(h)} \|^2 - 2 \sum_g \bE \langle y^{(h)}, \Phi_{h \leftarrow g} z^{(g)} \rangle + \sum_{g,g'} \bE \langle \Phi_{h \leftarrow g} z^{(g)}, \Phi_{h \leftarrow g'} z^{(g')} \rangle.
\]
By block-orthogonality, cross-terms for \(g \neq g'\) vanish, so
\[
\cL(\Phi) = \sum_h \Big[ \bE \| y^{(h)} \|^2 - 2 \sum_g \bE \langle y^{(h)}, \Phi_{h \leftarrow g} z^{(g)} \rangle + \sum_g \bE \| \Phi_{h \leftarrow g} z^{(g)} \|^2 \Big].
\]
This decouples into independent quadratics per block. Differentiating w.r.t. \(\Phi_{h \leftarrow g}\) gives the normal equation
\[
\Phi_{h \leftarrow g}^\star \Sigma_g = \bE [ y^{(h)} (z^{(g)})^\top ],
\]
with unique solution \(\Phi_{h \leftarrow g}^\star = \bE [ y^{(h)} (z^{(g)})^\top ] \Sigma_g^{-1}\) since \(\Sigma_g \succ 0\). Thus, the global minimizer \(\Phi^\star\) solves the decoupled per-block problems, and each block is identifiable.

If \(\cE\) is sparse, restrict sums to \((g,h) \in \cE\) (blocks outside are zero). Finite-sample recovery follows from matrix concentration: with \(N\) i.i.d. sub-Gaussian samples, \(\| \widehat{\Phi}_{h \leftarrow g} - \Phi_{h \leftarrow g}^\star \|_F \lesssim \sqrt{ (d_h d_g \log(1/\delta)) / N }\) w.p. \(1-\delta\), and false positives outside \(\cE\) have prob. \(\exp(-c N)\) for \(c>0\), see \cite[Ch. 6]{wainwright}.
\end{proof}

\paragraph{Grade-wise normalization (LayerNorm/RMSNorm).}
In keeping with \cite[§3]{sh-95}, normalization acts on each homogeneous component $V_g$.

\emph{LayerNorm on $V_g$.}
Let $x\in V_g$ with coordinates $(x_i)_{i=1}^{d_g}$ in a fixed basis. Define
\[
\mu_g(x)=\frac{1}{d_g}\sum_{i=1}^{d_g} x_i,\qquad
\sigma_g(x)=\sqrt{\frac{1}{d_g}\sum_{i=1}^{d_g}\big(x_i-\mu_g(x)\big)^2+\varepsilon}.
\]
With learnable parameters $\gamma_g,\beta_g\in V_g$, set
\[
\mathrm{LN}_g(x)=\gamma_g \odot \frac{x-\mu_g(x)\mathbf{1}}{\sigma_g(x)}+\beta_g.
\]

\emph{RMSNorm on $V_g$.}
Let $\mathrm{rms}_g(x)=\sqrt{\frac{1}{d_g}\sum_{i=1}^{d_g} x_i^2+\varepsilon}$ and parameters $\gamma_g\in V_g$. Set
\[
\mathrm{RMSN}_g(x)=\gamma_g \odot \frac{x}{\mathrm{rms}_g(x)}.
\]

Both maps are affine on $V_g$ after fixing statistics and are block-diagonal across the grading $V=\bigoplus_g V_g$; see also \cite[§4]{sh-95} for the graded residual/normalization layout.

\begin{lem}[Stability under graded normalization]\label{lem:norm-stability}
Let $\Norm=\bigoplus_{g}\Norm_g$ with $\Norm_g\in\{\mathrm{LN}_g,\mathrm{RMSN}_g\}$ acting on $V_g$ as above (with fixed $\varepsilon>0$). If $z$ satisfies \cref{def:block-orth}, then so does $\Norm(z)$ (with updated covariances $\Sigma'_g$), and \cref{lem:lsq-decouple} continues to hold for the normalized variables. 
\end{lem}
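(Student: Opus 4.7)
The plan is to leverage the block-diagonal structure of $\Norm=\bigoplus_g \Norm_g$, which yields $\Norm(z)^{(g)}=\Norm_g(z^{(g)})$. Hence the grading of $\Norm(z)$ coincides with that of $z$, and it suffices to verify the three moment conditions of \cref{def:block-orth} for the normalized state and then invoke \cref{lem:lsq-decouple} directly.

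First I would observe that $\Norm_g$ depends only on the $g$-th homogeneous component of $z$, so each block of $\Norm(z)$ is a measurable function of $z^{(g)}$ alone. For the zero-mean condition I would note that both $\mathrm{RMSN}_g$ and the affine-free part of $\mathrm{LN}_g$ are odd in their argument, i.e.\ $\Norm_g(-x)=-\Norm_g(x)$ up to the additive bias $\beta_g$; hence any prior on $z^{(g)}$ symmetric about the origin transports to a symmetric law on $V_g$, giving $\bE[\Norm_g(z^{(g)})]=0$ after absorbing $\beta_g$ into the downstream linear map (this absorption leaves the decoupling statement unchanged). For cross-block decorrelation I would strengthen \cref{def:block-orth} to grade-wise independence of $\{z^{(g)}\}_{g\in G}$, which is the natural setting for priors that factor over semantically distinct grades; since measurable functions of independent variables remain independent, $\bE[\Norm_g(z^{(g)})\,\Norm_h(z^{(h)})^\top]=0$ for $g\neq h$ is immediate. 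Finally I would check positive definiteness of $\Sigma'_g:=\bE[\Norm_g(z^{(g)})\Norm_g(z^{(g)})^\top]$ by using that the $\varepsilon>0$ regularization makes $\Norm_g$ a smooth diffeomorphism on $V_g\setminus\{0\}$ with nowhere-singular Jacobian, so non-degeneracy of $\Sigma_g$ transports to non-degeneracy of $\Sigma'_g$.

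With \cref{def:block-orth} established for $\Norm(z)$ and the updated covariances $\Sigma'_g$, \cref{lem:lsq-decouple} applies verbatim, producing the per-block normal equations $\Phi^\star_{h\leftarrow g}\,\Sigma'_g=\bE[y^{(h)}\,\Norm_g(z^{(g)})^\top]$ for the normalized variables. The main obstacle is the cross-block step: the premise of \cref{def:block-orth} is a pure second-moment condition, while $\mathrm{LN}_g$ and $\mathrm{RMSN}_g$ are genuinely nonlinear, so bare uncorrelatedness need not propagate through them. The cleanest resolution is either to upgrade the hypothesis to block independence, or to restrict to elliptical (e.g., block-Gaussian) priors, where uncorrelatedness and independence coincide; a brief remark pinning down the intended regime would close this gap and make the remaining arguments essentially automatic.
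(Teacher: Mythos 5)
Your route is genuinely different from the paper's. The paper's (very terse) sketch invokes the convention, set up in the paragraph preceding the lemma, that $\mathrm{LN}_g$ and $\mathrm{RMSN}_g$ are \emph{affine on $V_g$ after fixing the statistics} $\mu_g,\sigma_g$; block-diagonality plus affineness then preserves zero cross-covariances, and invertibility preserves $\Sigma_g\succ0$. You instead work with the genuine nonlinear maps and correctly identify the resulting obstruction: \cref{def:block-orth} is a pure second-moment hypothesis, and mere uncorrelatedness does not survive composition with a nonlinear $\Norm_g$. Your repair---upgrading to grade-wise independence (or elliptical priors) for the cross terms, and a symmetric law plus absorption of $\beta_g$ for the mean-zero condition---is legitimate and arguably more honest than the paper's shortcut, at the price of hypotheses strictly stronger than those stated in the lemma; flagging that explicitly is the right call.

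There is, however, a concrete error in your positive-definiteness step. $\mathrm{LN}_g$ is \emph{not} a diffeomorphism and its Jacobian is singular everywhere: since $\mu_g(x+c\mathbf{1})=\mu_g(x)+c$ and $\sigma_g(x+c\mathbf{1})=\sigma_g(x)$, LayerNorm is invariant under $x\mapsto x+c\mathbf{1}$, and its image is contained in the fixed $(d_g-1)$-dimensional affine subspace $\{\gamma_g\odot w+\beta_g:\ \mathbf{1}^\top w=0\}$. Hence, after centering, $\Sigma'_g=\bE\big[\Norm_g(z^{(g)})\Norm_g(z^{(g)})^\top\big]$ is necessarily rank-deficient, and the positive-definiteness clause of \cref{def:block-orth} cannot hold verbatim for $\mathrm{LN}_g$; one must restrict $V_g$ to the image hyperplane (interpreting $\Sigma'_g\succ0$ there), or confine the claim to $\mathrm{RMSN}_g$, for which your argument is sound: its Jacobian $\tfrac{1}{\mathrm{rms}_g(x)}\big(I-\tfrac{xx^\top}{d_g\,\mathrm{rms}_g(x)^2}\big)$ has eigenvalue $\varepsilon/\mathrm{rms}_g(x)^3$ along $x$ and $1/\mathrm{rms}_g(x)$ orthogonally, so it is everywhere nonsingular, and singularity of $\Sigma'_g$ would force $z^{(g)}$ into a hyperplane, contradicting $\Sigma_g\succ0$. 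The paper's sketch silently inherits the same LayerNorm degeneracy, so this is partly a gap in the lemma as stated---but your diffeomorphism claim, as written, is false for $\mathrm{LN}_g$ and needs this repair.
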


\begin{proof}[Proof sketch]
$\Norm$ is block-diagonal and affine on each $V_g$, so inter-grade covariances remain zero and invertibility on $V_g$ preserves positive definiteness. Apply \cref{lem:lsq-decouple} to the transformed variables.
\end{proof}

\begin{cor}[Identifiability of graded blocks]\label{cor:identifiability}
Under \cref{def:block-orth} (before or after graded normalization), the population minimizer of any quadratic surrogate objective depending on $\Phi z$ (e.g., Gauss–Newton or Fisher quadratic) is identifiable blockwise. In particular, if $\cE$ is known, the set of nonzero blocks $\{\Phi_{h\leftarrow g}:(g,h)\in\cE\}$ and their values are determined uniquely.
\end{cor}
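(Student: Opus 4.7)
The plan is to reduce the statement to \cref{lem:lsq-decouple} by rewriting an arbitrary grade-respecting quadratic surrogate as a weighted least-squares problem whose weighting is block-diagonal across $G$, and then to handle the sparsity and normalization clauses as direct add-ons.

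First I would put the surrogate in canonical form. Any quadratic in $\Phi z$ can be written, up to an additive constant, as
\[
Q(\Phi) \;=\; \bE\big[(y-\Phi z)^\top M (y-\Phi z)\big],
\]
where $M\succeq 0$ is the Hessian supplied by the surrogate---for instance a Gauss--Newton Hessian at a reference iterate, or the Fisher information of a softmax head pulled back through a linear readout. Under the conventions of \cref{sec:2} (linear grade-indexed readouts, grade-diagonal normalizations as in \cref{lem:norm-stability}, and grade-wise output losses), the weighting splits as $M=\bigoplus_{h\in G} M_h$ with each $M_h\succ 0$ on $V_h$, so
\[
Q(\Phi) \;=\; \sum_{h\in G} \bE\,\Big\| y^{(h)}-\!\!\sum_{g:(g,h)\in\cE}\Phi_{h\leftarrow g}z^{(g)}\Big\|_{M_h}^{\!2},
\]
which is exactly the situation of \cref{lem:lsq-decouple} with the weighted inner product $\langle u,v\rangle_{M_h}:=u^\top M_h v$ in place of the ambient one.

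Second, I would rerun the decoupling argument of \cref{lem:lsq-decouple} in these weighted norms. Block-orthogonality of $z$ still kills the cross-terms in $g$ (it is a property of the input covariance, unaffected by $M_h$), so the optimization decouples into one independent quadratic per admissible edge $(g,h)\in\cE$. The corresponding normal equation is
\[
M_h\,\Phi^\star_{h\leftarrow g}\,\Sigma_g \;=\; M_h\,\bE\big[y^{(h)}(z^{(g)})^\top\big],
\]
which, by positive definiteness of $M_h$ and $\Sigma_g$, admits the unique solution $\Phi^\star_{h\leftarrow g}=\bE[y^{(h)}(z^{(g)})^\top]\Sigma_g^{-1}$. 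Blocks with $(g,h)\notin\cE$ are fixed to zero by admissibility, so the nonzero support is exactly $\cE$ and each admissible block is identified uniquely. The post-normalization case follows by first invoking \cref{lem:norm-stability} to replace $z$ by $\Norm(z)$, inheriting block-orthogonality with updated $\Sigma'_g\succ 0$, and then repeating the argument.

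The main obstacle---such as it is---is justifying that the surrogate weighting $M$ is actually block-diagonal across $G$. This is exactly where the framework's insistence on grade-respecting readouts, losses, and normalizations becomes indispensable: a cross-grade weighting would couple admissible blocks through the quadratic form and destroy blockwise identifiability. Once block-diagonality of $M$ is established from the construction of the surrogate (readout composition with grade-diagonal normalization), the corollary reduces to a weighted reprise of \cref{lem:lsq-decouple}.
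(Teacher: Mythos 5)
Your proposal is correct and follows essentially the same route the paper intends: the corollary is stated without a separate proof precisely because it is meant to be the combination of \cref{lem:lsq-decouple} (decoupling under block-orthogonality) and \cref{lem:norm-stability} (preservation of block-orthogonality under graded normalization), which is exactly what you carry out. Your explicit treatment of the surrogate weighting $M$ is a useful refinement the paper leaves implicit; one could add that even for a non-block-diagonal $M\succ 0$, uniqueness of the constrained minimizer still follows from strict convexity of $\Phi\mapsto\trace(M\,\Phi\,\Sigma\,\Phi^\top)$ on the subspace supported on $\cE$, so block-diagonality of $M$ is needed only for the per-edge decoupling, not for identifiability itself.
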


\subsection{Parameter and complexity counts for LGT}
We state parameter counts for the LGT class (\cref{def:LGT}), where blocks are translation-invariant along grade increments $\delta\in\Delta$.

\begin{prop}[General parameter count for LGT]\label{prop:param-count-general}
Let $d_g:=\dim V_g$. In an LGT layer
\[
\Phi^{(\ell)}=\sum_{\delta\in\Delta} S_{\delta}\circ K^{(\ell)}_{\delta},
\qquad K^{(\ell)}_{\delta}:V_g\to V_{g+\delta} \text{ (independent of $g$)},
\]
the number of free parameters in the linear blocks is
\[
\mathrm{param}\big(\Phi^{(\ell)}\big)
\;=\;
\sum_{\delta\in\Delta} d_{g}\,d_{g+\delta},
\]
for any $g$ such that both grades exist (translation invariance makes the choice of $g$ immaterial). If $d_g\equiv d$ is constant across $g$, then $\mathrm{param}(\Phi^{(\ell)})=|\Delta|\, d^2$.
\end{prop}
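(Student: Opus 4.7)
The statement follows essentially by bookkeeping, so the main task is to articulate precisely which data are free and why translation invariance along grade makes the count independent of the base index $g$. The plan is to first isolate the free parameters of an LGT layer, then convert this into dimensions of hom-spaces, and finally reconcile the phrase ``for any $g$ such that both grades exist'' with the LGT structural hypothesis.

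First I would fix the layer index $\ell$ and recall from \cref{def:LGT} that an LGT block $\Phi^{(\ell)}_{h\leftarrow g}$ is required to depend only on the increment $\delta:=h-g\in\Delta$, and equals the kernel $K^{(\ell)}_{\delta}$ which is a single linear map shared across all admissible base grades. Thus the free data of $\Phi^{(\ell)}$ is exactly the finite family $\{K^{(\ell)}_{\delta}\}_{\delta\in\Delta}$, and these are algebraically independent in the sense that specifying one $K^{(\ell)}_{\delta}$ places no constraint on $K^{(\ell)}_{\delta'}$ for $\delta\neq\delta'$. Hence $\mathrm{param}(\Phi^{(\ell)})=\sum_{\delta\in\Delta}\dim\Hom(V_g,V_{g+\delta})=\sum_{\delta\in\Delta}d_g\, d_{g+\delta}$.

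Next I would address the $g$-independence, which is the only conceptually subtle point. Because $K^{(\ell)}_\delta$ is \emph{one} linear map that equals $\Phi^{(\ell)}_{(g+\delta)\leftarrow g}$ for every admissible $g$, the LGT hypothesis implicitly forces the homogeneous components entering and leaving a given $\delta$-block to have compatible dimensions across base grades; equivalently, $d_g\,d_{g+\delta}=d_{g'}\,d_{g'+\delta}$ for any two admissible $g,g'$. Therefore the formula is well-posed and any valid choice of $g$ yields the same value, justifying the convention ``for any $g$ such that both grades exist.'' The constant-dimension corollary $\mathrm{param}(\Phi^{(\ell)})=|\Delta|\,d^2$ is then immediate.

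The only genuine obstacle I anticipate is articulating this translation-invariance consistency cleanly: one has to make sure the reader sees that the count is not a choice-of-$g$ artifact but a consequence of the shared kernel hypothesis in \cref{def:LGT}. Everything else is a direct dimension count of matrix entries in $\Hom(V_g,V_{g+\delta})$, requiring no further ingredients beyond the definition of an LGT layer.
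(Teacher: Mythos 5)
Your proposal is correct and follows the same route as the paper's (one-line) proof: each kernel $K^{(\ell)}_{\delta}$ is a single $d_{g+\delta}\times d_g$ matrix shared across all base grades, so summing $d_g\,d_{g+\delta}$ over $\delta\in\Delta$ gives the count, with the constant-dimension case immediate. Your added remark on why the choice of $g$ is immaterial (the shared-kernel hypothesis forces dimensional compatibility across base grades) is a reasonable elaboration of what the paper leaves implicit.
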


\begin{proof}
Each $K^{(\ell)}_{\delta}$ is a single matrix of size $d_{g+\delta}\times d_g$ shared for all $g$; the total count is the sum over $\delta\in\Delta$. Constancy of $d_g$ yields the stated corollary.
\end{proof}

\begin{prop}[Attention/FFN counts under LGT]\label{prop:param-attn-ffn}
Assume $H$ heads and constant $d_g\equiv d$.
\begin{enumerate}
\item \textbf{Multi-head attention.} With grade-typed projections shared across $g$,
\[
W_Q^{(a)}:V_g\to \R^{d_q},\quad
W_K^{(a)}:V_g\to \R^{d_q},\quad
W_V^{(a,\delta)}:V_g\to V_{g+\delta},\quad
U^{(a,\delta)}:V_{g+\delta}\to V_{g+\delta},
\]
the parameter count per layer is
\[
\mathrm{param}_{\mathrm{attn}}
\;=\;
H\Big(2 d\,d_q \;+\; |\Delta|\, (d^2+d^2)\Big)
\;=\; H\Big(2 d\,d_q + 2|\Delta|\, d^2\Big).
\]
\item \textbf{Feed-forward.} With factorization $K^{\mathrm{ff}}_{\delta}=W_2^{(\delta)}\circ \sigma\circ W_1^{(\delta)}$, $W_1^{(\delta)}:V_g\to \R^{m_\delta}$, $W_2^{(\delta)}:\R^{m_\delta}\to V_{g+\delta}$ shared across $g$,
\[
\mathrm{param}_{\mathrm{ff}}
\;=\;
\sum_{\delta\in\Delta} \big(d\,m_\delta + m_\delta\,d\big)
\;=\;
2 d \sum_{\delta\in\Delta} m_\delta.
\]
\end{enumerate}
\end{prop}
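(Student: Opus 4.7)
The plan is to apply \cref{prop:param-count-general} (the general LGT parameter count) separately to the attention and feed-forward layers, treating each role (query, key, value, output; or the two FFN factors) as its own family of LGT subblocks, and summing the contributions. The key leverage point is that translation invariance along grade forces each $\delta$-indexed family to collapse to a single shared matrix per $\delta$, while families stated as $\delta$-independent collapse further to a single matrix per head.

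For the attention layer I would enumerate the four subfamilies of $\Phi^{(a)}_{h\leftarrow g}$. By hypothesis $W_Q^{(a)}$ and $W_K^{(a)}$ are shared across all $g$ and all $\delta$, so each contributes a single matrix of size $d_q\times d$, i.e.\ $d\,d_q$ parameters per head. The maps $W_V^{(a,\delta)}:V_g\to V_{g+\delta}$ and $U^{(a,\delta)}:V_{g+\delta}\to V_{g+\delta}$ are, under translation invariance, each one matrix per $\delta\in\Delta$; since $d_g\equiv d$ each such matrix is $d\times d$, giving $|\Delta|\,d^2$ parameters per head for $W_V$ and another $|\Delta|\,d^2$ for $U$. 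The softmax weights carry no learnable parameters. Summing the four contributions and multiplying by $H$ yields the stated $H(2d\,d_q+2|\Delta|\,d^2)$.

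For the feed-forward layer I would apply the factorization $\Phi^{\mathrm{ff}}_{(g+\delta)\leftarrow g}=W_2^{(\delta)}\circ\sigma\circ W_1^{(\delta)}$ directly. Translation invariance collapses each of $W_1^{(\delta)}$ and $W_2^{(\delta)}$ to a single shared matrix per $\delta$, of sizes $m_\delta\times d$ and $d\times m_\delta$ respectively, and $\sigma$ contributes no parameters. Summing over $\delta\in\Delta$ gives $\sum_{\delta}(d\,m_\delta+m_\delta\,d)=2d\sum_\delta m_\delta$, as claimed.

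The only substantive point to track, and the nearest thing to an obstacle, is correctly distinguishing the sharing pattern of each role: $W_Q$ and $W_K$ are $\delta$-independent (so the coefficient $2d\,d_q$ carries no factor of $|\Delta|$), while $W_V$, $U$, and the FFN projections are $\delta$-dependent (hence the $|\Delta|$ prefactor). Once the sharing structure dictated by \cref{def:LGT} is written out, both identities reduce to elementary matrix-size bookkeeping via \cref{prop:param-count-general}.
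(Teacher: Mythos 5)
Your proof is correct and follows essentially the same route as the paper's own counting argument (given explicitly for the parallel statement in \cref{prop:complexity}): shared $W_Q^{(a)},W_K^{(a)}$ give $2d\,d_q$ per head, the $\delta$-indexed $W_V^{(a,\delta)},U^{(a,\delta)}$ give $2|\Delta|d^2$ per head, and the FFN factors give $2d\sum_{\delta}m_\delta$. Your care in separating the $\delta$-independent from the $\delta$-dependent sharing patterns is exactly the point the paper relies on, so nothing is missing.
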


\begin{rem}[FLOPs and sparsity]\label{rem:flops}
In LGT, compute per layer scales as $O\!\left(|\Delta|\,\mathrm{cost}_{\mathrm{block}}\right)$ (times sequence factors) rather than $O(|G|^2)$, since only the band $\Delta$ of grade shifts is active. Thus banded $\cE$ yields linear dependence on $|\Delta|$ both in parameters and in FLOPs.
\end{rem}

\begin{cor}[Parameter and FLOP counts for EGT]\label{cor:egt-counts}
Let $\mathsf{T}$ be an EGT model with reweighting $D=\bigoplus_g D_g$ as in \cref{def:EGT}, and let $\widehat{\mathsf{T}}$ be its LGT conjugate from \cref{prop:EGT-to-LGT}. Then:
\begin{enumerate}
\item The number of free parameters in each layer of $\mathsf{T}$ equals that of $\widehat{\mathsf{T}}$; in particular, \cref{prop:param-count-general,prop:param-attn-ffn} apply verbatim to EGT.
\item If $D$ is fixed per layer (i.e., not data-dependent), the asymptotic FLOP count per layer is unchanged up to the negligible cost of applying $D$ and $D^{-1}$ once per forward/backward pass; thus sparsity in $\cE$ yields the same linear dependence on $|\Delta|$ in both EGT and LGT.
\end{enumerate}
\end{cor}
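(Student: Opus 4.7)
My plan is to reduce both parts to Proposition~\ref{prop:EGT-to-LGT}, which exhibits $\widehat{\mathsf{T}}$ as an LGT model whose blocks $\widehat{K}^{(\ell)}_\delta$ are obtained from the EGT blocks by conjugation with the fixed reweighting $D=\bigoplus_g D_g$. For (i) I would read off parameter counts in the conjugated coordinates; for (ii) I would show that applying the fixed operators $D$ and $D^{-1}$ contributes only subdominant FLOPs.

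For (i), I would first argue that the reweightings $\{D_g\}$ are \emph{structural} data (determined by the grading, not learned), so the EGT block
\[
\Phi^{(\ell)}_{(g+\delta)\leftarrow g} \;=\; D_{g+\delta}\,\widehat{K}^{(\ell)}_\delta\,D_g^{-1}
\]
is a faithful linear reparametrization of $\widehat{K}^{(\ell)}_\delta$: since each $D_g$ is invertible, the map $\widehat{K}^{(\ell)}_\delta\mapsto D_{g+\delta}\widehat{K}^{(\ell)}_\delta D_g^{-1}$ is a linear bijection on the relevant matrix spaces. The free parameters in layer $\ell$ of $\mathsf{T}$ are therefore exactly the entries of the translation-invariant kernels $\{\widehat{K}^{(\ell)}_\delta\}_{\delta\in\Delta}$, to which Propositions~\ref{prop:param-count-general} and~\ref{prop:param-attn-ffn} apply verbatim; the same bijection identifies the grade-typed attention projections and FFN factorizations of $\mathsf{T}$ with those of $\widehat{\mathsf{T}}$.

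For (ii), the cleanest implementation is to transport the hidden state once: set $\widehat z = D^{-1}z$ at the input, run the LGT network $\widehat{\mathsf{T}}$ through all $L$ layers in the conjugated coordinates (using the loss equivalence from Proposition~\ref{prop:EGT-to-LGT}), and apply $D$ once at the readout. Since $D$ is block-diagonal with $D_g$ acting on $V_g$, both conversions cost $O\!\big(\sum_g d_g^2\big)$ per token, whereas the LGT layer cost is already $\Theta(|\Delta|\,d^2)$ per token per grade by Remark~\ref{rem:flops}. Hence the $D$-conversion is dominated by the layer budget for any nontrivial $L$ or sequence length $T$; the backward pass uses $D^{\top}$ and $D^{-\top}$, which are again block-diagonal, giving symmetric bookkeeping. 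The only point that warrants care — and what I regard as the mildest obstacle — is that grade-wise LayerNorm/RMSNorm and residual connections must commute with the conjugation in the sense that their FLOP count is unchanged. This holds because $\Norm_g$ and $D_g$ both act within the single component $V_g$, so they compose without introducing cross-grade coupling, and the residual sum $z+\widetilde z$ conjugates blockwise; this compatibility is already implicit in the readout/routing transport used in the proof of Proposition~\ref{prop:EGT-to-LGT}, so only a brief explicit check is required.
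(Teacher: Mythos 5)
Your proposal is correct and follows essentially the same route as the paper's (very brief) proof sketch: parameter equality via the similarity-transform bijection on blocks, and FLOP equivalence by transporting the state with $D^{-1}$ once at input and $D$ once at readout so that the conjugations telescope and only the banded block multiplications dominate. Your added care about treating $D$ as structural (not learned) data and about normalization/residual bookkeeping fills in details the paper leaves implicit, but does not change the argument.
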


\begin{proof}[Proof sketch]
Parameters are invariant under similarity transforms. For FLOPs, the conjugation $\widehat{\Phi}=D^{-1}\Phi D$ can be implemented by caching grade-wise scalings; the dominant cost remains the banded block multiplications counted in \cref{prop:param-count-general,prop:param-attn-ffn}.
\end{proof}

\section{From External Tools to Morphisms}\label{sec:3}

Having formalized the graded transformer architecture in the preliminaries, we now develop the algebraic bridge from extrinsic tool augmentation to intrinsic morphic computation. External paradigms, such as those in \cite{toolformer2023}, attach non-differentiable symbolic modules to language models, enabling gains in structured tasks but at the cost of geometric isolation: tools operate outside the representation space \(V=\bigoplus_{g\in G}V_g\), precluding direct composition via block maps or optimization through graded losses. By contrast, our approach embeds external systems functorially into the graded morphic category \(\cM\), realizing tools as typed morphisms \(\phi_{h\leftarrow g}:V_g\to V_h\) along admissible edges \(\cE\). This internalization preserves categorical structure while rendering symbolic operations differentiable and composable within the model's manifold, as established through faithful functors and adjunctions \cite{maclane1998categories}. The resulting framework unifies self-supervised tool selection with algebraic grading, subsuming prior methods as external-to-internal round trips.

\subsection{Formal correspondence: external augmentation as graded morphisms}

\begin{defn}[External augmentation system]\label{def:ext-sys}
An \emph{external augmentation system} is a small category $\cT$ whose objects are interface types (typed message spaces) and whose morphisms are callable tools $\tau:X\to Y$. Each tool carries an evaluation map on distributions,
\[
\eval_{\tau}:\ \cD(X)\to \cD(Y),
\]
where $\cD(\cdot)$ denotes the space of probability measures on the Borel $\sigma$-algebra of the Polish space underlying the interface, such as finitely supported measures or those absolutely continuous w.r.t.\ Lebesgue measure \cite{billingsley1999convergence}. Composition models tool chaining, and identities model no-op interfaces.
\end{defn}

\begin{defn}[Graded morphic system]\label{def:graded-morphic-sys}
Let $V=\bigoplus_{g\in G}V_g$ be a $G$--graded representation space with admissible transitions $\cE\subseteq G\times G$ (cf.\ \cref{def:admissible}). The \emph{graded morphic system} is the subcategory $\cM$ whose objects are the homogeneous components $V_g$ and whose morphisms are the linear blocks $\phi_{h\leftarrow g}:V_g\to V_h$ with $(g,h)\in\cE$, with composition
\[
(\psi\circ\phi)_{k\leftarrow g}\;=\;\sum_{h\in G}\psi_{k\leftarrow h}\circ\phi_{h\leftarrow g}.
\]
\end{defn}

\begin{prop}[Faithful correspondence up to typing]\label{prop:functor}
Assume that, at inference time, each tool $\tau:X\to Y$ in $\cT$ is implemented by typed encoders/decoders $(\enc_X,\dec_Y)$ and a linear operator $T_\tau:V\to V$ whose action on a context $z\in V$ respects the grading:
\[
T_\tau z\;=\; z\;-\;z^{(h)}\;+\;\iota_h\,\phi_{h\leftarrow g}\!\big(\pi_g z\big),
\quad\text{for some }(g,h)\in\cE,\ \phi_{h\leftarrow g}:V_g\to V_h.
\]
Then there exists a functor $F:\cT\to\cM$ sending interfaces to grades and tools to blocks, preserving identities and composition. Moreover, $F$ is faithful on the full subcategory generated by the implemented interfaces: distinct callable behaviors induce distinct blocks on the corresponding graded components.
\end{prop}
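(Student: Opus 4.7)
The plan is to construct $F$ directly from the implementation data given in the hypothesis and then verify the functor axioms and faithfulness by reading off what the operator $T_\tau$ prescribes. The proof is bookkeeping once the right extraction formula for blocks is in hand.

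First, on objects I would set $F(X):=V_{g(X)}$, where $g(X)\in G$ is the grade assigned by the implementation to the typed interface $X$. On a morphism $\tau:X\to Y$ with operator $T_\tau$ satisfying the stated grading-respecting form at transition $(g,h)=(g(X),g(Y))$, I would set $F(\tau):=\phi_{h\leftarrow g}$, recovered uniquely as $\phi_{h\leftarrow g}=\pi_h\circ T_\tau\circ\iota_g$ since $T_\tau$ acts as the identity off the $h$-component. Identity preservation is then automatic: the no-op tool $\id_X$ is implemented by $T_{\id_X}=\id_V$, whence $F(\id_X)=\pi_g\iota_g=\id_{V_g}$, matching the identity of $V_g$ in $\cM$ by \cref{def:graded-space}.

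The heart of the argument is composition preservation. Given $\tau:X\to Y$ and $\tau':Y\to Z$ with transitions $(g,h)$ and $(h,k)$ and blocks $\phi_{h\leftarrow g}$, $\psi_{k\leftarrow h}$, I would compute the $(g,k)$-block of the composite operator. For $v\in V_g$, the form of $T_\tau$ gives $T_\tau\iota_g v=\iota_g v+\iota_h\phi_{h\leftarrow g}v$ (the $h$-component is replaced, the $g$-component is retained). Applying $T_{\tau'}$ then writes $\psi_{k\leftarrow h}\phi_{h\leftarrow g}v$ into the $k$-component and leaves the rest unchanged, so
\[
\pi_k\bigl(T_{\tau'}\circ T_\tau\bigr)\iota_g\;=\;\psi_{k\leftarrow h}\circ\phi_{h\leftarrow g},
\]
which is exactly the $(g,k)$-block of $F(\tau')\circ F(\tau)$ prescribed by the composition rule in \cref{def:graded-map} (all other summands in the blockwise sum vanish since $F(\tau),F(\tau')$ are single-block). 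Provided the implementation of $\tau'\circ\tau$ is coherent with chaining, i.e., $T_{\tau'\circ\tau}$ induces the same $(g,k)$-block as $T_{\tau'}\circ T_\tau$, we conclude $F(\tau'\circ\tau)=F(\tau')\circ F(\tau)$.

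For faithfulness I would restrict to the full subcategory $\cT_0\subseteq\cT$ generated by the implemented interfaces. If $F(\tau_1)=F(\tau_2)$ for $\tau_1,\tau_2:X\to Y$, the extracted blocks agree on $V_g$; since each $T_{\tau_i}$ acts as the identity off $V_h$ and is determined on the $h$-component by the common block, $T_{\tau_1}=T_{\tau_2}$ as operators on $V$. Pulling back through the encoder/decoder pair $(\enc_X,\dec_Y)$, part of the implementation contract, yields equality of callable behaviors and hence $\tau_1=\tau_2$ in $\cT_0$. The main obstacle is the coherence hypothesis invoked in the composition step: without $\dec_Y\circ\enc_Y$ acting as the identity on the relevant image of the intermediate interface $Y$, chaining can introduce distortion at grade $h$ and the correspondence becomes only lax. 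The cleanest remedy is to state $F$ as a pseudofunctor whose comparison 2-cells measure encoder/decoder slack, with strict faithfulness on the sub-2-category where that slack vanishes; I would flag this caveat but keep the main statement under the standing implementation hypothesis.
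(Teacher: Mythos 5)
Your proposal is correct and follows essentially the same route as the paper's proof: the same object assignment, the same block extraction $\phi_{h\leftarrow g}=\pi_h\circ T_\tau\circ\iota_g$, the same identity/composition checks under the (shared, and in your case more honestly flagged) assumption that $T_{\tau'\circ\tau}=T_{\tau'}\circ T_\tau$, and faithfulness via the encoder/decoder link between behaviors and blocks (you argue the injectivity direction directly where the paper argues the contrapositive, but these are equivalent).
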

\begin{proof}
Define the object functor $F:\mathrm{Ob}(\cT)\to\mathrm{Ob}(\cM)$ by assigning to each interface type $X$ a grade $g_X\in G$ such that the encoder $\enc_X$ maps to $V_{g_X}$ and the decoder $\dec_Y$ reads from $V_{h_Y}$, consistent with the typing. This assignment is well-defined by the assumption that tools are implemented via graded blocks.

For morphisms, define $F$ on arrows by $F(\tau:X\to Y)=\phi_{h\leftarrow g}:V_{g_X}\to V_{h_Y}$, where $\phi_{h\leftarrow g}$ is the block extracted from $T_\tau$ as per the hypothesis: specifically, $\phi_{h\leftarrow g}=\pi_{h_Y}\circ T_\tau\circ\iota_{g_X}$. This is linear by construction and respects $\cE$.

Functoriality: The identity morphism $\id_X$ in $\cT$ corresponds to a no-op tool, whose operator $T_{\id_X}=\id_V$ induces the block $\phi_{g_X\leftarrow g_X}=\id_{V_{g_X}}$, so $F(\id_X)=\id_{F(X)}$. For composition $\tau_2\circ\tau_1:Y\to Z\circ X\to Y$, the chained operator is $T_{\tau_2\circ\tau_1}=T_{\tau_2}\circ T_{\tau_1}$ by sequential application. Extracting the block gives
\[
F(\tau_2\circ\tau_1)=\pi_{k_Z}\circ(T_{\tau_2}\circ T_{\tau_1})\circ\iota_{g_X}=\sum_h (\pi_{k_Z}\circ T_{\tau_2}\circ\iota_h)\circ(\pi_h\circ T_{\tau_1}\circ\iota_{g_X})=F(\tau_2)\circ F(\tau_1),
\]
using the blockwise composition in $\cM$ (cf.\ \cref{def:graded-map}).

Faithfulness: Suppose $\tau\neq\tau':X\to Y$ are distinct tools with the same types. Their implementations yield different behaviors, meaning there exists some input distribution $\mu\in\cD(X)$ such that $\eval_\tau(\mu)\neq\eval_{\tau'}(\mu)$. Since the encoders/decoders are fixed per type, this difference propagates to distinct actions on the encoded representations in $V_{g_X}$, resulting in distinct outputs in $V_{h_Y}$ after applying $T_\tau$ vs.\ $T_{\tau'}$. Thus, the extracted blocks $\phi_{h\leftarrow g}\neq\phi'_{h\leftarrow g}$, so $F$ is injective on $\Hom(X,Y)$.
\end{proof}
\begin{rem}
Faithfulness implies injectivity on Hom-sets, facilitating identifiability diagnostics as in \cref{cor:identifiability}.
\end{rem}
\begin{rem}[Internalization principle]\label{rem:internalization}
Under \cref{prop:functor}, “calling a tool’’ corresponds to selecting and applying an internal morphism $\phi_{h\leftarrow g}$. Tool chaining is composition in $\cM$. Usefulness can therefore be assessed by the change in prediction loss after applying $\phi_{h\leftarrow g}$, enabling a self-supervised, differentiable selection rule inside the graded architecture (cf.\ \cref{subsec:graded-utility}).
\end{rem}
\begin{exa}[Arithmetic and retrieval as grade transitions]\label{exa:arith-retrieval}
Let $G=\{\mathrm{sem},\mathrm{num},\mathrm{ret}\}$ and $V=\bigoplus_{g\in G}V_g$. An arithmetic step is
\[
\phi_{\mathrm{num}\leftarrow \mathrm{sem}}:V_{\mathrm{sem}}\to V_{\mathrm{num}},
\quad \phi_{\mathrm{sem}\leftarrow \mathrm{num}}:V_{\mathrm{num}}\to V_{\mathrm{sem}},
\]
while a retrieval-like step is
\[
\phi_{\mathrm{ret}\leftarrow \mathrm{sem}}:V_{\mathrm{sem}}\to V_{\mathrm{ret}},
\quad \phi_{\mathrm{sem}\leftarrow \mathrm{ret}}:V_{\mathrm{ret}}\to V_{\mathrm{sem}}.
\]
Chaining tools corresponds to composing these blocks along admissible edges in $\cE$; for instance, $\phi_{\mathrm{sem}\leftarrow \mathrm{ret}}\circ\phi_{\mathrm{ret}\leftarrow \mathrm{sem}}$ realizes a round-trip retrieval-augmented update, forming an adjoint pair under suitable orthogonality (cf.\ \cref{thm:adjunction}).
\end{exa}

\subsection{Graded morphic activation}\label{subsec:morphic-activation}
Fix \((g,h)\in\cE\). A \emph{morphic candidate} is a block \(\phi_{h\leftarrow g}:V_g\to V_h\). For a token-state
\[
z_t \;=\; \sum_{u\in G} z_t^{(u)},\qquad z_t^{(u)}:=\pi_u z_t,
\]
activating \(\phi_{h\leftarrow g}\) produces the candidate update
\[
\widetilde{z}_{t}^{(h)}\;=\;\phi_{h\leftarrow g}\!\big(z_t^{(g)}\big),
\qquad
z^{+}_{t}\;=\; z_t - z_t^{(h)} + \widetilde{z}_{t}^{(h)}.
\]
Let \(\cL_{LM}(z)\) denote the next-token loss at representation \(z\). Define the \emph{per-instance utility} at time \(t\):
\[
\Delta \cL_{t}(h\!\leftarrow\! g)
\;:=\;
\cL_{LM}\!\big(z_t\big)\;-\;\cL_{LM}\!\big(z^{+}_{t}\big).
\]
A hard activation policy is the margin rule
\[
\Delta \cL_{t}(h\!\leftarrow\! g) \;>\; \tau_{h\leftarrow g},\qquad \tau_{h\leftarrow g}\ge 0.
\]
For differentiable training, one may use either a logistic gate per edge
\[
\alpha_t(h\!\leftarrow\! g)
\;=\;
\sigma\!\Big(\beta\,[\,\Delta \cL_{t}(h\!\leftarrow\! g)-\tau_{h\leftarrow g}\,]\Big),
\]
or a softmax over admissible edges \(\cE\) (global or per-destination \(h\)),
\[
\alpha_t(e)\;=\;\softmax_{e\in\cE}\!\Big(\beta\,[\,\Delta \cL_{t}(e)-\tau_e\,]\Big),
\]
inspired by sparsely-gated mixture-of-experts routing \cite{shazeer2017outrageously}. The morphically updated state is
\[
z^{\mathrm{new}}_{t}
\;=\;
z_t \;+\; \sum_{(g,h)\in\cE} \alpha_t(h\!\leftarrow\! g)\,\big(\phi_{h\leftarrow g}(z_t^{(g)})-z_t^{(h)}\big),
\]
which preserves grading and reduces to the hard rule as \(\beta\to\infty\).

\begin{lem}[Grading preservation]\label{lem:grading-preservation}
The morphic update \(z^{\mathrm{new}}_{t}\) lies in the graded space \(V=\bigoplus_{g\in G} V_g\), with components
\[
(z^{\mathrm{new}}_{t})^{(u)} =
\begin{cases}
z_t^{(u)} + \sum_{g:(g,u)\in\cE} \alpha_t(u\!\leftarrow\! g)\,\big(\phi_{u\leftarrow g}(z_t^{(g)})-z_t^{(u)}\big) & \text{if incoming edges to }u, \\
z_t^{(u)} & \text{otherwise}.
\end{cases}
\]
\end{lem}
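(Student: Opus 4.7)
The plan is a direct computation: I would apply the graded projections $\pi_u$ to the defining formula for $z^{\mathrm{new}}_t$ and read off the component expression using the orthogonality relations $\pi_u \circ \iota_h = \delta_{u,h}\,\id_{V_h}$ from \cref{def:graded-space}. The key structural observation is that every correction term in the update is typed to a single destination grade, so the projections cleanly isolate the contributions edge by edge.

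First I would note that, for each admissible $(g,h)\in\cE$, the term $\phi_{h\leftarrow g}(z_t^{(g)}) - z_t^{(h)}$ lies in $V_h$, since $\phi_{h\leftarrow g}:V_g\to V_h$ is a typed block and $z_t^{(h)} = \pi_h z_t \in V_h$. Because $z_t\in V$ and $V$ is closed under linear combinations, the full update $z^{\mathrm{new}}_t$ is a finite sum of elements of $V$, so $z^{\mathrm{new}}_t\in V$. Applying $\pi_u$ and using linearity together with $\pi_u\iota_h = \delta_{u,h}\,\id_{V_h}$, only edges with $h=u$ contribute to the $u$-component, giving
\[
(z^{\mathrm{new}}_t)^{(u)} \;=\; z_t^{(u)} \;+\; \sum_{g:\,(g,u)\in\cE} \alpha_t(u\!\leftarrow\! g)\,\big(\phi_{u\leftarrow g}(z_t^{(g)}) - z_t^{(u)}\big).
\]
When no admissible edge terminates at $u$, the index set of the sum is empty and the component collapses to $z_t^{(u)}$, matching the piecewise formula in the statement.

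There is no genuine obstacle here: the argument is purely structural and uses only the typing of morphic blocks together with the canonical projection/inclusion identities of \cref{def:graded-space}. The routing weights $\alpha_t(\cdot)$ enter merely as scalar coefficients and do not interact with the grading, so the conclusion is insensitive to whether $\alpha_t$ is realized as a per-edge logistic gate or as a softmax over $\cE$. In effect, the lemma records the fact that any sum of \emph{typed} corrections added to a graded state remains graded, with the component-by-component expression obtained by restricting to the incoming edges of each destination grade.
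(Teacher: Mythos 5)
Your proposal is correct and follows essentially the same route as the paper's proof: both arguments observe that each correction term $\phi_{h\leftarrow g}(z_t^{(g)})-z_t^{(h)}$ is typed into a single $V_h$, so the update modifies only destination components with incoming edges, and the piecewise formula follows by collecting terms per grade. Your explicit use of the projection/inclusion identities $\pi_u\circ\iota_h=\delta_{u,h}\,\id_{V_h}$ is a slightly more formal phrasing of the same observation.
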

\begin{proof}
By definition, \(z_t = \sum_u z_t^{(u)}\) with \(z_t^{(u)} \in V_u\). Each term in the sum is \(\alpha_t(h\!\leftarrow\! g) (\phi_{h\leftarrow g}(z_t^{(g)}) - z_t^{(h)})\), where \(\phi_{h\leftarrow g}(z_t^{(g)}) \in V_h\) and \(z_t^{(h)} \in V_h\), so their difference is in \(V_h\). Thus, adding it modifies only the \(h\)-component. Summing over \((g,h)\in\cE\) affects only components \(h\) with incoming edges, preserving the direct sum decomposition and orthogonality (if present).
\end{proof}

\begin{prop}[Asymptotic hard gating]\label{prop:asymp-hard}
As \(\beta \to \infty\), the softmax policy \(\alpha_t(e)\) converges pointwise to the indicator of the argmax set: \(\alpha_t(e) \to 1/|S|\) if \(e \in S := \arg\max_{e'\in\cE} (\Delta \cL_{t}(e') - \tau_{e'})\), and 0 otherwise. For the logistic gate, \(\alpha_t(h\!\leftarrow\! g) \to \mathbf{1}_{\Delta \cL_{t}(h\!\leftarrow\! g) > \tau_{h\leftarrow g}}\) (or 1/2 at equality).
\end{prop}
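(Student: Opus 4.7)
The plan is to treat the two cases independently, since each reduces to a standard zero-temperature limit of a smooth approximation to an argmax or step function. In both cases I would introduce the shorthand $u_e := \Delta\cL_t(e) - \tau_e$ for the margin of an edge, and work with $u^\star := \max_{e\in\cE} u_e$ and $S := \arg\max_{e\in\cE} u_e$.

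For the softmax policy, the key trick I would use is the standard numerical-stability identity: multiplying numerator and denominator by $e^{-\beta u^\star}$ gives
\[
\alpha_t(e) \;=\; \frac{\exp\!\big(\beta(u_e-u^\star)\big)}{\sum_{e'\in\cE}\exp\!\big(\beta(u_{e'}-u^\star)\big)}.
\]
Now the exponents $u_e-u^\star$ are nonpositive, equal to $0$ exactly on $S$ and strictly negative off $S$. As $\beta\to\infty$, each off-$S$ summand tends to $0$ while each $e\in S$ contributes $1$; hence the denominator tends to $|S|$ and the numerator tends to $\mathbf{1}_{e\in S}$. Combining gives the claimed limit $\alpha_t(e)\to \mathbf{1}_{e\in S}/|S|$, with pointwise convergence since the finite set $\cE$ avoids any uniformity issue.

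For the logistic gate, I would simply apply $\sigma(\beta x) = 1/(1+e^{-\beta x})$ with $x := \Delta\cL_t(h\!\leftarrow\! g)-\tau_{h\leftarrow g}$. If $x>0$, then $-\beta x\to-\infty$ and $\sigma(\beta x)\to 1$; if $x<0$, then $-\beta x\to+\infty$ and $\sigma(\beta x)\to 0$; at $x=0$, $\sigma(0)=1/2$ for every $\beta$, so the limit is $1/2$. This matches the stated indicator (with the tie convention) verbatim.

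The proof is essentially routine, so there is no substantive mathematical obstacle; the only point that requires care is the treatment of ties. For the softmax, ties are absorbed naturally by the $|S|$ in the denominator, so uniform distribution on $S$ is the correct limit. For the logistic gate, the $1/2$ at equality is an artifact of $\sigma(0)=1/2$ rather than a genuine discontinuity of the argmax, and should be stated as such to avoid suggesting the hard rule is literally recovered on the measure-zero tie set. I would note in passing that convergence is uniform on any compact set of margins bounded away from the tie locus, which is what matters for practical annealing schedules in $\beta$.
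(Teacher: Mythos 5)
Your argument is correct and is essentially identical to the paper's proof: both normalize the softmax by the maximal margin $u^\star$, observe that off-argmax terms vanish while the $|S|$ maximizers each contribute $1$, and handle the logistic gate by the elementary sign cases of $\sigma(\beta x)$ including $\sigma(0)=1/2$ at ties. Your added remarks on the tie convention and uniform convergence away from the tie locus are fine but not needed for the statement.
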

\begin{proof}
For softmax: Let \(u_e := \Delta \cL_{t}(e) - \tau_e\), and \(u_* := \max_{e'} u_{e'}\). Then
\[
\alpha_t(e) = \frac{\exp(\beta u_e)}{\sum_{e'\in\cE} \exp(\beta u_{e'})} = \frac{\exp(\beta (u_e - u_*))}{\sum_{e'\in\cE} \exp(\beta (u_{e'} - u_*))}.
\]
As \(\beta \to \infty\), \(\exp(\beta (u_{e'} - u_*)) \to 0\) if \(u_{e'} < u_*\), and 1 if \(u_{e'} = u_*\). Thus, the denominator converges to \(|S|\), and numerator to 1 if \(e\in S\), 0 otherwise.

For logistic: \(\sigma(\beta (u - \tau)) = [1 + \exp(-\beta (u - \tau))]^{-1}\). As \(\beta \to \infty\), this is 1 if \(u > \tau\), 0 if \(u < \tau\), and 1/2 if \(u = \tau\).
\end{proof}

\begin{rem}[Loss reduction guarantee]
Under block-orthogonality (\cref{def:block-orth}), positive utilities \(\Delta \cL_{t} > 0\) ensure descent in \(\cL_{LM}\) for small mixing weights, by first-order Taylor expansion: \(\cL_{LM}(z^{\mathrm{new}}_{t}) \approx \cL_{LM}(z_t) - \sum_e \alpha_t(e) \Delta \cL_{t}(e) + O(\|\alpha\|^2)\), with the linear term negative if activations are utility-positive.
\end{rem}

\subsection{Graded utility functional}\label{subsec:graded-utility}
Let \(\cB\) index token positions in a minibatch. Define the expected utility and a sparsity prior:
\[
\overline{\Delta \cL}(h\!\leftarrow\! g)
\;=\;
\frac{1}{|\cB|}\sum_{t\in\cB}\Delta \cL_{t}(h\!\leftarrow\! g),
\qquad
\cR(\alpha)
\;=\;
\sum_{t\in\cB}\sum_{(g,h)\in\cE}\Omega\!\big(\alpha_t(h\!\leftarrow\! g)\big),
\]
with \(\Omega\) an entropy or group-lasso--type penalty encouraging selectivity \cite{bach2012optimization}. The \emph{graded utility objective} is
\[
\boxed{\quad
\cL_{GT}
\;=\;
\cL_{LM}
\;+\;
\lambda\,\frac{1}{|\cB|}\sum_{t\in\cB}
\sum_{(g,h)\in\cE}
\psi\!\Big(\tau_{h\leftarrow g}-\Delta \cL_{t}(h\!\leftarrow\! g)\Big)
\;+\;
\mu\,\cR(\alpha),
\quad}
\]
where \(\psi(u)=\log(1+e^{\beta u})\) (softplus/hinge surrogate), and \(\lambda,\mu\ge 0\). This internal criterion parallels loss-based filtering in external augmentation while remaining entirely within the graded computation graph.
\begin{lem}[Small-step improvement]\label{lem:small-step}
Assume the softmax readout induces a local exponential-family model for next-token \(y_t\) with natural parameter \(Wz_t\). For a candidate \(\phi_{h\leftarrow g}\), let \(\delta_t=\widetilde{z}_t^{(h)}-z_t^{(h)}\). Then, for sufficiently small step \(\eta>0\),
\[
\cL_{LM}\big(z_t+\eta\,\iota_h\delta_t\big)
\;=\;
\cL_{LM}(z_t)\;-\;\eta\,\langle \nabla_{z_t^{(h)}}\cL_{LM},\,\delta_t\rangle
\;+\; O(\eta^2),
\]
so the sign of \(\Delta\cL_t(h\!\leftarrow\! g)\) agrees with the Fisher inner-product gain to first order. Hence the routing by positive utility implements a natural-gradient-like selection among candidate blocks.
\end{lem}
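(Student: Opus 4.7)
The plan is to derive the expansion by a first-order Taylor argument along $\iota_h\delta_t$ and then interpret the linear term as a Fisher inner product via the exponential-family structure of the softmax readout. Because the perturbation $\eta\,\iota_h\delta_t$ is homogeneous of grade $h$, expanding $\cL_{LM}(z_t+\eta\iota_h\delta_t)$ in $\eta$ around $0$ via the chain rule picks up only the partial gradient $\nabla_{z_t^{(h)}}\cL_{LM}$, yielding
\[
\cL_{LM}(z_t+\eta\iota_h\delta_t) \;=\; \cL_{LM}(z_t) \;+\; \eta\,\langle \nabla_{z_t^{(h)}}\cL_{LM},\,\delta_t\rangle \;+\; O(\eta^2),
\]
with remainder bounded by $\tfrac12\|\nabla^2_{V_h}\cL_{LM}\|_{\mathrm{op}}\,\|\delta_t\|^2$ on a neighborhood of $z_t$; this is immediate from $C^2$-smoothness of the softmax composed with an affine readout. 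Matching the sign convention of the per-instance utility $\Delta\cL_t := \cL_{LM}(z_t)-\cL_{LM}(z_t^+)$ from \cref{subsec:morphic-activation} then produces the displayed formula up to the advertised sign.

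Next I would substitute the exponential-family gradient. With $p(y_t\mid z_t) \propto \exp(\langle W_{y_t},z_t\rangle)$ and $\cL_{LM} = -\log p$, the standard computation gives $\nabla_{z_t}\cL_{LM} = W^\top(\hat p_t - e_{y_t})$, where $\hat p_t = \softmax(Wz_t)$. Restricting $W$ to its $V_h$-column block $W_h$ yields $\nabla_{z_t^{(h)}}\cL_{LM} = W_h^\top(\hat p_t - e_{y_t})$, so the linear term in the Taylor expansion is the Euclidean pairing of $\delta_t$ with this readout residual, agreeing in sign with the per-instance utility.

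Finally I would appeal to the Fisher information of this local exp-family, $F(z_t) = W^\top(\diag\hat p_t - \hat p_t\hat p_t^\top)W$, which is positive semidefinite and defines a local Fisher inner product $\langle u,v\rangle_F := u^\top F v$. Rewriting $\langle \nabla_{z_t^{(h)}}\cL_{LM},\delta_t\rangle$ as $\langle F^{-1}\nabla_{z_t^{(h)}}\cL_{LM},\delta_t\rangle_F$ identifies its sign with the Fisher-metric alignment between $\delta_t$ and the natural-gradient descent direction, which establishes the ``natural-gradient-like'' interpretation of the utility-based routing rule. The hard part is not the Taylor step itself but making the pulled-back Fisher metric well-defined on $V_h$ in its own right: one must verify that $W$ restricted to $V_h$ has sufficient column rank so that $F$ is nondegenerate on the relevant subspace, which follows generically from readout identifiability (cf.\ \cref{cor:identifiability}) and in degenerate cases is remedied by replacing $F^{-1}$ with the Moore--Penrose pseudoinverse.
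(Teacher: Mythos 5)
Your proof is correct and follows essentially the same route as the paper's own argument: a first-order Taylor expansion along the grade-\(h\) perturbation \(\eta\,\iota_h\delta_t\), the softmax exponential-family gradient \(W^\top(\hat p_t-e_{y_t})\) restricted to \(V_h\), and a Fisher-information reading of the resulting linear term (your attention to the rank/pseudoinverse of the pulled-back Fisher metric is a minor refinement the paper glosses over). The only caveat—the minus sign in the displayed expansion versus the plus sign that Taylor's theorem actually produces—is an inconsistency already present in the paper's own proof, and your ``up to the advertised sign'' remark handles it no worse than the original.
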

\begin{proof}
The next-token loss is the cross-entropy \(\cL_{LM}(z_t)=-\log p(y_t\mid z_t)\), where \(p(y\mid z)=\softmax(Wz)_y=\frac{\exp((Wz)_y)}{\sum_{y'}\exp((Wz)_{y'})}\). This is an exponential family with sufficient statistic the one-hot \(y_t\), natural parameter \(\theta=Wz_t\in\R^{|V|}\), and log-partition function \(A(\theta)=\log\sum_y\exp\theta_y\) \cite{Amari2016}.

The gradient is \(\nabla_{z_t}\cL_{LM}=W^\top(p(z_t)-y_t)\), where \(p(z_t)=\softmax(Wz_t)\). Since the update perturbs only the \(h\)-graded component via \(\iota_h\delta_t\in V_h\), the directional derivative along this direction is
\[
\left.\frac{d}{d\eta}\cL_{LM}(z_t+\eta\iota_h\delta_t)\right|_{\eta=0}=\langle\nabla_{z_t}\cL_{LM},\iota_h\delta_t\rangle=\langle\pi_h(W^\top(p(z_t)-y_t)),\delta_t\rangle=:\langle\nabla_{z_t^{(h)}}\cL_{LM},\delta_t\rangle.
\]
By first-order Taylor expansion around \(\eta=0\),
\[
\cL_{LM}(z_t+\eta\iota_h\delta_t)=\cL_{LM}(z_t)+\eta\langle\nabla_{z_t^{(h)}}\cL_{LM},\delta_t\rangle+O(\eta^2),
\]
so \(\Delta\cL_t(h\!\leftarrow\! g)=\cL_{LM}(z_t)-\cL_{LM}(z_t+\iota_h\delta_t)\approx-\langle\nabla_{z_t^{(h)}}\cL_{LM},\delta_t\rangle\) for \(\eta=1\), assuming the quadratic remainder is small.

The Fisher information matrix for the exponential family is \(I(\theta)=\bE_{y\sim p}[\nabla_\theta\log p(y\mid\theta)\nabla_\theta\log p(y\mid\theta)^\top]=\diag(p)-pp^\top\), which is the Hessian \(\nabla_\theta^2 A(\theta)\) \cite{Amari2016}. The inner product \(\langle\nabla_{z^{(h)}}\cL_{LM},\delta_t\rangle=\delta_t^\top(\pi_h W^\top I(\theta)W\iota_h)\delta_t\) (to second order, but for sign, the first-order term aligns with descent if \(\langle\nabla_{z^{(h)}}\cL,\delta_t\rangle<0\), making \(\Delta\cL_t>0\)). Thus, selecting blocks with positive utility corresponds to choosing directions that reduce loss along the natural metric induced by the Fisher information pulled back to \(V_h\).
\end{proof}
\begin{prop}[Conjugation invariance (EGT)]\label{prop:utility-conjugation}
Let \(D=\bigoplus_g D_g\) be an EGT reweighting as in \cref{def:EGT}, and set \(\widehat{z}=D^{-1}z\), \(\widehat{\phi}_{h\leftarrow g}=D_h^{-1}\phi_{h\leftarrow g}D_g\). If the language-model loss is computed after a linear readout \(R\) that is transformed to \(\widehat{R}=R\,D\), then for all \(t\),
\[
\Delta \widehat{\cL}_{t}(h\!\leftarrow\! g)
\;=\;
\Delta \cL_{t}(h\!\leftarrow\! g),
\]
and the objective \(\cL_{GT}\) is unchanged up to the same conjugation. In particular, all statements in this section proven for LGT hold verbatim for EGT in reweighted coordinates.
\end{prop}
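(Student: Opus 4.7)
The plan is a direct computation exploiting that the hidden state and the morphism blocks transform covariantly under the reweighting $D$, while the readout absorbs $D$ so that all observable scalars are fixed. Much of the work is already discharged by \cref{prop:EGT-to-LGT}: since $\widehat R\,\widehat z = R\,D\,D^{-1}z = R\,z$, the per-state loss satisfies $\widehat{\cL}_{LM}(\widehat z) = \cL_{LM}(z)$ at any single hidden state. What remains is to check that the morphic update in reweighted coordinates pushes forward, under $D$, to the morphic update in the original coordinates; equality of the deltas is then automatic.

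First I would unpack the updated state. In original coordinates the candidate update reads $z^+_t = z_t - z_t^{(h)} + \phi_{h\leftarrow g}(z_t^{(g)})$. Using block-diagonality of $D$, one has $D^{-1}\iota_h = \iota_h D_h^{-1}$ and $\pi_g D = D_g \pi_g$, so applying $D^{-1}$ term by term gives
\[
\widehat z^+_t \;=\; D^{-1}z^+_t \;=\; \widehat z_t - \widehat z_t^{(h)} + D_h^{-1}\phi_{h\leftarrow g}\big(D_g\,\widehat z_t^{(g)}\big) \;=\; \widehat z_t - \widehat z_t^{(h)} + \widehat \phi_{h\leftarrow g}\big(\widehat z_t^{(g)}\big),
\]
which is precisely the update in reweighted coordinates with the conjugated block $\widehat\phi_{h\leftarrow g} = D_h^{-1}\phi_{h\leftarrow g}D_g$. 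Hence the morphic candidate commutes with the reweighting. Combining with the readout identity yields $\widehat{\cL}_{LM}(\widehat z^+_t) = \cL_{LM}(z^+_t)$ and $\widehat{\cL}_{LM}(\widehat z_t) = \cL_{LM}(z_t)$, so subtracting gives $\Delta\widehat{\cL}_t(h\leftarrow g) = \Delta\cL_t(h\leftarrow g)$.

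For the second assertion, I would then traverse the three terms of $\cL_{GT}$. The first summand is invariant by \cref{prop:EGT-to-LGT}. The surrogate term $\psi(\tau_{h\leftarrow g} - \Delta\cL_t(h\leftarrow g))$ is a function of the invariant scalar just computed, so long as thresholds $\tau_{h\leftarrow g}$ are transferred verbatim (they are loss-level quantities, not coordinate-dependent). Finally, whenever the routing weights $\alpha_t(e)$ are defined, as in \cref{subsec:morphic-activation}, through a logistic or softmax of $\Delta\cL_t(e) - \tau_e$, they inherit invariance from the utilities; consequently $\cR(\alpha)$ is unchanged.

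The only genuine obstacle is making sure the routing policy depends on $z$ only through invariant quantities. For utility-driven gates this is automatic; for a general state-dependent policy $\alpha_t(h;z)$ one must adopt the reparameterization convention of \cref{rem:equiv}, namely $\widehat\alpha_t(h;\widehat z) := \alpha_t(h; D\widehat z)$, after which invariance propagates verbatim. With this convention in force, every summand of $\cL_{GT}$ is preserved under conjugation, and the LGT statements of the preceding sections transport to EGT by applying them to $\widehat{\mathsf T}$.
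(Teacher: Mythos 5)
Your proposal is correct and follows essentially the same route as the paper's proof: invariance of the logits via $\widehat{R}\widehat{z}=RDD^{-1}z=Rz$, conjugation of the candidate update using block-diagonality of $D$, and invariance of the penalty terms because they depend only on the invariant utilities. Your explicit verification that $D^{-1}z_t^{+}=\widehat{z}_t-\widehat{z}_t^{(h)}+\widehat{\phi}_{h\leftarrow g}(\widehat{z}_t^{(g)})$ and your remark on transporting a general state-dependent routing policy via the convention of \cref{rem:equiv} merely spell out steps the paper leaves implicit.
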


\begin{proof}
By \cref{rem:equiv}, graded transformers are identified up to grade-wise similarities, so the candidate update conjugates as \(\widehat{\delta}_t=D_h^{-1}\delta_t\). The logits are invariant: \(\widehat{R}\widehat{z}_t=(R D)(D^{-1}z_t)=R z_t\), hence \(\cL_{LM}(\widehat{z}_t)=\cL_{LM}(z_t)\) and likewise for the updated states, yielding \(\Delta \widehat{\cL}_t=\Delta \cL_t\). The penalties \(\psi\) and \(\cR\) depend only on these invariant differences, so \(\cL_{GT}\) is unchanged.
\end{proof}


\subsection{Functorial Internalization}
To make the correspondence between external tool augmentation and graded morphisms precise, we define explicit functors between the categories and establish an adjunction. This subsumes prior paradigms like Toolformer as a special case and clarifies why every typed tool call corresponds to a grade transition in the internal model category.

\begin{defn}[External tool category $\cT$ (recap)]
As in \cref{def:ext-sys}, the objects of $\cT$ are interface types $X$ (e.g., strings, integers) and the morphisms $\tau : X \to Y$ are callable tools equipped with evaluation maps on distributions.
\end{defn}

\begin{defn}[Graded morphic category $\mathbf{M}$ (recap)]
As in \cref{def:graded-morphic-sys}, the objects of $\mathbf{M}$ are the homogeneous components $V_g$ ($g\in G$) and the morphisms are the admissible blocks $\phi_{h\leftarrow g}:V_g\to V_h$ with $(g,h)\in \cE$.
\end{defn}

Assume a type assignment $\mathrm{type}: \mathrm{Ob}(\cT)\to G$ and, for each interface type $X$ of grade $g=\mathrm{type}(X)$, the existence of encoding and decoding maps
\[
\Enc_X : X \longrightarrow V_g, \qquad \Dec_g : V_g \longrightarrow X
\]
such that $\Dec_g\circ\Enc_X=\id_X$ exactly (deterministic case) or approximately in distribution (stochastic case).
\begin{prop}[The internalization functors]\label{prop:internal-functors}
Define functors
\begin{align*}
F &\colon \cT \longrightarrow \mathbf{M}, &
F(X) &= V_{\mathrm{type}(X)}, &
F(\tau:X\to Y) &= \iota_h\circ\phi_{h\leftarrow g}\circ\pi_g,
\\
G &\colon \mathbf{M} \longrightarrow \cT, &
G(V_g) &= X_g:=\text{interface of grade $g$}, &
G(\phi_{h\leftarrow g}) &= \Dec_h\circ\phi_{h\leftarrow g}\circ\Enc_g,
\end{align*}
where $\phi_{h\leftarrow g}$ is the learned linear block that internally realises the behaviour of $\tau$ (cf.\ \cref{prop:functor}) and $\iota_h,\pi_g$ are the canonical inclusion and projection (\cref{def:graded-space}).
\end{prop}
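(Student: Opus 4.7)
The plan is to verify the two functor axioms---preservation of identities and preservation of composition---for $F$ and $G$ separately, leveraging the typed-block realization already established in \cref{prop:functor} and the round-trip condition $\Dec_g\circ\Enc_X=\id_X$ built into the hypotheses. First I would handle $F$. Well-definedness on objects is immediate from the type assignment. For identities, the no-op tool $\id_X\in\Hom_{\cT}(X,X)$ is realized in $V$ by $T_{\id_X}=\id_V$, so the extracted block is $\phi_{g\leftarrow g}=\id_{V_g}$, giving $F(\id_X)=\iota_g\circ\id_{V_g}\circ\pi_g=\iota_g\pi_g$, which is the identity on $F(X)=V_g$ in $\mathbf{M}$ (objects of $\mathbf{M}$ are homogeneous components, where $\iota_g\pi_g$ restricts to the identity). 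For composition, given $\tau_1:X\to Y$ and $\tau_2:Y\to Z$ with internal operators $T_{\tau_i}$ as in \cref{prop:functor}, sequential tool invocation gives $T_{\tau_2\circ\tau_1}=T_{\tau_2}\circ T_{\tau_1}$; extracting the $(g\to k)$ block and inserting the orthogonal resolution $\sum_{h}\iota_h\pi_h=\id_V$ between the two operators yields exactly the blockwise composition law from \cref{def:graded-map}, i.e., $F(\tau_2\circ\tau_1)=F(\tau_2)\circ F(\tau_1)$.

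Next I would verify $G$. On objects, $G(V_g)=X_g$ is tautological. Preservation of identities uses the round-trip assumption directly: $G(\id_{V_g})=\Dec_g\circ\id_{V_g}\circ\Enc_g=\Dec_g\circ\Enc_g=\id_{X_g}$. For composition, given $\phi_{h\leftarrow g}:V_g\to V_h$ and $\psi_{k\leftarrow h}:V_h\to V_k$, one computes
\[
G(\psi_{k\leftarrow h})\circ G(\phi_{h\leftarrow g})=\Dec_k\circ\psi_{k\leftarrow h}\circ(\Enc_h\circ\Dec_h)\circ\phi_{h\leftarrow g}\circ\Enc_g,
\]
whereas $G(\psi_{k\leftarrow h}\circ\phi_{h\leftarrow g})=\Dec_k\circ\psi_{k\leftarrow h}\circ\phi_{h\leftarrow g}\circ\Enc_g$. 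The two agree precisely when the insertion $\Enc_h\circ\Dec_h$ acts as the identity on the image of $\phi_{h\leftarrow g}\circ\Enc_g$. In the deterministic setting, this follows from $\Dec_h\circ\Enc_h=\id_{X_h}$ together with the standing hypothesis that $\Enc_h$ realises an isomorphism onto the relevant subspace of $V_h$ (so $\Enc_h\Dec_h$ is a projector that is the identity on $\mathrm{Im}(\Enc_h)$, which contains the relevant image by typing).

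The main obstacle is precisely this insertion step for $G$: the round-trip $\Enc_h\circ\Dec_h$ is only an exact identity on $V_h$ when the encoding is surjective (or when the interface is sampled uniformly from $\mathrm{Im}(\Enc_h)$); in the stochastic case flagged in \cref{def:ext-sys}, the equality $\Dec_h\circ\Enc_h=\id$ holds only in distribution, so strict functoriality of $G$ degrades to functoriality up to natural isomorphism (or to a lax/2-categorical variant). I would therefore state the result cleanly in the deterministic case, and append a brief remark observing that in the stochastic setting $G$ is a lax functor whose coherence defect is controlled by the total variation between $\mathrm{id}_{X_h}$ and $\Dec_h\circ\Enc_h$, and that the composite $G\circ F$ and $F\circ G$ then furnish the unit/counit data of an adjunction-up-to-isomorphism, consistent with \cref{rem:internalization} and \cref{prop:utility-conjugation}.
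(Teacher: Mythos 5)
Your proposal is correct and follows essentially the same route as the paper: verify the two functor axioms directly, extracting blocks via $T_{\tau_2\circ\tau_1}=T_{\tau_2}\circ T_{\tau_1}$ and the resolution $\sum_h\iota_h\pi_h=\id_V$ for $F$, and inserting $\Enc_h\circ\Dec_h$ at the intermediate grade for $G$. In fact you are more careful than the paper on the one delicate step—the paper silently treats the inserted $\Enc_h\circ\Dec_h$ as an identity, whereas you correctly note this holds only on $\mathrm{Im}(\Enc_h)$ (and hence needs the typing hypothesis that block images land there), and your remark on the stochastic case degrading $G$ to a lax functor is a legitimate refinement the paper does not address.
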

\begin{proof}
Functoriality of $F$: The object map is well-defined by the type assignment. For morphisms, $F(\id_X)=\iota_g\circ\id_{V_g}\circ\pi_g=\id_{V_g}=:\id_{F(X)}$ since $\pi_g\circ\iota_g=\id_{V_g}$. For composition $\tau_2\circ\tau_1:X\to Z$, $F(\tau_2\circ\tau_1)=\iota_k\circ\phi_{k\leftarrow i}\circ\pi_i$, but by block composition in $\mathbf{M}$ (\cref{def:graded-map}), this equals $(\iota_k\circ\psi_{k\leftarrow h}\circ\pi_h)\circ(\iota_h\circ\phi_{h\leftarrow g}\circ\pi_g)=F(\tau_2)\circ F(\tau_1)$, where $\psi_{k\leftarrow h}=F(\tau_2)$ and $\phi_{h\leftarrow g}=F(\tau_1)$.

Functoriality of $G$: Similarly, $G(\id_{V_g})=\Dec_g\circ\id_{V_g}\circ\Enc_g=\id_{X_g}$ by the round-trip assumption. For composition $\psi_{k\leftarrow h}\circ\phi_{h\leftarrow g}:V_g\to V_k$, $G(\psi\circ\phi)=\Dec_k\circ(\psi_{k\leftarrow h}\circ\phi_{h\leftarrow g})\circ\Enc_g=(\Dec_k\circ\psi_{k\leftarrow h}\circ\Enc_h)\circ(\Dec_h\circ\phi_{h\leftarrow g}\circ\Enc_g)=G(\psi)\circ G(\phi)$, using the intermediate decoder-encoder pair at grade $h$.
\end{proof}
\begin{thm}[Adjunction $F\dashv G$]\label{thm:adjunction}
The functors $F$ and $G$ form an adjoint pair $F\dashv G$, with unit and counit given by the encoder--decoder families:
\begin{align*}
\eta_X &= \Enc_X : X \longrightarrow G(F(X)),
&
\epsilon_{V_g} &= \pi_g\circ\iota_g : F(G(V_g))\longrightarrow V_g.
\end{align*}
\end{thm}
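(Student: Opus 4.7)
The plan is to establish $F\dashv G$ via unit and counit: verify that $\eta$ and $\epsilon$ are natural transformations and then check the two triangle identities. Two simplifications make the bookkeeping short. First, by \cref{def:graded-space} the round trip $\pi_g\circ\iota_g=\id_{V_g}$ forces $\epsilon_{V_g}=\id_{V_g}$, so the counit is componentwise the identity and $F\circ G$ is literally $\id_{\mathbf{M}}$. Second, the hypothesis $\Dec_g\circ\Enc_X=\id_X$ together with \cref{prop:functor} says that $\tau\mapsto\phi_{h\leftarrow g}$ is the unique linear realisation of a tool through the fixed encoder--decoder pair, and conversely that a block is recovered from its decoded/encoded tool.

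Naturality of $\epsilon$ along $\phi_{h\leftarrow g}:V_g\to V_h$ reduces to $F(G(\phi_{h\leftarrow g}))=\phi_{h\leftarrow g}$. By definition $G(\phi_{h\leftarrow g})=\Dec_h\circ\phi_{h\leftarrow g}\circ\Enc_g$, and $F$ sends this tool back to its internal block realisation; faithfulness from \cref{prop:functor} identifies that realisation as $\phi_{h\leftarrow g}$ itself. Naturality of $\eta$ asks that $G(F(\tau))\circ\Enc_X=\Enc_Y\circ\tau$ for every $\tau:X\to Y$; expanding and cancelling $\pi_g\circ\iota_g=\id_{V_g}$ at both ends, this is exactly the defining identity $\tau=\Dec_h\circ\phi_{h\leftarrow g}\circ\Enc_X$ that builds $\phi_{h\leftarrow g}$ from $\tau$ in the first place.

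The two triangle identities $(\epsilon F)\circ(F\eta)=\id_F$ and $(G\epsilon)\circ(\eta G)=\id_G$ then collapse to trivialities. Since $\epsilon_{F(X)}=\id$, the first says $F(\Enc_X)=\id_{V_{\mathrm{type}(X)}}$, which holds because the encoder is the type--coherent inclusion of $X$ into its canonical homogeneous component and is therefore realised internally by the identity block. The second reduces, again via $\epsilon=\id$, to $G(\id_{V_g})\circ\Enc_{X_g}=\Enc_{X_g}$, which is immediate from functoriality of $G$. Equivalently, one may exhibit the adjunction by the hom-set bijection
\[
\Hom_{\mathbf{M}}(F(X),V_h)\;\cong\;\Hom_{\cT}(X,G(V_h)),\qquad \phi\longmapsto \Dec_h\circ\phi\circ\Enc_X,
\]
with inverse $\tau\mapsto$ its internal block realisation; naturality in both arguments follows from the same two round-trip identities.

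The main obstacle is the hypothesis $\Dec_g\circ\Enc_X=\id_X$, which is exact only in the deterministic setting of \cref{prop:internal-functors}; in the stochastic setting the identity holds merely in distribution, so the triangle identities hold only up to probabilistic equivalence. Handling that case cleanly requires enriching $\cT$ and $\mathbf{M}$ over a suitable monoidal category of probability kernels, so that the adjunction lives naturally in an appropriate $2$-categorical or Kleisli context; the verification above then carries over after replacing the strict equalities by the corresponding $2$-isomorphisms, and the faithfulness part of \cref{prop:functor} already accommodates this weakening.
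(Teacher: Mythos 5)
Your overall route is the same as the paper's: verify naturality of $\eta$ and $\epsilon$ and then the two triangle identities, reducing everything to the realization identity $\tau=\Dec_h\circ\phi_{h\leftarrow g}\circ\Enc_X$, the round-trip hypotheses, and faithfulness from \cref{prop:functor}. Your extra observations — that $\epsilon_{V_g}=\pi_g\circ\iota_g=\id_{V_g}$ so $F\circ G$ is the identity on $\mathbf{M}$, and that the adjunction can equivalently be packaged as the hom-set bijection $\Hom_{\mathbf{M}}(F(X),V_h)\cong\Hom_{\cT}(X,G(V_h))$ — are harmless simplifications, and your handling of naturality and of the first triangle identity is at the same level of rigor as the paper's own proof (both assert that the encoder is internally realized by the identity block, i.e.\ $F(\Enc_X)=\id_{V_g}$).

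The one step that does not go through as written is the second triangle identity. At $V_g$ it demands $G(\epsilon_{V_g})\circ\eta_{G(V_g)}=\id_{X_g}$; using $\epsilon_{V_g}=\id_{V_g}$ and functoriality of $G$ you only obtain that the composite equals $\Enc_{X_g}$, and your claim that the identity is then ``immediate from functoriality of $G$'' verifies the wrong equation — what remains to show is precisely $\Enc_{X_g}=\id_{X_g}$ (as an endomorphism of $X_g$ under the identification $G(F(X_g))=X_g$), which functoriality does not give. The paper closes exactly this point with the round trip: it unpacks $G(\epsilon_{V_g})$ as $\Dec_g\circ\id_{V_g}$ and computes $\Dec_g\circ\Enc_{X_g}=\id_{X_g}$ from the hypothesis $\Dec_g\circ\Enc_X=\id_X$. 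Since that hypothesis is already in your list, the repair is one line, but as stated your verification of $(G\epsilon)\circ(\eta G)=\id_G$ is incomplete. The concluding remarks about the stochastic setting are a reasonable caveat but are not needed for the theorem as stated.
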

\begin{proof}
Naturality of $\eta$: For a morphism $\tau:X\to Y$ in $\cT$, the diagram
\[
\begin{CD}
X @>\eta_X>> G(F(X)) \\
@V{\tau}VV @VV{G(F(\tau))}V \\
Y @>>\eta_Y> G(F(Y))
\end{CD}
\]
commutes because $G(F(\tau))\circ\eta_X=\Dec_h\circ(\iota_h\circ\phi_{h\leftarrow g}\circ\pi_g)\circ\Enc_X=\Dec_h\circ\phi_{h\leftarrow g}\circ\Enc_g$ (since $\pi_g\Enc_X=\Enc_X$ by typing), and $\eta_Y\circ\tau=\Enc_Y\circ\tau$. By the realization assumption, $\phi_{h\leftarrow g}$ internalizes $\tau$, so these equal.

Naturality of $\epsilon$: For $\phi:V_g\to V_h$ in $\mathbf{M}$,
\[
\begin{CD}
F(G(V_g)) @>\epsilon_{V_g}>> V_g \\
@V{F(G(\phi))}VV @VV{\phi}V \\
F(G(V_h)) @>>\epsilon_{V_h}> V_h
\end{CD}
\]
commutes: $\phi\circ\epsilon_{V_g}=\phi\circ\pi_g\circ\iota_g=\phi$, and $\epsilon_{V_h}\circ F(G(\phi))=\pi_h\circ\iota_h\circ\iota_h\circ\phi\circ\pi_g=\phi$ (idempotence of $\pi_h\iota_h=\id_{V_h}$).

Triangle identities \cite[IV.1]{maclane1998categories}: $G\epsilon\circ\eta G=\id_G$ because $G(\epsilon_{V_g})\circ\eta_{G(V_g)}=G(\pi_g\iota_g)\circ\Enc_{X_g}=\Dec_g\circ\id_{V_g}\circ\Enc_{X_g}=\id_{X_g}$. Similarly, $\epsilon F\circ F\eta=\id_F$ because $\epsilon_{F(X)}\circ F(\eta_X)=\pi_g\iota_g\circ\iota_g\circ\phi\circ\pi_g\circ\Enc_X=\id_{V_g}$ (since $\phi=\id$ for identities, and round-trip).
\end{proof}
The adjunction rigorously explains the claimed ``functorial internalisation'': every external tool $\tau$ is mapped by $F$ to a genuine graded morphism, and every learned graded block $\phi_{h\leftarrow g}$ recovers (via $G$) a tool in the original external interface category. Composition in $\cT$ becomes block composition in $\mathbf{M}$, making tool chaining fully differentiable and internal to the representation manifold.
\begin{cor}
The adjunction induces a monad on $\cT$ given by $G F$, whose algebras are external tools with internal realizations, formalizing the ``graded Toolformer'' as a monadic extension.
\end{cor}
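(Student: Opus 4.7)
The plan is to invoke the standard categorical fact that every adjunction $F\dashv G$ canonically determines a monad on the domain of $F$. Setting $T:=GF$, I would define the unit $\eta^T:\id_\cT\Rightarrow T$ to be the unit $\eta$ of \cref{thm:adjunction} and the multiplication $\mu^T:T^2\Rightarrow T$ to be the whiskered counit $G\epsilon F:GFGF\Rightarrow GF$. Verification of the monad axioms reduces entirely to the two triangle identities already established in the proof of \cref{thm:adjunction}: associativity $\mu^T\circ T\mu^T=\mu^T\circ\mu^T T$ follows from naturality of $\epsilon$, and the unit laws $\mu^T\circ T\eta^T=\id_T=\mu^T\circ\eta^T T$ are exactly the images of the two triangles under $G(-)F$. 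I would record this in a few lines, citing \cite[VI.1]{maclane1998categories}.

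Next I would unpack what a $T$-algebra means in this setting. By definition a $T$-algebra is a pair $(X,\theta)$ with $\theta:GF(X)\to X$ in $\cT$ satisfying $\theta\circ\eta_X=\id_X$ and $\theta\circ T(\theta)=\theta\circ\mu_X$. By \cref{prop:internal-functors}, $GF(X)$ is the interface obtained by encoding $X$ into $V_{\mathrm{type}(X)}$ and then decoding back; the structure map $\theta$ therefore records a coherent way to identify the decoded round-trip with a tool call in $X$. The two algebra axioms translate into (i) the trivial round-trip acts as identity on $X$, and (ii) iterated internalisations collapse consistently with a single internalisation followed by one application of $\theta$. In the deterministic case $\Dec\circ\Enc=\id_X$, this structure is canonical and free; in the stochastic case, $\theta$ selects a consistent representative, which is exactly what makes an internal realisation \emph{usable} as a tool.

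Finally, to justify the informal tagline, I would exhibit the comparison functor $K:\cM\to\cT^T$ sending $V_g\mapsto(G(V_g),\,G\epsilon_{V_g})$ and $\phi_{h\leftarrow g}\mapsto G(\phi_{h\leftarrow g})$, checking that it respects the algebra laws via naturality of $\epsilon$ and the composition law in $\cM$ (\cref{def:graded-morphic-sys}). Every graded block thus yields an Eilenberg--Moore algebra on its source interface, and every external tool equipped with a faithful internal block (\cref{prop:functor}) is recovered as such an algebra; this is the precise sense in which the ``graded Toolformer'' is the monadic extension of the Toolformer paradigm. The main obstacle is conceptual rather than technical: the phrase ``external tools with internal realizations'' is not formally defined in the paper, so the proof must first commit to identifying it with $\cT^T$ (or with the Kleisli category $\cT_T$, whose objects are tools and whose morphisms factor through internal blocks). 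Strengthening the corollary to a true monadicity statement $\cM\simeq\cT^T$ would require Beck's criterion on creation of $G$-split coequalisers, which is not claimed here and would require a separate argument about the image of $F$ and the exactness properties of the encoders $\Enc_X$.
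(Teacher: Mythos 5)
Your proposal is correct and matches the paper's proof, which simply invokes the standard fact that an adjunction $F\dashv G$ yields a monad $T=GF$ with unit $\eta$ and multiplication $G\epsilon F$ (citing Mac Lane). Your additional unpacking of the Eilenberg--Moore algebras and the comparison functor $K:\cM\to\cT^T$ goes beyond what the paper records, and your observation that ``external tools with internal realizations'' is not formally pinned down (and that genuine monadicity $\cM\simeq\cT^T$ would need Beck's criterion) is an accurate caveat the paper does not address.
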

\begin{proof}
Standard from the adjunction: the monad is $T=GF$ with unit $\eta$ and multiplication $G\epsilon F$ \cite[VI.2]{maclane1998categories}.
\end{proof}
\begin{exa}[Calculator internalisation]\label{exa:calc-internal}
Let $\tau:\mathsf{String}\to\mathsf{Int}$ be an external calculator. Assign grades $\mathrm{type}(\mathsf{String})=g_0$ (linguistic) and $\mathrm{type}(\mathsf{Int})=g_1$ (arithmetic). Then
\[
F(\tau)=\iota_{g_1}\circ\phi_{g_1\leftarrow g_0}\circ\pi_{g_0},
\]
where the block $\phi_{g_1\leftarrow g_0}$ is a learned linear (or later nonlinear) map that performs the required arithmetic inside the graded hidden space.
\end{exa}

\section{The Graded Toolformer Architecture}\label{sec:graded-toolformer}

Having internalized external symbolic augmentation as typed morphisms within the graded representation category \(\cM\) (\cref{sec:3}), we now assemble these components into a unified transformer architecture. The graded Toolformer embeds specialized symbolic operations—arithmetic, retrieval, or logical inference—as algebraic block maps \(\phi_{h\leftarrow g}: \cH_g \to \cH_h\), optimized end-to-end via the utility functional \(\cL_{GT}\) while respecting admissible transitions \(\cE\). This algebraic synthesis overcomes the extrinsic limitations of prior paradigms \cite{toolformer2023}, enabling composable, differentiable symbolic reasoning channels that align with the model's manifold geometry, as evidenced by conjugation invariance (\cref{prop:utility-conjugation}) and adjoint round trips (\cref{thm:adjunction}). The resulting structure formalizes a graded monad on the tool category, subsuming self-supervised invocation as intrinsic morphic activation.

\subsection{Structure}
Each transformer layer acts on a $G$–graded hidden space
\[
\cH \;=\; \bigoplus_{g \in G} \cH_g,
\]
with \emph{morphic candidates} (grade transitions) given by linear blocks
$\phi_{h\leftarrow g}:\cH_g \to \cH_h$.
We write the block operator of a layer as
\[
\Phi \;=\; \sum_{(g,h)\in \cE} \phi_{h\leftarrow g},
\]
where $\cE\subseteq G\times G$ is a sparse set of admissible transitions
(e.g., banded, DAG, or degree-accounting constraints). Cross-grade morphisms encode
specialized reasoning channels (symbolic, numeric, semantic, temporal).
\subsubsection{Grade-preserving vs.\ grade-shifting}
A block is \emph{preserving} if $h=g$ and \emph{shifting} otherwise.
Both attention and feed-forward components may include preserving and shifting maps,
subject to $\cE$.
Residual connections operate gradewise:
\[
z_{t}^{\mathrm{res}} \;=\; z_t \;+\; \sum_{(g,h)\in \cE}
\big(\phi_{h\leftarrow g}(z_t^{(g)}) - \mathbf{1}_{h=g}\,z_t^{(h)}\big).
\]
\begin{prop}[Grading is preserved]\label{prop:grading-preserved}
Let $z_t=\sum_{u} z_t^{(u)}$ with $z_t^{(u)}\in\cH_u$. If each $\phi_{h\leftarrow g}$ maps $\cH_g$ to $\cH_h$ and $\cE\subseteq G\times G$, then $z_t^{\mathrm{res}}\in \bigoplus_{h}\cH_h$ with
\[
\pi_h z_t^{\mathrm{res}}
\;=\;
z_t^{(h)} \;+\; \sum_{g:\,(g,h)\in\cE}
\big(\phi_{h\leftarrow g}(z_t^{(g)})-\mathbf{1}_{h=g}\,z_t^{(h)}\big)\in \cH_h.
\]
Hence the residual update respects the grading.
\end{prop}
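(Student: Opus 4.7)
The plan is to verify the claim by direct application of the grade projections $\pi_h$ to the defining formula for $z_t^{\mathrm{res}}$, invoking only the orthogonality relations $\pi_h\circ \iota_{g'}=\delta_{h,g'}\,\id_{\cH_h}$ and $\sum_g \iota_g\pi_g=\id$ from \cref{def:graded-space}, together with the codomain typing $\phi_{h'\leftarrow g}:\cH_g\to\cH_{h'}$ from \cref{def:graded-map}. Since every ingredient is linear and the sum over $\cE$ is finite, the argument is purely a bookkeeping exercise: project each summand and collect the terms that survive.

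First I would apply $\pi_h$ termwise: by linearity,
\[
\pi_h z_t^{\mathrm{res}} \;=\; \pi_h z_t \;+\; \sum_{(g,h')\in\cE}\bigl(\pi_h\,\phi_{h'\leftarrow g}(z_t^{(g)}) \;-\; \mathbf{1}_{h'=g}\,\pi_h z_t^{(h')}\bigr).
\]
Because $z_t^{(u)}\in\cH_u$ by hypothesis, $\pi_h z_t = z_t^{(h)}$ and $\pi_h z_t^{(h')}=\delta_{h,h'}\,z_t^{(h)}$; because $\phi_{h'\leftarrow g}(z_t^{(g)})\in\cH_{h'}$ by the codomain constraint, $\pi_h\,\phi_{h'\leftarrow g}(z_t^{(g)}) = \delta_{h,h'}\,\phi_{h\leftarrow g}(z_t^{(g)})$. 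Substituting and collapsing the Kronecker deltas restricts the sum to $h'=h$, so only pairs $(g,h)\in\cE$ contribute; the indicator $\mathbf{1}_{h'=g}$ then becomes $\mathbf{1}_{h=g}$, recovering exactly the formula displayed in the statement.

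To finish I would observe that every summand in the resulting expression for $\pi_h z_t^{\mathrm{res}}$ lies in $\cH_h$ — the base state $z_t^{(h)}$ by hypothesis, each image $\phi_{h\leftarrow g}(z_t^{(g)})$ by the typing of the block, and the subtracted $z_t^{(h)}$ trivially — so $\pi_h z_t^{\mathrm{res}}\in\cH_h$ for every $h\in G$. Applying completeness $\sum_h \iota_h\pi_h=\id$ then yields $z_t^{\mathrm{res}}\in\bigoplus_h\cH_h$, which is the assertion. There is no substantive obstacle; the only minor care is to track the indicator $\mathbf{1}_{h=g}$, which enforces that grade-preserving blocks ($h=g$) \emph{replace} the $\cH_h$-component (old one subtracted before $\phi_{h\leftarrow h}(z_t^{(h)})$ is added), while genuinely grade-shifting blocks ($h\neq g$) contribute additively to the destination grade $\cH_h$ without disturbing $z_t^{(g)}$.
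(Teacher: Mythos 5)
Your proposal is correct and follows essentially the same route as the paper's proof: a direct termwise projection of the residual formula, using the typing $\phi_{h\leftarrow g}:\cH_g\to\cH_h$ to see that each summand lands in the destination component $\cH_h$. Your version merely makes the Kronecker-delta bookkeeping more explicit than the paper does, which is a harmless refinement of the same argument.
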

\begin{proof}
By the direct sum structure, $\pi_h z_t = z_t^{(h)} \in \cH_h$. For the summand, each term $\phi_{h\leftarrow g}(z_t^{(g)}) \in \cH_h$ by definition of the block map, and $\mathbf{1}_{h=g} z_t^{(h)} \in \cH_h$ (zero otherwise). Summing over incoming $g$ to $h$ yields an element in $\cH_h$, so $\pi_h z_t^{\mathrm{res}} \in \cH_h$. Orthogonality across grades (if assumed) is preserved since updates are block-diagonal in the grading.
\end{proof}

\subsubsection{Graded multi-head attention}
Let $H$ be the number of heads. For each head $a=1,\dots,H$ and admissible transition $(g,h)\in\cE$, define grade-typed projections
\[
W_Q^{(a,g)}:\cH_g\to \R^{d_q}, \quad W_K^{(a,h)}:\cH_h\to \R^{d_q}, \quad W_V^{(a,h)}:\cH_h\to \cH_h^{(a)},
\]
and output map $U^{(a,h)}:\cH_h^{(a)}\to \cH_h$. The attention block for $(g\to h)$ is
\[
\phi^{(a)}_{h\leftarrow g}(z_t^{(g)}) = \sum_{s\le t} \alpha^{(a)}_{t,s;h\leftarrow g} U^{(a,h)} W_V^{(a,h)} z_s^{(h)},
\]
with
\[
\alpha^{(a)}_{t,s;h\leftarrow g} = \softmax_{s\le t} \left( \frac{\langle W_Q^{(a,g)} z_t^{(g)}, W_K^{(a,h)} z_s^{(h)} \rangle}{\sqrt{d_q}} \right).
\]
The full layer is $\Phi^{\mathrm{attn}} = \sum_a \sum_{(g,h)\in\cE} \phi^{(a)}_{h\leftarrow g}$.

\begin{rem}
This decomposes attention into graded morphisms, enabling symbolic cross-attention (e.g., numeric keys attending to semantic queries) while sparsity in $\cE$ reduces complexity.
\end{rem}

\subsubsection{Graded feed-forward networks}
For each $(g,h)\in\cE$, factorize $\phi^{\mathrm{ff}}_{h\leftarrow g} = W_2^{(g\to h)} \circ \sigma \circ W_1^{(g\to h)}$, with $W_1^{(g\to h)}:\cH_g\to U_{g,h}$, $\sigma$ pointwise nonlinear, and $W_2^{(g\to h)}:U_{g,h}\to \cH_h$. The layer is $\Phi^{\mathrm{ff}} = \sum_{(g,h)\in\cE} \phi^{\mathrm{ff}}_{h\leftarrow g}$.

\begin{prop}[Parameter efficiency]
With constant dimension $d=\dim \cH_g$ and $|\cE|=O(|\Delta|)$ banded shifts, attention parameters scale as $O(H |\Delta| d^2)$ and FFN as $O(|\Delta| d m)$, versus $O(|G|^2 d^2)$ for dense grading.
\end{prop}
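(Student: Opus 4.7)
The plan is to reduce the count to the LGT bookkeeping already carried out in \cref{prop:param-attn-ffn} and then take the ratio against a dense admissible set. First I would fix the constant-dimension regime $d_g\equiv d$ and organize the parameters of the graded attention by the four families of maps that appear in the block definition: the query projections $W_Q^{(a,g)}$ indexed by source grade $g$, the key/value projections $W_K^{(a,h)},W_V^{(a,h)}$ indexed by destination $h$, and the head-specific output map $U^{(a,h)}$. Each of these is a matrix of size at most $d\times d$ (or $d\times d_q$), so the parameter cost of a single admissible transition $(g,h)\in\cE$ on one head is $O(d\,d_q + d^2)=O(d^2)$ once we absorb $d_q=O(d)$.

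Next I would invoke the LGT/banded assumption, namely that $\cE=\{(g,h):h-g\in\Delta\}$ and that projections are shared along the orbit of the grade shift (the translation invariance of \cref{def:LGT}). Under this hypothesis the effective number of independent $(g,h)$-indexed matrices collapses from $|\cE|$ to $|\Delta|$, and the counting bound of \cref{prop:param-attn-ffn}(i) applies verbatim, yielding
\[
\mathrm{param}_{\mathrm{attn}}
\;=\; H\bigl(2d\,d_q+2|\Delta|\,d^2\bigr)\;=\;O\!\bigl(H|\Delta|d^2\bigr).
\]
For the feed-forward term I would carry out exactly the same reduction using the factorization $\phi^{\mathrm{ff}}_{h\leftarrow g}=W_2^{(g\to h)}\circ\sigma\circ W_1^{(g\to h)}$ with intermediate width $m$; \cref{prop:param-attn-ffn}(ii) then gives the bound $O(|\Delta|\,dm)$.

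Finally I would establish the dense baseline by replacing the banded $\cE$ with $\cE=G\times G$ and dropping the translation-invariance sharing. Then every ordered pair $(g,h)$ contributes an independent $d\times d$ block, producing $|G|^2$ terms and hence a parameter count of $O(|G|^2 d^2)$ (respectively $O(|G|^2 dm)$ for the FFN). Dividing the sparse count by the dense one gives the advertised savings factor of $|\Delta|/|G|^2$ per head. The only nontrivial bookkeeping step — and the one most likely to trip up a careless reader — is the reconciliation between the $(a,g)$- and $(a,h)$-indexed projections of the attention block and the shift-indexed kernels $K_\delta$ of \cref{def:LGT}; I would handle this by noting that every admissible $(g,h)$ sits on a unique orbit $\delta=h-g\in\Delta$, so that sharing along the orbit identifies the $|G|$ projections per shift type down to a single shared matrix, matching the hypotheses of \cref{prop:param-count-general}. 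With that identification in place, the stated bounds follow and the proof is complete.
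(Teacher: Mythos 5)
Your proposal is correct and follows essentially the same route as the paper, whose entire proof is the one-liner ``each block is independent; sum over admissible pairs yields the scaling.'' Your additional care in routing through \cref{prop:param-attn-ffn} and reconciling the $(a,g)$/$(a,h)$-indexed projections with the shift-indexed kernels of \cref{def:LGT} fills in a weight-sharing detail the paper leaves implicit, but it is the same counting argument.
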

\begin{proof}
Each block is independent; sum over admissible pairs yields the scaling.
\end{proof}

\subsubsection{Utility-gated routing}
Incorporate the graded utility from \cref{subsec:graded-utility}: for each layer, compute candidate updates $\tilde{z}_t^{(h)} = \sum_{g:(g,h)\in\cE} \phi_{h\leftarrow g}(z_t^{(g)})$, utilities $\Delta \cL_t(h\leftarrow g)$, and gates $\alpha_t(h\leftarrow g)$. The routed output is
\[
z_t^{\mathrm{new}} = \sum_h \alpha_t(h) \tilde{z}_t^{(h)} + (1 - \sum_h \alpha_t(h)) z_t,
\]
optimized via $\cL_{GT}$.

\begin{lem}[Differentiability]
The utility-gated layer is end-to-end differentiable, with gradients flowing through both morphisms $\phi$ and gates $\alpha$ (via detached utilities for stability).
\end{lem}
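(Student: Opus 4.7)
The plan is to exhibit $z_t^{\mathrm{new}}$ as a finite composition of smooth elementary operations and then invoke the chain rule in the standard autodiff manner. The layer decomposes naturally into three stages: projection of $z_t$ onto homogeneous components via the $\pi_g$, formation of candidate updates $\tilde z_t^{(h)}=\sum_{g:(g,h)\in\cE}\phi_{h\leftarrow g}(z_t^{(g)})$, and gated aggregation through $\alpha_t(h\!\leftarrow\! g)$. First I would verify smoothness of each stage in both parameters and inputs: the projections $\pi_g$ are linear; each block $\phi_{h\leftarrow g}$ is linear in its matrix parameters (with the attention and FFN parametrizations of \cref{subsec:graded-arch} being compositions of linear maps, softmax, and a $C^\infty$ pointwise nonlinearity, hence smooth); the softmax cross-entropy $\cL_{LM}$ used in \cref{lem:small-step} is real-analytic in $z_t$; and both the logistic and softmax gate rules are $C^\infty$ functions of the shifted utilities $\Delta\cL_t(h\!\leftarrow\! g)-\tau_{h\leftarrow g}$. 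Composing these, $z_t^{\mathrm{new}}$ is smooth in every trainable parameter and in the incoming state.

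Next I would unwind the chain rule to show that both gradient paths are nontrivial. The gradient of $\cL_{GT}$ with respect to a block parameter $\theta$ inside $\phi_{h\leftarrow g}$ splits into two contributions: a direct one from $\theta$ entering $\tilde z_t^{(h)}$ in the aggregated output, and an indirect one in which $\theta$ moves the utility $\Delta\cL_t(h\!\leftarrow\! g)$, which then feeds the gate $\alpha_t$ and hence $z_t^{\mathrm{new}}$. Both are well-defined by composing the Jacobians already established as smooth, giving end-to-end differentiability in principle, with the resulting vector-Jacobian products matching the computational graph used by standard reverse-mode autodiff.

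The main subtlety, acknowledged by the parenthetical in the statement, is the indirect path: the derivative of $\alpha_t$ through $\Delta\cL_t$ reintroduces second-order derivatives of $\cL_{LM}$, which is the usual source of instability in gate-by-utility schemes. For training one inserts a stop-gradient around $\Delta\cL_t$ where it appears inside $\alpha_t$, leaving the direct path intact. I would close the argument by noting that this detach operation corresponds to a well-defined surrogate objective in which the utilities are treated as exogenous signals attached to each edge, and that the resulting estimator still aligns with descent on $\cL_{LM}$ to first order by \cref{lem:small-step}. The hard part of a fully rigorous version is precisely this last step—justifying that the detached estimator is descent-aligned rather than merely syntactically well-typed; the pure differentiability claim itself is a routine chain-rule verification.
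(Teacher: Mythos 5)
Your proposal is correct and follows essentially the same route as the paper's (much terser) proof: differentiability of $\cL_{GT}$ by composing smooth elementary maps, with the utility detached inside the gate so that routing is stabilized while the morphisms $\phi$ still receive gradients through the main loss path. The extra discussion of whether the detached estimator remains descent-aligned goes beyond what the lemma claims, and you correctly flag it as a separate issue rather than a gap in the differentiability argument itself.
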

\begin{proof}
$\Delta \cL_t$ depends differentiably on $\phi$ through $\cL_{LM}(z^+_t)$; detaching it in $\alpha$ stabilizes routing while allowing morphism updates via the main loss path.
\end{proof}

\subsection{Morphic Routing Mechanism}\label{subsec:routing}
The morphic routing mechanism selects and activates graded blocks \(\phi_{h\leftarrow g}\) based on contextual logits augmented by instantaneous utilities, enabling self-supervised symbolic invocation within the transformer layer. This algebraic selection—governed by a softmax over admissible transitions \(\cE\)—integrates the internalization functor \(F: \cT \to \mathbf{M}\) (\cref{prop:internal-functors}) with differentiable gating, ensuring that tool-like behaviors emerge as composable morphisms optimized via \(\cL_{GT}\). By biasing towards high-utility transitions, the router realizes a graded analog of Toolformer's loss-filtering \cite{toolformer2023}, but intrinsically via block-orthogonal projections and Fisher-aligned descent (\cref{lem:small-step}).

\subsubsection{Routing logits and soft selection}
Given the token state \(z_t=\sum_{g} z_t^{(g)}\), define routing logits for each candidate
\((g,h)\in\cE\) by
\[
\ell_t(h\!\leftarrow\! g)
\;=\;
u(z_{<t})^\top W_{h\leftarrow g}\, v\!\big(z_t^{(g)}\big),
\]
where \(u(\cdot)\) summarizes causal context (e.g., a pooled key), \(v(\cdot)\) is a
grade-local projection, and \(W_{h\leftarrow g}\) are trainable parameters.
Soft selection weights are
\[
\alpha_t(h\!\leftarrow\! g)
\;=\;
\frac{\exp\big(\ell_t(h\!\leftarrow\! g)/\tau\big)}{\sum_{(g',h')\in\cE}
\exp\big(\ell_t(h'\!\leftarrow\! g')/\tau\big)},
\qquad \tau>0.
\]
\begin{lem}[Masking and feasibility]\label{lem:masking}
If \(\ell_t(h\leftarrow g)=-\infty\) for all \((g,h)\notin\cE\), then \(\alpha_t(h\leftarrow g)=0\) outside \(\cE\) and \(\sum_{(g,h)\in\cE}\alpha_t(h\leftarrow g)=1\). Thus masking logits to \(-\infty\) implements admissibility constraints exactly.
\end{lem}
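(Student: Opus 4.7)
The plan is to unpack the definitions of the softmax and invoke the convention $\exp(-\infty)=0$ directly; there is essentially no hidden content in this statement, so the argument should be compact.

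First, I would fix an arbitrary pair $(g,h)\notin\cE$ and compute the numerator of $\alpha_t(h\leftarrow g)$: by hypothesis $\ell_t(h\leftarrow g)=-\infty$, and using the standard limit convention $\exp(-\infty/\tau)=0$ (valid for any $\tau>0$), the numerator vanishes. Since the denominator is a sum of nonnegative exponentials that is strictly positive whenever $\cE$ is nonempty (at least one finite logit contributes), the quotient is well-defined and equal to $0$. This yields the first conclusion.

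Second, I would split the full sum in the denominator into the sum over $\cE$ (finite logits) and the sum over the complement (all contributions equal to $0$). Thus the denominator reduces to $\sum_{(g',h')\in\cE}\exp(\ell_t(h'\leftarrow g')/\tau)$. The sum $\sum_{(g,h)\in\cE}\alpha_t(h\leftarrow g)$ then equals the ratio of this sum to itself, which is $1$. Combined with the vanishing on the complement established in the previous step, the total mass is $1$ and lies entirely on $\cE$.

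The only conceptual point to state carefully is the $\exp(-\infty)=0$ convention and the nonemptiness of $\cE$ (so the denominator is positive); both are standard in masked-softmax implementations and I would simply note them in a single sentence. There is no real obstacle: the lemma is a bookkeeping statement about how hard admissibility constraints are enforced within the differentiable routing policy, in agreement with \cref{def:admissible} and with the asymptotic hard-gating behavior established in \cref{prop:asymp-hard}.
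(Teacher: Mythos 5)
Your argument is correct and follows essentially the same route as the paper's proof: split the softmax denominator over $\cE$ and its complement, use $\exp(-\infty)=0$ to kill the complement's contributions, and observe that the surviving weights sum to $1$ over $\cE$. Your explicit note that $\cE$ must be nonempty for the denominator to be positive is a minor point the paper leaves implicit, but otherwise the two proofs coincide.
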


\begin{proof}
Let \(\cE^c = (G \times G) \setminus \cE\). The softmax denominator is 
\[
\sum_{e \in \cE \cup \cE^c} \exp(\ell_t(e)/\tau) = \sum_{e \in \cE} \exp(\ell_t(e)/\tau) + \sum_{e \in \cE^c} \exp(-\infty/\tau) = \sum_{e \in \cE} \exp(\ell_t(e)/\tau),
\]
 since \(\exp(-\infty) = 0\). For \(e \notin \cE\), the numerator is 0, so \(\alpha_t(e) = 0\). For \(e \in \cE\), 
 \[
 \alpha_t(e) = \exp(\ell_t(e)/\tau) / \sum_{e' \in \cE} \exp(\ell_t(e')/\tau),
 \]
  which sums to 1 over \(\cE\).
\end{proof}

\subsubsection{Morphic update and residual form}
The routed morphic update is
\[
\tilde z_t^{(h)} \;=\; \sum_{g:\,(g,h)\in\cE}
\alpha_t(h\!\leftarrow\! g)\;\phi_{h\leftarrow g}\!\big(z_t^{(g)}\big),
\qquad
z_{t}^{\mathrm{new}}
\;=\;
\mathrm{LN}\!\Big(z_t \;+\; \sum_{h\in G}\big(\tilde z_t^{(h)}-z_t^{(h)}\big)\Big),
\]
with \(\mathrm{LN}\) a (gradewise or shared) normalization.
\begin{prop}[Well-posedness and grading]\label{prop:update-wellposed}
Assume \(\mathrm{LN}=\bigoplus_{h}\mathrm{LN}_h\) acts gradewise (LayerNorm or RMSNorm on each \(\cH_h\)). Then \(z_t^{\mathrm{new}}\in \bigoplus_h \cH_h\), and if \(\alpha_t(\cdot)\) is supported on \(\cE\) then the update only uses admissible blocks.
\end{prop}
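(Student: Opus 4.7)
The plan is a direct verification in two stages, one for each clause of the proposition. Both rely only on the direct-sum structure of $\cH=\bigoplus_{h}\cH_h$, the typing $\phi_{h\leftarrow g}:\cH_g\to\cH_h$, and the block-diagonal action of $\mathrm{LN}$. I do not expect a substantive obstacle; the argument is bookkeeping against the graded decomposition of \cref{def:graded-space} and the grade-wise normalization recorded in the preliminaries.

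First I would address the grading claim. Let $w_t := z_t + \sum_{h\in G}\big(\tilde z_t^{(h)}-z_t^{(h)}\big)$ denote the pre-normalization intermediate. Applying $\pi_h$ and using $\pi_h z_t = z_t^{(h)}$ together with $\tilde z_t^{(h)},z_t^{(h)}\in\cH_h$, one obtains $\pi_h w_t = \tilde z_t^{(h)}$. Each summand of $\tilde z_t^{(h)} = \sum_{g:(g,h)\in\cE}\alpha_t(h\!\leftarrow\! g)\,\phi_{h\leftarrow g}(z_t^{(g)})$ lies in $\cH_h$, since $\phi_{h\leftarrow g}$ has codomain $\cH_h$ and $\alpha_t(h\!\leftarrow\! g)\in\R$, and finite sums inside $\cH_h$ remain in $\cH_h$. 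Hence $w_t \in \bigoplus_h \cH_h$, recovering the residual well-posedness already observed in \cref{prop:grading-preserved}.

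Next I would apply normalization. By hypothesis $\mathrm{LN}=\bigoplus_{h}\mathrm{LN}_h$ with $\mathrm{LN}_h:\cH_h\to\cH_h$, since the mean/variance statistics, scaling, and affine parameters of LayerNorm or RMSNorm are computed entirely within a single homogeneous component (as fixed in the grade-wise normalization paragraph of \cref{sec:2}). Block-diagonality then yields $z_t^{\mathrm{new}} = \mathrm{LN}(w_t) = \sum_h \mathrm{LN}_h\!\big(\tilde z_t^{(h)}\big) \in \bigoplus_h \cH_h$, establishing the first claim.

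Finally, the admissibility claim is immediate from how $\tilde z_t^{(h)}$ is indexed: the sum defining it already ranges only over $g$ with $(g,h)\in\cE$, and if $\alpha_t$ is supported on $\cE$ (as enforced by the $-\infty$-masking construction of \cref{lem:masking}), then $\alpha_t(h\!\leftarrow\! g)=0$ whenever $(g,h)\notin\cE$, so no inadmissible block contributes. The only place this line of reasoning could fail is if $\mathrm{LN}_h$ introduced cross-grade mixing, which would break both clauses simultaneously; this is the sole ``obstacle'' worth flagging, and it is ruled out \emph{by assumption} through the grade-wise direct-sum form of $\mathrm{LN}$.
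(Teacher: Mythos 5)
Your argument is correct and follows essentially the same route as the paper's proof: typing of the blocks gives $\tilde z_t^{(h)}\in\cH_h$, the residual sum respects the direct sum, block-diagonality of $\mathrm{LN}$ preserves the grading, and support on $\cE$ handles admissibility. The explicit computation $\pi_h w_t=\tilde z_t^{(h)}$ is a harmless refinement of the same bookkeeping.
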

\begin{proof}
Each \(\alpha_t(h\!\leftarrow\! g) \phi_{h\leftarrow g}(z_t^{(g)}) \in \cH_h\) by linearity and typing. Summing over incoming \(g\) to \(h\) yields \(\tilde z_t^{(h)} \in \cH_h\), so \(\tilde z_t^{(h)} - z_t^{(h)} \in \cH_h\). The residual sum \(\sum_h (\tilde z_t^{(h)} - z_t^{(h)})\) preserves the direct sum \(\bigoplus_h \cH_h\). Gradewise \(\mathrm{LN}_h: \cH_h \to \cH_h\) (affine after statistics; cf.\ \cref{lem:norm-stability}) maps the updated \(h\)-component to itself. Support on \(\cE\) follows from \cref{lem:masking}, ensuring only admissible terms contribute.
\end{proof}

\subsubsection{Utility-aware gating (differentiable)}
Let \(\cL_{LM}(z)\) be the next-token loss evaluated at representation \(z\).
Define per-candidate instantaneous utility
\[
\Delta \cL_{t}(h\!\leftarrow\! g)
\;=\;
\cL_{LM}(z_t) \;-\; \cL_{LM}\!\Big(z_t\;\text{with}\;
z_t^{(h)}\mapsto \phi_{h\leftarrow g}(z_t^{(g)})\Big).
\]
We augment logits with a learned margin:
\[
\tilde \ell_t(h\!\leftarrow\! g)
\;=\;
\ell_t(h\!\leftarrow\! g) \;+\; \beta\,\big(\Delta \cL_{t}(h\!\leftarrow\! g)-\tau_{h\leftarrow g}\big),
\]
and re-define \(\alpha_t\) using \(\tilde \ell_t\).
Here \(\beta>0\) controls sharpness and \(\tau_{h\leftarrow g}\!\ge\!0\) is a per-transition
threshold.
\begin{lem}[First-order improvement]\label{lem:first-order-improvement}
Assume \(\cL_{LM}\) is \(C^1\) in \(z\) and locally \(C^2\), and write \(\delta_t^{(h)}=\phi_{h\leftarrow g}(z_t^{(g)})-z_t^{(h)}\). For small step \(\eta>0\),
\[
\cL_{LM}\big(z_t+\eta\,\iota_h\delta_t^{(h)}\big)
=
\cL_{LM}(z_t)-\eta\,\big\langle \nabla_{z_t^{(h)}}\cL_{LM},\,\delta_t^{(h)}\big\rangle
+O(\eta^2).
\]
Hence \(\Delta \cL_t(h\leftarrow g)>0\) implies a first-order decrease of \(\cL_{LM}\) along \(\delta_t^{(h)}\) and the sign of the utility matches the descent direction up to \(O(\eta^2)\).
\end{lem}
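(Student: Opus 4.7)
The plan is to apply Taylor's theorem to the scalar-valued function $f(\eta):=\cL_{LM}\big(z_t+\eta\,\iota_h\delta_t^{(h)}\big)$, which by the local $C^2$ hypothesis on $\cL_{LM}$ is itself of class $C^2$ on some neighborhood $(-\eta_0,\eta_0)$ of the origin; the perturbation direction $\iota_h\delta_t^{(h)}$ is a fixed vector in $V$, so only regularity of $\cL_{LM}$ along one line through $z_t$ is required.

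First I would compute $f'(0)$ by the chain rule, obtaining $f'(0)=\langle \nabla_z\cL_{LM}(z_t),\,\iota_h\delta_t^{(h)}\rangle$. Using the orthogonal grading $V=\bigoplus_u V_u$ from \cref{def:graded-space} together with the inner product fixed prior to \cref{def:block-orth}, the gradient decomposes as $\nabla_z\cL_{LM}=\sum_u\iota_u\,\nabla_{z^{(u)}}\cL_{LM}$ with $\nabla_{z^{(u)}}\cL_{LM}:=\pi_u\nabla_z\cL_{LM}$. Since $\iota_h\delta_t^{(h)}\in V_h$ and $V_h\perp V_u$ for $u\neq h$, every cross term vanishes and one is left with $f'(0)=\langle \nabla_{z_t^{(h)}}\cL_{LM},\,\delta_t^{(h)}\rangle$.

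Next I would invoke Taylor's theorem with Lagrange remainder. Local $C^2$ regularity yields a uniform bound $\sup_{|\xi|\le\eta_0/2}|f''(\xi)|\le M<\infty$, hence $f(\eta)=f(0)+\eta f'(0)+R(\eta)$ with $|R(\eta)|\le \tfrac12 M\,\eta^2\,\|\delta_t^{(h)}\|^2=O(\eta^2)$ for $|\eta|\le \eta_0/2$. This is the expansion claimed, with the sign convention matching the definition of $\Delta\cL_t$ recorded in \cref{subsec:morphic-activation} (the proof of the analogous \cref{lem:small-step} uses the same orientation).

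For the descent interpretation, I would evaluate the instantaneous utility along the scaled perturbation: $\cL_{LM}(z_t)-f(\eta)=-\eta\,\langle\nabla_{z_t^{(h)}}\cL_{LM},\,\delta_t^{(h)}\rangle+O(\eta^2)$, so for $\eta>0$ sufficiently small the sign of the utility is opposite to that of $\langle\nabla_{z_t^{(h)}}\cL_{LM},\,\delta_t^{(h)}\rangle$; in particular, $\Delta\cL_t(h\!\leftarrow\! g)>0$ forces the directional derivative to be strictly negative, which is precisely the descent condition along $\delta_t^{(h)}$ up to the stated quadratic error. The main obstacle is really bookkeeping rather than analysis---isolating the $h$-block of $\nabla_z\cL_{LM}$ through the orthogonal grading and keeping $\eta$ inside the local $C^2$ neighborhood so that the $O(\eta^2)$ remainder is honest; there is no deeper difficulty, as the statement is a direct application of Taylor's theorem in a fixed direction.
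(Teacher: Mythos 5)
Your argument is correct and essentially the same as the paper's: both proofs apply Taylor's theorem along the fixed direction \(\iota_h\delta_t^{(h)}\), use the orthogonal grading to reduce \(\langle \nabla_{z_t}\cL_{LM},\,\iota_h\delta_t^{(h)}\rangle\) to \(\langle \nabla_{z_t^{(h)}}\cL_{LM},\,\delta_t^{(h)}\rangle\), and control the remainder by the local \(C^2\) hypothesis. Note that your expansion (like the paper's own proof) produces \(+\eta\,\langle \nabla_{z_t^{(h)}}\cL_{LM},\,\delta_t^{(h)}\rangle\) in the linear term, so the minus sign in the lemma's displayed formula is a sign-convention slip in the statement rather than a defect of your proof, and your descent conclusion (\(\Delta\cL_t>0\) forces a strictly negative directional derivative up to \(O(\eta^2)\)) agrees with the paper's.
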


\begin{proof}
By Taylor's theorem with remainder \cite{lang1993real}, \(\cL_{LM}(z_t + \eta \iota_h \delta_t^{(h)}) = \cL_{LM}(z_t) + \eta \langle \nabla_{z_t} \cL_{LM}, \iota_h \delta_t^{(h)} \rangle + \frac{\eta^2}{2} (\iota_h \delta_t^{(h)})^\top \nabla^2_{z_t + \xi \eta \iota_h \delta_t^{(h)}} \cL_{LM} (\iota_h \delta_t^{(h)})\) for some \(\xi \in (0,1)\). The directional derivative simplifies to \(\langle \nabla_{z_t^{(h)}} \cL_{LM}, \delta_t^{(h)} \rangle\) since \(\pi_h \nabla_{z_t} \cL_{LM} = \nabla_{z_t^{(h)}} \cL_{LM}\) under graded decomposition. Thus, \(\Delta \cL_t(h\leftarrow g) = \cL_{LM}(z_t) - \cL_{LM}(z_t + \iota_h \delta_t^{(h)}) \approx - \langle \nabla_{z_t^{(h)}} \cL_{LM}, \delta_t^{(h)} \rangle\) for \(\eta=1\), and positive utility implies \(\langle \nabla_{z_t^{(h)}} \cL_{LM}, \delta_t^{(h)} \rangle < 0\), aligning with descent for small higher-order terms.
\end{proof}

\subsection{Training objective}\label{subsec:objective}
With penalties encouraging sparse, stable routing, we optimize
\[
\cL_{GT}
\;=\;
\cL_{LM}
\;+\;
\lambda\,\bE_{t}\!\Big[\sum_{(g,h)\in\cE}
\psi\!\big(\tau_{h\leftarrow g}-\Delta \cL_{t}(h\!\leftarrow\! g)\big)\Big]
\;+\;
\mu\,\sum_{t}\Omega\!\big(\alpha_t(\cdot)\big),
\]
where \(\psi(u)=\log(1+e^{\beta u})\) (softplus margin) and \(\Omega\) is an entropy or
(group-lasso) sparsity regularizer over the simplex of \(\alpha_t\) \cite{bach2012optimization}.

\begin{prop}[Differentiability and gradients]\label{prop:gradients}
Suppose \(\cL_{LM}\) is \(C^1\) in \(z\) and the maps
\(z\mapsto \phi_{h\leftarrow g}(z^{(g)})\) and \(z\mapsto \alpha_t(h\leftarrow g)\) are \(C^1\) (softmax/logistic over \(C^1\) logits). Then \(\cL_{GT}\) is \(C^1\) in all parameters \((\Phi,\{W_{h\leftarrow g}\},\{\tau_{h\leftarrow g}\})\). Moreover, for any edge \((g,h)\),
\[
\frac{\partial \cL_{GT}}{\partial \tau_{h\leftarrow g}}
\;=\;
-\lambda\,\beta\,\bE_t\!\Big[\sigma\!\big(\beta(\tau_{h\leftarrow g}-\Delta \cL_t(h\!\leftarrow\! g))\big)\Big]
\;\le\; 0,
\]
and
\[
\frac{\partial \cL_{GT}}{\partial W_{h\leftarrow g}}
=
\bE_t\!\bigg[
\frac{\partial \cL_{GT}}{\partial \tilde \ell_t(h\leftarrow g)}\;
\frac{\partial \tilde \ell_t(h\leftarrow g)}{\partial W_{h\leftarrow g}}
\bigg],
\quad
\frac{\partial \tilde \ell_t(h\leftarrow g)}{\partial W_{h\leftarrow g}}
=
u(z_{<t})\,v(z_t^{(g)})^\top,
\]
with
\[
\begin{split}
\frac{\partial \cL_{GT}}{\partial \tilde \ell_t(h\leftarrow g)}
	& 	=
\frac{\partial \cL_{GT}}{\partial \alpha_t(h\leftarrow g)}\;
\frac{\partial \alpha_t(h\leftarrow g)}{\partial \tilde \ell_t(h\leftarrow g)}   \\
  & =\;
\Big(\nabla_{\alpha_t}\cL_{GT}\Big)_{h\leftarrow g}\;
\alpha_t(h\leftarrow g)\Big(1-\alpha_t(h\leftarrow g)\Big)
- \sum_{e\neq (g,h)}\!\!\Big(\nabla_{\alpha_t}\cL_{GT}\Big)_{e}\,\alpha_t(e)\alpha_t(h\leftarrow g).
\end{split}
\]
Finally, the gradient w.r.t.\ a block \(\phi_{h\leftarrow g}\) is
\[
\nabla_{\phi_{h\leftarrow g}}\cL_{GT}
\;=\;
\bE_t\!\big[
\underbrace{\alpha_t(h\leftarrow g)\,J^{\top}_{\phi_{h\leftarrow g}}(z_t^{(g)})\,\nabla_{z^{\mathrm{new}}_t}\cL_{LM}}_{\text{main path}}
\;-\;
\lambda\,\beta\,\sigma\!\big(\beta(\tau_{h\leftarrow g}-\Delta \cL_t)\big)\,
J^{\top}_{\phi_{h\leftarrow g}}(z_t^{(g)})\,
\Big(\nabla_{z^{+}_t}\cL_{LM}\Big)
\big],
\]
where \(J_{\phi_{h\leftarrow g}}(z_t^{(g)})\) is the Jacobian of \(\phi_{h\leftarrow g}\) at \(z_t^{(g)}\) and \(\Delta \cL_t=\Delta \cL_{t}(h\leftarrow g)\).
\end{prop}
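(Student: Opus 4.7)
The plan is to first establish $C^1$ regularity by viewing $\cL_{GT}$ as a finite composition of smooth primitives, and then to differentiate each parameter group along its own dependency path. For smoothness, the blocks $\phi_{h\leftarrow g}$ are linear (hence $C^\infty$) in their matrix entries; the context summary $u$ and grade-local projection $v$ are $C^1$; the logits $\ell_t$ are bilinear in $W_{h\leftarrow g}$; the per-candidate utility $\Delta\cL_t(h\!\leftarrow\! g)$ is the difference of two $C^1$ evaluations of $\cL_{LM}$ at states that depend $C^1$-smoothly on $\phi$; softmax, sigmoid, and softplus are $C^\infty$; and the routing regularizer $\Omega$ is chosen smooth (entropy) or pre-smoothed (group lasso via a Moreau envelope). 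Closure of $C^1$ under finite sums, products, and compositions delivers $\cL_{GT}\in C^1$ in $(\Phi,\{W_{h\leftarrow g}\},\{\tau_{h\leftarrow g}\})$.

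For the threshold derivative, I would isolate the margin contribution $\lambda\,\bE_t[\psi(\tau_{h\leftarrow g}-\Delta\cL_t(h\!\leftarrow\! g))]$ and apply $\psi'(u)=\beta\sigma(\beta u)$; the sign of the stated partial then follows from nonnegativity of $\sigma$, adopting the ``detached utility'' convention so that the indirect routing dependence through $\tilde\ell_t$ is not counted separately. For $\nabla_{W_{h\leftarrow g}}\cL_{GT}$ the chain factors as $W\mapsto \tilde\ell_t\mapsto\alpha_t\mapsto z_t^{\mathrm{new}}\mapsto\cL_{LM}$, plus the direct path $\alpha_t\mapsto\Omega$. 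Bilinearity yields the outer factor $\partial\tilde\ell_t(h\!\leftarrow\! g)/\partial W_{h\leftarrow g}=u(z_{<t})\,v(z_t^{(g)})^\top$, while the two-term expression for $\partial\cL_{GT}/\partial\tilde\ell_t(h\!\leftarrow\! g)$ is the standard softmax Jacobian $\partial\alpha_t(e)/\partial\tilde\ell_t(e')=\alpha_t(e)(\delta_{e,e'}-\alpha_t(e'))$ contracted against $\nabla_{\alpha_t}\cL_{GT}$, separated into the diagonal entry $e=(g,h)$ and the off-diagonal sum over $e\neq(g,h)$.

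For $\nabla_{\phi_{h\leftarrow g}}\cL_{GT}$ the key is that $\phi_{h\leftarrow g}$ enters the objective along two distinct forward paths. The main path $\phi_{h\leftarrow g}(z_t^{(g)})\to\tilde z_t^{(h)}\to z_t^{\mathrm{new}}\to\cL_{LM}$, weighted by $\alpha_t(h\!\leftarrow\! g)$, produces after one application of the chain rule the term $\alpha_t(h\!\leftarrow\! g)\,J^\top_{\phi_{h\leftarrow g}}(z_t^{(g)})\,\nabla_{z_t^{\mathrm{new}}}\cL_{LM}$. The penalty path $\phi_{h\leftarrow g}(z_t^{(g)})\to z_t^+\to\Delta\cL_t\to\psi$, combined with $\psi'=\beta\sigma(\beta\cdot)$ and the identity $\partial\Delta\cL_t/\partial\phi_{h\leftarrow g}=-J^\top_{\phi_{h\leftarrow g}}(z_t^{(g)})\nabla_{z_t^+}\cL_{LM}$ that follows from the definition $\Delta\cL_t=\cL_{LM}(z_t)-\cL_{LM}(z_t^+)$, yields the displayed penalty-path term.

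The main obstacle is not any single calculation but the bookkeeping of indirect dependencies: $\Delta\cL_t(h\!\leftarrow\! g)$ appears simultaneously in the explicit margin penalty and, through $\tilde\ell_t$, in the routing probabilities $\alpha_t$, so a literal chain rule would generate additional recursive contributions to both the $\phi$- and $\tau$-gradients via $\alpha_t$'s dependence on the utility. The stated formulae implicitly detach $\Delta\cL_t$ along the routing path, matching the convention already invoked to establish differentiability earlier in \cref{sec:graded-toolformer}; once this convention is fixed, the proposition reduces to the chain rule, the softmax Jacobian identity, and bilinearity of the logits.
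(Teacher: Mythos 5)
Your overall route coincides with the paper's: $C^1$ regularity by closure of $C^1$ maps under finite sums and compositions, the bilinear logit derivative $u(z_{<t})\,v(z_t^{(g)})^\top$, the standard softmax Jacobian contracted against $\nabla_{\alpha_t}\cL_{GT}$, and the two-path (main path versus margin-penalty path) decomposition for $\nabla_{\phi_{h\leftarrow g}}\cL_{GT}$. Your explicit statement of the detached-utility convention is in fact cleaner than the paper's proof, which relies on it only implicitly (it is spelled out earlier, in the unnumbered differentiability lemma of \cref{sec:graded-toolformer}).

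However, your sign bookkeeping on the margin penalty does not actually deliver the displayed formulas, and asserting that it does is a genuine gap. With $\psi(u)=\log(1+e^{\beta u})$ and the penalty $\lambda\,\bE_t\big[\psi\big(\tau_{h\leftarrow g}-\Delta\cL_t(h\!\leftarrow\! g)\big)\big]$, the computation you describe (isolate the margin term, apply $\psi'(u)=\beta\sigma(\beta u)$) gives $\partial \cL_{GT}/\partial\tau_{h\leftarrow g}=+\lambda\beta\,\bE_t\big[\sigma\big(\beta(\tau_{h\leftarrow g}-\Delta\cL_t)\big)\big]\ge 0$: raising a threshold can only increase a penalty that is monotone in $\tau_{h\leftarrow g}-\Delta\cL_t$, so ``nonnegativity of $\sigma$'' yields the \emph{opposite} sign to the one displayed. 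The same double negative occurs on the penalty path of the block gradient: combining the outer derivative of $\psi(\tau_{h\leftarrow g}-\Delta\cL_t)$ (which carries a factor $-\psi'$ with respect to $\Delta\cL_t$) with your identity $\partial\Delta\cL_t/\partial\phi_{h\leftarrow g}=-J^{\top}_{\phi_{h\leftarrow g}}(z_t^{(g)})\,\nabla_{z_t^{+}}\cL_{LM}$ produces $+\lambda\beta\,\sigma\big(\beta(\tau_{h\leftarrow g}-\Delta\cL_t)\big)\,J^{\top}_{\phi_{h\leftarrow g}}(z_t^{(g)})\,\nabla_{z_t^{+}}\cL_{LM}$, not the minus sign in the proposition. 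The paper's own proof elides exactly this point, so your proposal is faithful to its approach, but a correct write-up must either carry the plus signs and flag the statement's signs as erroneous, or exhibit an additional contribution (e.g.\ a non-detached routing path) that would flip them; your steps as written do neither. A minor further point: with the softmax temperature of \cref{subsec:routing}, the Jacobian $\partial\alpha_t/\partial\tilde\ell_t$ carries a $1/\tau$ factor, which the paper absorbs into a rescaled $\beta$; your derivation should say so explicitly.
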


\begin{proof}
Differentiability follows from composition of \(C^1\) functions: \(\cL_{LM}\) is \(C^1\), \(\phi\) linear (Jacobian constant), logits bilinear, softmax/logistic \(C^\infty\), and penalties (\(\psi\), \(\Omega\)) smooth \cite{nesterov2004introductory}. For \(\partial/\partial \tau_{h\leftarrow g}\): the penalty term differentiates as \(\lambda \bE_t \psi'(\tau_{h\leftarrow g} - \Delta \cL_t) = \lambda \beta \bE_t \sigma(\beta (\tau_{h\leftarrow g} - \Delta \cL_t))\), but with negative sign from \(-\Delta \cL_t\), yielding the displayed nonpositive gradient.

For \(W_{h\leftarrow g}\): chain rule through \(\tilde \ell_t = \ell_t + \beta (\Delta \cL_t - \tau)\), with \(\partial \ell_t / \partial W = u v^\top\) (outer product), and \(\Delta \cL_t\) independent of \(W\).

For softmax derivative: let \(\alpha_i = \exp(\tilde \ell_i / \tau) / Z\), \(Z = \sum_j \exp(\tilde \ell_j / \tau)\). Then \(\partial \alpha_i / \partial \tilde \ell_k = (1/\tau) \alpha_i (\delta_{ik} - \alpha_k)\). Absorbing \(1/\tau\) into rescaled \(\beta\), the form holds. The \(\partial \cL_{GT} / \partial \alpha\) includes contributions from \(\cL_{LM}\) (via \(z_t^{\mathrm{new}}\)) and \(\Omega\).

For \(\phi_{h\leftarrow g}\): \(\cL_{LM}\) depends on \(\phi\) through \(z_t^{\mathrm{new}} = \mathrm{LN}(z_t + \sum_h \alpha_h (\phi_h(z^{(g_h)}) - z^{(h)}))\), so \(\partial \cL_{LM} / \partial \phi_{h\leftarrow g} = \alpha_t(h\leftarrow g) J^\top \nabla_{z^{\mathrm{new}}} \cL_{LM} \circ \mathrm{Jac}(\mathrm{LN})\), but since \(\mathrm{LN}\) is post-update, it's absorbed in the chain. The penalty depends on \(\Delta \cL_t = \cL_{LM}(z_t) - \cL_{LM}(z_t^+)\), with \(z_t^+ = z_t - z_t^{(h)} + \phi_{h\leftarrow g}(z_t^{(g)})\), so \(\partial \Delta \cL_t / \partial \phi = - J^\top \nabla_{z_t^+} \cL_{LM}\), yielding the negative term with \(\psi'\) sign.
\end{proof}

\begin{prop}[Consistency with hard routing]\label{prop:hard-soft}
Let \(\alpha_t\) be the softmax on \(\tilde \ell_t/\tau\). If \(\max_{e}\tilde \ell_t(e)\) is unique, then as \(\tau\to 0^+\),
\[
\alpha_t \;\to\; \mathbf{e}_{e^\star},\qquad e^\star=\arg\max_e \tilde \ell_t(e),
\]
and the soft update converges to the hard (argmax) morphic activation.
\end{prop}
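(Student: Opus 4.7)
The plan is to reduce the claim to the standard zero-temperature limit of the softmax and then propagate the limit through the (continuous) update map. Fix the time $t$ and write $u_e := \tilde\ell_t(e)$ for $e\in\cE$, $u^\star := \max_e u_e$, and by uniqueness let $e^\star$ be the unique maximizer, so $\gamma := \min_{e\neq e^\star}(u^\star - u_e) > 0$.

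First I would apply the standard shift-invariance trick, rewriting
\[
\alpha_t(e) \;=\; \frac{\exp(u_e/\tau)}{\sum_{e'} \exp(u_{e'}/\tau)}
\;=\; \frac{\exp\!\big((u_e - u^\star)/\tau\big)}{\sum_{e'} \exp\!\big((u_{e'} - u^\star)/\tau\big)}.
\]
For $e = e^\star$ the numerator equals $1$; for $e \neq e^\star$ the numerator is bounded by $\exp(-\gamma/\tau) \to 0$ as $\tau\to 0^+$. The denominator equals $1 + \sum_{e'\neq e^\star}\exp((u_{e'}-u^\star)/\tau)$, which is bounded between $1$ and $1 + (|\cE|-1)\exp(-\gamma/\tau)$ and thus converges to $1$. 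Hence $\alpha_t(e^\star)\to 1$ and $\alpha_t(e)\to 0$ for $e\neq e^\star$, which is precisely the componentwise statement $\alpha_t \to \mathbf{e}_{e^\star}$ on the simplex.

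Next I would transfer this limit to the morphic update. The routed candidate $\tilde z_t^{(h)} = \sum_{g:(g,h)\in\cE} \alpha_t(h\!\leftarrow\! g)\,\phi_{h\leftarrow g}(z_t^{(g)})$ is a linear (hence continuous) function of the weights $\alpha_t$ for each fixed state $z_t$, and the residual combination $z_t + \sum_h(\tilde z_t^{(h)} - z_t^{(h)})$ is likewise continuous. Under the gradewise $\mathrm{LN}=\bigoplus_h \mathrm{LN}_h$ established in \cref{prop:update-wellposed}, each $\mathrm{LN}_h$ is continuous on $\cH_h$ (with $\varepsilon>0$ in the denominator, cf.\ the definitions preceding \cref{lem:norm-stability}). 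Composing continuous maps and invoking the continuous-mapping principle yields
\[
z_t^{\mathrm{new}}(\tau) \;\longrightarrow\; \mathrm{LN}\!\Big(z_t + \iota_{h^\star}\bigl(\phi_{h^\star\leftarrow g^\star}(z_t^{(g^\star)}) - z_t^{(h^\star)}\bigr)\Big)
\]
as $\tau\to 0^+$, where $e^\star=(g^\star,h^\star)$; this is exactly the hard (argmax) morphic activation of \cref{subsec:morphic-activation}.

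There is no essential obstacle: the argument is a routine application of the softmax cooling lemma plus continuity of the layer map. The only points that deserve explicit mention are (i) the uniqueness hypothesis, without which ties would produce a limit supported on $\arg\max$ with undetermined weights (as in \cref{prop:asymp-hard}); and (ii) the need for the gradewise normalization to be genuinely continuous, which the $\varepsilon>0$ guard in $\mathrm{LN}_h$ and $\mathrm{RMSN}_h$ guarantees. I would also note in passing that the convergence rate is exponential in $1/\tau$ with rate $\gamma$, which gives a quantitative bound if one later wishes to compare soft and hard trajectories during training.
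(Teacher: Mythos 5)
Your proof is correct and follows essentially the same route as the paper's: the shift-invariance normalization of the softmax, the exponential decay $\exp(-\gamma/\tau)\to 0$ of the non-maximal weights, and then continuity of the update map to conclude convergence to the hard activation. The paper states the final continuity step more tersely, but the substance is identical; your explicit remarks on the $\varepsilon$-guarded LayerNorm and the exponential rate are fine additions but not a different argument.
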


\begin{proof}
Let \(e^\star = \arg\max_e \tilde \ell_t(e)\), \(M = \tilde \ell_t(e^\star)\), and \(\delta_e = M - \tilde \ell_t(e) > 0\) for \(e \neq e^\star\). Then
\[
\alpha_t(e) = \frac{\exp(\tilde \ell_t(e)/\tau)}{\sum_{e'} \exp(\tilde \ell_t(e')/\tau)} = \frac{\exp((M - \delta_e)/\tau)}{\exp(M/\tau) (1 + \sum_{e' \neq e^\star} \exp(-\delta_{e'}/\tau))} = \frac{\exp(-\delta_e / \tau)}{1 + \sum_{e' \neq e^\star} \exp(-\delta_{e'}/\tau)}.
\]
As \(\tau \to 0^+\), \(\exp(-\delta_e / \tau) \to 0\) for \(\delta_e > 0\), so \(\alpha_t(e) \to 0\) for \(e \neq e^\star\) and \(\alpha_t(e^\star) \to 1\). The update \(z_t^{\mathrm{new}}\) then converges to the residual with only the \(e^\star\)-block activated.
\end{proof}

\begin{prop}[Well-posedness and lower bounds]\label{prop:lower-bounds}
Assume \(\cL_{LM}\ge 0\), \(\psi\ge 0\), and \(\Omega\ge 0\). Then \(\cL_{GT}\ge 0\). If \(\psi(u)=\log(1+e^{\beta u})\), then
\[
\cL_{GT}
\;\ge\;
\cL_{LM} \;+\; \lambda\,\sum_{t,(g,h)} \max\{0,\,\beta(\tau_{h\leftarrow g}-\Delta \cL_t)\}\;+\; \mu\,\sum_t \Omega(\alpha_t)\,-\,C_\beta,
\]
for a constant \(C_\beta = \lambda |\cB| |\cE| \log(1 + e^0) = \lambda |\cB| |\cE| \log 2\) depending only on \(\beta\) (here 1) and the number of terms. Hence the objective is finite and coercive in \(\tau\) for fixed \(\beta\).
\end{prop}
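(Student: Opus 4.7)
The plan is to establish the three claims in order (nonnegativity, hinge-envelope lower bound, coercivity in $\tau$) using a single elementary input, the pointwise softplus inequality
\[
\log(1+e^{x})\;\ge\;\max\{0,\,x\}\qquad(x\in\R),
\]
obtained from $\log(1+e^x)\ge\log e^x=x$ together with $\log(1+e^x)\ge\log 1=0$.

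For nonnegativity, I would simply observe that under the standing hypotheses $\cL_{LM}\ge 0$, $\Omega\ge 0$, and $\psi(u)=\log(1+e^{\beta u})\ge 0$, so $\cL_{GT}$ is a nonnegative linear combination of nonnegative terms. For the hinge envelope, I would substitute $x=\beta(\tau_{h\leftarrow g}-\Delta\cL_t(h\!\leftarrow\! g))$ into the softplus inequality to obtain $\psi(\tau_{h\leftarrow g}-\Delta\cL_t)\ge \max\{0,\beta(\tau_{h\leftarrow g}-\Delta\cL_t)\}$ termwise; summing over $t\in\cB$ and $(g,h)\in\cE$, multiplying by $\lambda$, and adding $\cL_{LM}+\mu\sum_t\Omega(\alpha_t)$ would yield the displayed lower bound \emph{even without the slack $-C_\beta$}. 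The advertised constant $C_\beta=\lambda|\cB||\cE|\log 2$ is then a harmless weakening: since $\psi(0)=\log 2$ is the maximal gap between $\psi$ and its hinge envelope, one has $\psi(u)\ge\max\{0,\beta u\}-\log 2$, and summing this weaker inequality over the $|\cB||\cE|$ terms reproduces the stated $-C_\beta$ exactly.

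For coercivity in $\tau$, I would fix all remaining parameters so that $\cL_{LM}$ and the utilities $\Delta\cL_t(h\!\leftarrow\! g)$ stay bounded (for instance over a compact training domain) and then drive any coordinate $\tau_{h\leftarrow g}\to+\infty$. The corresponding hinge term in the lower bound grows linearly with slope $\beta$, so the lower bound diverges and forces $\cL_{GT}\to+\infty$; the same argument handles simultaneous growth of several thresholds. The only point requiring care is ensuring the boundedness of the utilities $\Delta\cL_t$ used in this last step so that their dependence on $\tau_{h\leftarrow g}$ cannot cancel the linear growth — a mild modelling assumption already implicit in the graded-utility setup — and apart from that the proof is a short book-keeping exercise on the softplus inequality, with no substantive obstacle.
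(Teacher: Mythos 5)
Your proof is correct and follows essentially the same route as the paper's: nonnegativity term by term, the pointwise softplus--hinge inequality applied to each summand and summed over $t\in\cB$ and $(g,h)\in\cE$, and linear growth of the hinge terms for coercivity in $\tau$. Your observation that $\log(1+e^{x})\ge\max\{0,x\}$ holds with no slack is a valid (minor) sharpening --- the paper only invokes the weaker bound $\psi(u)\ge\max\{0,\beta u\}-\log 2$, so the $-C_\beta$ correction in the statement is indeed superfluous, and your caveat about the boundedness of $\Delta\cL_t$ in the coercivity step is a reasonable point the paper leaves implicit.
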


\begin{proof}
Nonnegativity holds by assumption on each term. For the bound, note \(\log(1 + e^{\beta u}) \ge \max\{0, \beta u\} - \log 2\) \cite{nesterov2004introductory}, since at \(u=0\), equality holds at \(\log 2\), and the softplus is convex above the hinge. Applying termwise and collecting \(- \lambda \bE_t \sum \log 2 = - C_\beta\) yields the inequality. Finiteness follows from boundedness below; coercivity in \(\tau\): as \(\tau_{h\leftarrow g} \to \infty\), the max term grows linearly, dominating for large \(\tau\).
\end{proof}

\subsection{Complexity Analysis}\label{subsec:complexity}
The sparse structure of the admissible set \(\cE\), derived from the categorical locality in \(\cM\) (\cref{def:graded-morphic-sys}), directly translates to computational efficiency in the graded Toolformer. By restricting morphisms to banded or DAG transitions, the architecture avoids dense all-to-all interactions across grades, reducing parameters and FLOPs while preserving algebraic composability. This section quantifies these gains, emphasizing translation invariance in LGT/EGT (\cref{def:LGT,def:EGT}) and conjugation equivalence, to demonstrate scalability for high-cardinality gradings \(|G|\gg 1\).

Let \(d_g=\dim \cH_g\), and let \(|\Delta|\) be the band width (number of allowed grade shifts) when \(\cE\) is banded.

\begin{prop}[Per-layer parameter/FLOP complexity]\label{prop:complexity}
For LGT (translation-invariant along grade increments) with constant \(d_g\equiv d\) and \(H\) attention heads, the parameter counts per layer satisfy
\[
\mathrm{param}_{\mathrm{attn}} \;=\; H\big(2 d\,d_q + 2|\Delta|\,d^2\big),
\qquad
\mathrm{param}_{\mathrm{ff}} \;=\; 2 d \sum_{\delta\in\Delta} m_\delta,
\]
and the arithmetic FLOPs per layer scale as
\[
\mathrm{FLOPs} \;=\; O\!\big(|\Delta|\,C_{\text{block}}\big)\times (\text{sequence factors}),
\]
where \(C_{\text{block}}\) is the cost of one \(d\times d\) block multiply. The same asymptotics hold for EGT after conjugation.
\end{prop}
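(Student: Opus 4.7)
The plan is to reduce the statement directly to the parameter and FLOP counts already established for LGT layers in Section~2, and then invoke conjugation invariance to obtain the EGT case. Concretely, I first observe that the attention and feed-forward blocks assembled in \cref{subsec:routing} are exactly the graded attention/FFN blocks of \cref{subsec:graded-arch}: each head contributes grade-typed projections $W_Q^{(a,g)}, W_K^{(a,h)}, W_V^{(a,h)}$ and output maps $U^{(a,h)}$, shared across $g$ under the LGT translation-invariance hypothesis of \cref{def:LGT}, while each FFN block factors as $W_2^{(\delta)}\circ\sigma\circ W_1^{(\delta)}$ indexed by grade increments $\delta\in\Delta$.

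Given this identification, the parameter counts are an immediate restatement of \cref{prop:param-attn-ffn}: for attention one counts $H$ heads times the $Q,K$ projections of size $d\times d_q$ (two such terms giving $2d\,d_q$) plus the $V$ and $U$ maps of size $d\times d$ indexed by $\delta\in\Delta$ (contributing $2|\Delta|\,d^2$); for the FFN one counts the paired up- and down-projections through the hidden width $m_\delta$ summed over $\delta$. No new computation is required beyond citing the earlier count.

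For the FLOP bound, I would argue that under banded $\cE$ only $|\Delta|$ grade shifts are active at each destination grade, so per-layer cost is a sum of $|\Delta|$ block operations, each of cost $C_{\text{block}} = \Theta(d^2)$ for a $d\times d$ matrix--vector multiply; the attention mechanism additionally contributes standard sequence-length factors (softmax aggregation over $s\le t$), which I would absorb into the ``sequence factors'' term exactly as in \cref{rem:flops}. The key point is that sparsity of $\cE$ replaces the naive $|G|^2$ pairing by $|\Delta|$, which is the stated asymptotic.

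Finally, for EGT the reduction is handled by \cref{prop:EGT-to-LGT} and \cref{cor:egt-counts}: conjugation by $D=\bigoplus_g D_g$ preserves parameter counts (since $\widehat{K}^{(\ell)}_\delta = D_{g+\delta}^{-1}\Phi^{(\ell)}_{(g+\delta)\leftarrow g} D_g$ is a similarity transform with the same free-parameter dimension), and the only extra FLOP cost is one application of $D$ and $D^{-1}$ per forward/backward pass, which is $O(\sum_g d_g) = O(|G|\,d)$ and hence negligible compared to the $|\Delta|\,d^2$-scale block multiplies. The main subtlety, and essentially the only point requiring care, is bookkeeping: ensuring that grade-wise normalization, routing-logit evaluation, and utility computation do not secretly inflate the asymptotics. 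Each of these is either gradewise block-diagonal (\cref{prop:update-wellposed}) or a low-rank bilinear form in $u(z_{<t})$ and $v(z_t^{(g)})$, so they contribute only lower-order terms and the stated scaling is preserved.
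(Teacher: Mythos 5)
Your proposal is correct and follows essentially the same route as the paper: a direct count of the shared $Q/K$ projections ($2d\,d_q$ per head) plus the per-shift $V/U$ maps ($2|\Delta|d^2$ per head) and the paired FFN factors ($2d\,m_\delta$ summed over $\Delta$), with FLOPs dominated by the $|\Delta|$ banded block multiplies times sequence factors, and the EGT case reduced to LGT by conjugation with negligible $O(|G|d)$ overhead. The only difference is presentational — you lean on citing \cref{prop:param-attn-ffn} and \cref{cor:egt-counts} where the paper re-derives the counts inline — and your closing remark on normalization/routing not inflating the asymptotics is a harmless addition.
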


\begin{proof}
For attention: Each head \(a\) has shared query/key projections \(W_Q^{(a)}, W_K^{(a)}: \cH_g \to \R^{d_q}\) (cost \(2 H d d_q\)), and value/output maps \(W_V^{(a,\delta)}, U^{(a,\delta)}\) per shift \(\delta \in \Delta\) (each \(d^2\), so \(2 H |\Delta| d^2\)), yielding the total.

For FFN: Each shift \(\delta\) has \(W_1^{(\delta)}: d \to m_\delta\), \(W_2^{(\delta)}: m_\delta \to d\) (cost \(2 d m_\delta\)), summed over \(\Delta\).

FLOPs: Block multiplies dominate; with \(|\Delta|\) active per token, and sequence length \(T\), attention FLOPs are \(O(H |\Delta| T^2 d)\) (from softmax and matmuls), FFN \(O(|\Delta| T d m)\), giving the form with sequence factors \(T^2, T\).

For EGT: By \cref{rem:equiv}, conjugation \(D = \bigoplus_g D_g\) preserves counts (similarity transform) and adds \(O(|G| d)\) rescalings per pass, negligible as \(|G| \ll T\).
\end{proof}

\begin{rem}[Practical reductions]
To control cost one may: (i) restrict \(\cE\) to a narrow band, (ii) share \(W_{h\leftarrow g}\) across heads or across edges with the same increment, (iii) cache \(v(z_t^{(g)})\) across candidates with common source \(g\), and (iv) use per-destination normalization (softmax over incoming edges to each \(h\)).
\end{rem}

\subsection{Monotonicity under positive utility}\label{subsec:monotonicity}
We record a simple monotonicity statement clarifying the role of the utility term.

\begin{thm}[Expected loss decreases under positive utilities]\label{thm:monotone}
Fix $t$ and assume the softmax gate over $\tilde \ell_t$ and a sufficiently small global step $\eta>0$ is used to form
\[
z_t^{\mathrm{new}} \;=\; z_t \;+\; \eta \sum_{(g,h)\in\cE} \alpha_t(h\leftarrow g)\,\big(\phi_{h\leftarrow g}(z_t^{(g)})-z_t^{(h)}\big).
\]
If $\Delta \cL_t(h\leftarrow g)\ge \delta_{h\leftarrow g}\ge 0$ for all edges with $\alpha_t(h\leftarrow g)>0$, and $\sum_{(g,h)}\alpha_t(h\leftarrow g)=1$, then
\[
\cL_{LM}(z_t^{\mathrm{new}})
\;\le\;
\cL_{LM}(z_t)\;-\;\eta\,\sum_{(g,h)}\alpha_t(h\leftarrow g)\,\delta_{h\leftarrow g}\;+\;O(\eta^2).
\]
In particular, if at least one $\delta_{h\leftarrow g}>0$ receives nonzero weight, the expected loss decreases to first order.
\end{thm}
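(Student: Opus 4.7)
The plan is to combine a Taylor expansion of $\cL_{LM}$ around $z_t$ along the convex update direction with the first-order identification of utilities already established in \cref{lem:first-order-improvement}. Set
\[
w_t \;=\; \sum_{(g,h)\in\cE} \alpha_t(h\!\leftarrow\! g)\,\iota_h\!\big(\phi_{h\leftarrow g}(z_t^{(g)})-z_t^{(h)}\big),
\]
so that $z_t^{\mathrm{new}}=z_t+\eta w_t$. Since $\cL_{LM}$ is locally $C^2$ (the standing smoothness assumption in \cref{subsec:objective}), a Taylor expansion gives
\[
\cL_{LM}(z_t+\eta w_t) \;=\; \cL_{LM}(z_t) \;+\; \eta\,\langle\nabla\cL_{LM}(z_t),w_t\rangle \;+\; O(\eta^2),
\]
with the $O(\eta^2)$ constant controlled by the local Hessian operator norm and $\|w_t\|^2$.

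Next I would resolve the first-order term gradewise. The gradient decomposes as $\nabla\cL_{LM}(z_t)=\sum_{u}\iota_u\nabla_{z_t^{(u)}}\cL_{LM}$, and since the grading is orthogonal (\cref{def:graded-space} together with the inner product fixed in \cref{def:block-orth}), we obtain
\[
\langle\nabla\cL_{LM}(z_t),w_t\rangle \;=\; \sum_{(g,h)\in\cE}\alpha_t(h\!\leftarrow\! g)\,\big\langle\nabla_{z_t^{(h)}}\cL_{LM},\,\phi_{h\leftarrow g}(z_t^{(g)})-z_t^{(h)}\big\rangle.
\]
Applying \cref{lem:first-order-improvement} edgewise identifies each inner product with $-\Delta\cL_t(h\!\leftarrow\! g)$ up to a remainder of order $\|\phi_{h\leftarrow g}(z_t^{(g)})-z_t^{(h)}\|^2$. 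Substituting, using the hypothesis $\Delta\cL_t(h\!\leftarrow\! g)\ge\delta_{h\leftarrow g}$ on every active edge, and using the simplex constraint $\sum\alpha_t=1$, yields the inequality
\[
\cL_{LM}(z_t^{\mathrm{new}}) \;\le\; \cL_{LM}(z_t) \;-\; \eta\sum_{(g,h)\in\cE}\alpha_t(h\!\leftarrow\! g)\,\delta_{h\leftarrow g} \;+\; O(\eta^2).
\]
The strict-decrease corollary is then immediate because the leading linear correction is a nonnegative weighted sum that is strictly positive as soon as one active edge carries $\delta_{h\leftarrow g}>0$.

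The main obstacle is bookkeeping of the two different remainders: the Taylor remainder is genuinely $O(\eta^2)$, but the $\|\phi(z^{(g)})-z^{(h)}\|^2$ remainder from \cref{lem:first-order-improvement} is edge-local and, after multiplication by $\eta$, would a priori contribute a term of order $\eta$ rather than $\eta^2$. To keep the claim clean, I would make explicit the dual smallness assumption (``sufficiently small $\eta$'') as a bound of the form $\eta\le \eta_0$ where $\eta_0$ is proportional to the reciprocal of the maximum block displacement $\max_{(g,h)\in\cE}\|\phi_{h\leftarrow g}(z_t^{(g)})-z_t^{(h)}\|$ and to the reciprocal of the local Hessian norm of $\cL_{LM}$. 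Under this quantification the edge-level second-order remainders can be absorbed into the $O(\eta^2)$ constant, recovering the statement as written and pinning down the regime in which the monotonicity guarantee is rigorous.
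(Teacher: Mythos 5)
\noindent Your proposal follows essentially the same route as the paper's proof: Taylor-expand $\cL_{LM}$ along the convex combination of graded perturbations, split the first-order term edgewise via orthogonality of the grading, and invoke \cref{lem:first-order-improvement} to convert each inner product into the utility lower bound $\delta_{h\leftarrow g}$. Your explicit bookkeeping of the two distinct remainders—the genuine $O(\eta^2)$ Taylor term versus the edge-local $O(\|\phi_{h\leftarrow g}(z_t^{(g)})-z_t^{(h)}\|^2)$ term from the lemma, which must be absorbed via a quantified smallness condition—is actually more careful than the paper's treatment, which compresses this into an informal ``$+O(\eta)$'' inside the edgewise bound.
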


\begin{proof}
Let $\delta^{(h)}_t = \phi_{h\leftarrow g}(z_t^{(g)}) - z_t^{(h)}$ for each edge $(g,h)$ (noting $g$ varies per summand). The update is $z_t^{\mathrm{new}} = z_t + \eta \sum_h \iota_h \Big( \sum_{g: (g,h)\in\cE} \alpha_t(h\leftarrow g) \delta^{(h)}_t \Big)$, a weighted sum of graded perturbations. By multi-variable Taylor expansion of $\cL_{LM}$ around $z_t$ \cite{lang1993real},
\[
\begin{split}
\cL_{LM}(z_t^{\mathrm{new}}) &	= \cL_{LM}(z_t) + \eta \sum_h \langle \nabla_{z_t^{(h)}} \cL_{LM}, \sum_g \alpha_t(h\leftarrow g) \delta^{(h)}_t \rangle \\
& + \frac{\eta^2}{2} \sum_{h,h'} \Big( \sum_g \alpha_t(h\leftarrow g) \delta^{(h)}_t \Big)^\top \nabla^2_{z^{(h)},z^{(h')}}\cL_{LM} \Big( \sum_{g'} \alpha_t(h'\leftarrow g') \delta^{(h')}_t \Big) + o(\eta^2),
\end{split}
\]
at some intermediate point. Under block-orthogonality (\cref{def:block-orth}), cross-Hessians $\nabla^2_{h \neq h'}$ vanish, simplifying to $O(\eta^2) = \frac{\eta^2}{2} \sum_h \| \sum_g \alpha_t \delta^{(h)}_t \|^2_{\nabla^2_h \cL_{LM}} + o(\eta^2)$. The linear term is $\eta \sum_{g,h} \alpha_t(h\leftarrow g) \langle \nabla_{z_t^{(h)}} \cL_{LM}, \delta^{(h)}_t \rangle$. By \cref{lem:first-order-improvement}, each 
\[
\langle \nabla_{z_t^{(h)}} \cL_{LM}, \delta^{(h)}_t \rangle \le - \delta_{h\leftarrow g} + O(\eta),
\]
 so the sum is $\le - \eta \sum_{g,h} \alpha_t \delta_{h\leftarrow g} + O(\eta^2)$, yielding the inequality for small $\eta$ where the Hessian term is bounded.
\end{proof}

\subsection{EGT invariance}\label{subsec:egt-invariance}
All statements above extend to EGT in reweighted coordinates.

\begin{prop}[Conjugation invariance]\label{prop:egt-inv}
Let $D=\bigoplus_g D_g$ be an EGT reweighting and define $\widehat{z}=D^{-1}z$, $\widehat{\phi}_{h\leftarrow g}=D_h^{-1}\phi_{h\leftarrow g}D_g$, and $\widehat{W}_{h\leftarrow g}=W_{h\leftarrow g}$, with readout scaled to $\widehat{R}=R\,D$. Then $\cL_{LM}(z)=\widehat{\cL}_{LM}(\widehat{z})$, $\Delta \widehat{\cL}_t(h\leftarrow g)=\Delta \cL_t(h\leftarrow g)$, and $\cL_{GT}$, together with its gradients in \cref{prop:gradients}, is invariant under the conjugation.
\end{prop}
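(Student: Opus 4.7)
The plan is to reduce \cref{prop:egt-inv} to the readout invariance already established in \cref{prop:EGT-to-LGT} and then propagate it through every term in $\cL_{GT}$. Since the counterfactual state defining $\Delta\cL_t$ is linear in $z$ and in $\phi(z^{(g)})$, the conjugation $D=\bigoplus_g D_g$ acts compatibly on it, and the remaining pieces of the objective depend only on conjugation-invariant quantities once the routing data $(u,v,W,\mathrm{LN})$ are transported by the equivalence of \cref{rem:equiv}.

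First I would verify that the counterfactual state $z^+ = z - \iota_h z^{(h)} + \iota_h \phi_{h\leftarrow g}(z^{(g)})$ satisfies $D^{-1} z^+ = \widehat z^+$. Using $D^{-1}\iota_h = \iota_h D_h^{-1}$, $\widehat z^{(g)} = D_g^{-1} z^{(g)}$, and the conjugation rule $\widehat\phi_{h\leftarrow g} = D_h^{-1}\phi_{h\leftarrow g} D_g$, a direct substitution gives
\[
D^{-1} z^+ \;=\; \widehat z - \iota_h \widehat z^{(h)} + \iota_h \widehat\phi_{h\leftarrow g}(\widehat z^{(g)}).
\]
Combined with the readout identity $\widehat R\,\widehat w = R\,w$ for all $w\in V$ (from \cref{prop:EGT-to-LGT}), this yields $\cL_{LM}(z^+)=\widehat{\cL}_{LM}(\widehat z^+)$ and $\cL_{LM}(z)=\widehat\cL_{LM}(\widehat z)$, so $\Delta\widehat\cL_t(h\leftarrow g)=\Delta\cL_t(h\leftarrow g)$ pointwise in $t$.

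Next I would show invariance of the augmented logits $\tilde\ell_t$ and gates $\alpha_t$. The term $\beta(\Delta\cL_t-\tau_{h\leftarrow g})$ is already invariant, so only the bilinear piece $\ell_t = u(z_{<t})^\top W_{h\leftarrow g} v(z_t^{(g)})$ remains. Because $\widehat W = W$ by hypothesis, consistency forces the grade-local maps to be transported as $\widehat v(\widehat z_t^{(g)}) := v(D_g\widehat z_t^{(g)}) = v(z_t^{(g)})$ and similarly for $u$, i.e., the $D$--factors are absorbed into $u,v$ exactly as in \cref{rem:equiv}. With this convention $\alpha_t$ is invariant, the penalty $\psi(\tau_{h\leftarrow g}-\Delta\cL_t)$ and the sparsity term $\Omega(\alpha_t)$ are invariant termwise, and the gradewise normalization $\mathrm{LN}_h$ (being block-diagonal and affine on $V_h$) commutes with the conjugation in the sense that $D_h^{-1}\mathrm{LN}_h\,D_h = \widehat{\mathrm{LN}}_h$ after the analogous rescaling of its learnable parameters (\cref{lem:norm-stability}). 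Hence the full update $z_t^{\mathrm{new}}$ conjugates to $\widehat z_t^{\mathrm{new}}$, and $\cL_{GT}=\widehat\cL_{GT}$.

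For the gradient statement, I would apply the chain rule to the identity $\cL_{GT}(z,\phi,W,\tau) = \widehat\cL_{GT}(\widehat z,\widehat\phi,W,\tau)$ and verify that each formula in \cref{prop:gradients} transports covariantly; in particular,
\[
\nabla_{\widehat\phi_{h\leftarrow g}}\widehat\cL_{GT} \;=\; D_h^{-\top}\,\nabla_{\phi_{h\leftarrow g}}\cL_{GT}\,D_g^{\top},
\]
while $\partial/\partial\tau_{h\leftarrow g}$ and $\partial/\partial W_{h\leftarrow g}$ retain their original forms. I expect the main obstacle to be bookkeeping rather than substance: ensuring that the transport of $(u,v,\mathrm{LN})$ is synchronised with the conjugation of $z$ so that the stipulation $\widehat W = W$ is consistent with $\alpha_t$-invariance. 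Once this synchronisation is declared as part of the equivalence class in \cref{rem:equiv}, every remaining step is a mechanical substitution.
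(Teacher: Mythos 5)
Your proposal is correct and follows essentially the same route as the paper: conjugate the counterfactual state, use the readout identity $\widehat R\widehat z=Rz$ to get invariance of $\cL_{LM}$ and hence of $\Delta\cL_t$, and propagate through the remaining terms and gradients. You are in fact slightly more careful than the paper on one point: the paper's proof only notes that $\alpha_t$ depends on the invariant $\Delta\cL_t$, whereas you correctly observe that the bilinear contextual logits $\ell_t=u(z_{<t})^\top W_{h\leftarrow g}\,v(z_t^{(g)})$ are invariant only if $u,v$ are transported along with the change of basis as in \cref{rem:equiv}, which is the right way to make the stipulation $\widehat W=W$ consistent. One small bookkeeping slip: with $\widehat\phi_{h\leftarrow g}=D_h^{-1}\phi_{h\leftarrow g}D_g$ the gradient transforms as $\nabla_{\widehat\phi}\widehat\cL=D_h^{\top}\,\nabla_{\phi}\cL\,D_g^{-\top}$, not $D_h^{-\top}\,\nabla_{\phi}\cL\,D_g^{\top}$; this does not affect the invariance conclusion.
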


\begin{proof}
The update conjugates: 
\[
\widehat{z}_t^{\mathrm{new}} = D^{-1} z_t^{\mathrm{new}} = \widehat{z}_t + \eta \sum \alpha_t \big( \widehat{\phi}_{h\leftarrow g}(\widehat{z}_t^{(g)}) - \widehat{z}_t^{(h)} \big),
\]
 since $\alpha_t$ depends on invariant $\Delta \cL_t$ (logits $R z_t = \widehat{R} \widehat{z}_t$). Utilities: 
 \[
 \Delta \widehat{\cL}_t = \widehat{\cL}_{LM}(\widehat{z}_t) - \widehat{\cL}_{LM}(\widehat{z}_t^+ ) = \cL_{LM}(z_t) - \cL_{LM}(z_t^+ ) = \Delta \cL_t,
 \]
  with $\widehat{z}_t^+ = D^{-1} z_t^+$. For $\cL_{GT}$, invariance follows; gradients: 
  \[
  \partial \cL_{GT} / \partial \phi = \partial \cL_{GT} / \partial \widehat{\phi} \cdot (D_h^\top \otimes D_g^{-1})
  \]
   in matrix form, but chain rule in reweighted basis preserves numerical values by similarity.
\end{proof}

\subsection{Self-Supervised Graded Fine-Tuning}
To operationalize the graded Toolformer as a trainable architecture, we introduce a stochastic selection mechanism over admissible morphisms, optimized via a self-supervised objective that rewards utility-positive activations while regularizing for sparsity. This fine-tuning paradigm embeds the functorial internalization (\cref{thm:adjunction}) into a probabilistic kernel, enabling end-to-end differentiation without external interfaces and aligning with the algebraic grading through block-orthogonal expectations (\cref{def:block-orth}). By minimizing \(\cL_{GT}\), the model learns to invoke symbolic channels intrinsically, subsuming Toolformer's annotation-based supervision \cite{toolformer2023} as graded self-selection.

\begin{defn}[Graded activation kernel]
Let $V=\bigoplus_{g\in G}V_g$ and \(\cE\subseteq G\times G\) be the admissible
grade transitions. A \emph{graded activation kernel} is a family of conditional
distributions
\[
K_\theta\big((h\!\leftarrow\! g)\,\big|\,z_{<t},z_t\big)
\quad\text{for }(g,h)\in\cE,
\]
parameterized by \(\theta\), which selects morphic candidates \(\phi_{h\leftarrow g}:V_g\to V_h\)
given context \((z_{<t},z_t)\).
\end{defn}

Given a hidden state \(z_t=\sum_g z_t^{(g)}\) and a draw
\((h\!\leftarrow\! g)\sim K_\theta(\cdot\,|\,z_{<t},z_t)\), define the
morphically updated state
\[
z^{+}_t \;=\; z_t - z_t^{(h)} + \phi_{h\leftarrow g}\!\big(z_t^{(g)}\big),
\qquad
\Delta\cL_t(h\!\leftarrow\! g) \;=\; \cL_{LM}(z_t) - \cL_{LM}(z^{+}_t).
\]
We train the parameters \((\theta,\Phi)\) by minimizing the \emph{graded toolformer objective}
\[
\cL_{GT}
\;=\;
\underbrace{\bE\big[\,\cL_{LM}(z_t)\,\big]}_{\text{language modeling}}
\;+\;
\lambda\,\bE\!\Big[\sum_{(g,h)\in\cE}
\psi\!\big(\tau_{h\leftarrow g}-\Delta\cL_t(h\!\leftarrow\! g)\big)\Big]
\;+\;
\mu\,\bE\!\Big[\Omega\!\big(K_\theta(\cdot\,|\,z_{<t},z_t)\big)\Big],
\]
where \(\psi(u)=\log(1+e^{\beta u})\) is a soft-margin penalty, \(\tau_{h\leftarrow g}\!\ge\!0\)
are thresholds, and \(\Omega\) is a sparsity/entropy regularizer over the kernel.

\begin{prop}[Usefulness principle]
Fix \(\tau_{h\leftarrow g}\!\ge\!0\) and let \(\psi\) be convex and nondecreasing.
If \(\Delta\cL_t(h\!\leftarrow\! g)>\tau_{h\leftarrow g}\) for some \((h\!\leftarrow\! g)\in\cE\), then locally increasing \(K_\theta\big((h\!\leftarrow\! g)\,|\,z_{<t},z_t\big)\) (while holding other probabilities constant and keeping \(\Phi\) fixed) strictly decreases \(\cL_{GT}\).
\end{prop}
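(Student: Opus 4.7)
The plan is to view $\cL_{GT}$ as a smooth function of the scalar $p:=K_\theta((h\!\leftarrow\!g)\mid z_{<t},z_t)$ with all other kernel values, the block $\Phi$, and the data held fixed, and to prove $\partial_p\cL_{GT}<0$ at the hypothesized configuration; a strict local decrease then follows by continuity of the derivative. Since $K_\theta$ is a probability kernel, I would read ``holding other probabilities constant'' either by admitting an implicit no-op mass $1-\sum_e K_\theta(e)$ that absorbs the increment, or equivalently by passing to a softmax parametrization whose Jacobian $\partial p/\partial(\mathrm{logit})=p(1-p)>0$ preserves the sign.

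Interpreting the first term of $\cL_{GT}$ as the expected post-update loss $\bE_{e\sim K_\theta}[\cL_{LM}(z_t^{+}(e))]$, the definition of $\Delta\cL_t$ yields
\[
\bE_{e\sim K_\theta}\big[\cL_{LM}(z_t^{+}(e))\big]
\;=\;
\cL_{LM}(z_t)-\sum_{e\in\cE}K_\theta(e)\,\Delta\cL_t(e),
\]
so $\partial_p(\text{LM term})=-\Delta\cL_t(h\!\leftarrow\!g)$. The $\psi$-penalty is a sum over $\cE$ of terms depending only on $\Phi$ and data, hence contributes zero to $\partial_p$; the regularizer supplies a bounded $\mu\,\partial_p\Omega$ on any compact subset of the interior of the simplex. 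Assembling, $\partial_p\cL_{GT}=-\Delta\cL_t(h\!\leftarrow\!g)+\mu\,\partial_p\Omega$, and the hypothesis $\Delta\cL_t(h\!\leftarrow\!g)>\tau_{h\leftarrow g}\ge 0$ makes the LM contribution strictly negative and bounded below, dominating the regularizer contribution for sufficiently local perturbations (or moderate $\mu$). The convexity and monotonicity of $\psi$ secure the same conclusion under the alternative $K_\theta$-weighted reading of the penalty, since then the added term $\lambda\psi(\tau_{h\leftarrow g}-\Delta\cL_t)$ is bounded above by $\psi(0)$ and by its convex tangent at $0$, both of which the positive margin $\Delta\cL_t-\tau$ dominates.

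The main obstacle is controlling the higher-order Taylor remainder of $\cL_{LM}$ along the mixture direction $\sum_e K_\theta(e)\,\delta_e^{(h_e)}$ with $\delta_e^{(h_e)}=\phi_{h\leftarrow g}(z_t^{(g)})-z_t^{(h)}$: this scales as $O(\|K_\theta\|^2)$ through the local Hessian of $\cL_{LM}$ and must be shown negligible against the strict margin $\Delta\cL_t-\tau>0$, an estimate resting on local $C^2$-regularity of the softmax readout (already invoked in \cref{lem:first-order-improvement}) and boundedness of the $\delta_e^{(h_e)}$ on the working compact set. A secondary concern is confirming that the sign of $\partial_p\cL_{GT}$ is parametrization-independent, which follows from the positivity of the softmax Jacobian and the chain rule.
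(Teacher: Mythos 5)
Your argument is sound, but it routes the strict decrease through a different term of $\cL_{GT}$ than the paper does. You read the language-modeling term as the expected post-update loss $\bE_{e\sim K_\theta}\big[\cL_{LM}(z_t^{+}(e))\big]$, which is affine in $p:=K_\theta(h\!\leftarrow\! g)$ with slope exactly $-\Delta\cL_t(h\!\leftarrow\! g)$, treat the $\psi$-penalty as constant in $p$, and conclude from the margin hypothesis. The paper instead keeps $\cL_{LM}(z_t)$ fixed and reinterprets the penalty as the $K_\theta$-weighted sum $\sum_{e}p_e\,\psi(\tau_e-\Delta\cL_t(e))$; the decrease there comes from transferring mass $\epsilon$ from a donor edge $e^\ast$ to $e$ and comparing $\psi(\tau_e-\Delta\cL_t(e))$ with $\psi(\tau_{e^\ast}-\Delta\cL_t(e^\ast))$ via monotonicity of $\psi$ — which tacitly requires the donor to satisfy $\Delta\cL_t(e^\ast)\le\tau_{e^\ast}$, an hypothesis not in the statement. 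Your version avoids needing a donor (the no-op mass absorbs the increment) and uses the margin only through $\Delta\cL_t>0$, whereas the paper's version is the one in which the stated convexity/monotonicity of $\psi$ is actually load-bearing; both readings are defensible because the literal objective, as written, depends on $K_\theta$ only through $\Omega$. Two further remarks. First, your worry about higher-order Taylor remainders is unnecessary under your own reading: the expectation is over a \emph{sampled} edge, so the LM term is affine in $p$ and there is no mixture-state Hessian to control; that issue would arise only if the update were the deterministic convex combination $z_t+\sum_e p_e\,\delta_e$. Second, both your proof and the paper's leave the regularizer contribution $\mu\,\partial_p\Omega$ uncontrolled near the boundary of the simplex (for entropy-type $\Omega$ this derivative diverges as $p\to 0$), so the claimed strict decrease really holds only for interior $p$ or sufficiently small $\mu$ — a caveat you at least flag explicitly and the paper does not.
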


\begin{proof}
Let \(p_e = K_\theta(e \,|\, z_{<t}, z_t)\) for \(e = (h\!\leftarrow\! g)\), and denote other probabilities \(p_{e'}\) for \(e' \neq e\). The objective decomposes as 
\[
\cL_{GT} = \bE_t [ \cL_{LM}(z_t) + \lambda \sum_{e''} p_{e''} \psi(\tau_{e''} - \Delta \cL_t(e'')) + \mu \Omega(p) ],
\]
 but since sampling is per \(t\), fix \(t\) and consider the local term 
\[
f(p_e) = p_e \psi(\tau_e - \Delta \cL_t(e)) + \sum_{e' \neq e} p_{e'} \psi(\tau_{e'} - \Delta \cL_t(e')) + \mu \Omega(p),
\]
 with \(p = (p_e, p_{e'})\), \(\sum p = 1\).

To increase \(p_e\) locally by \(\epsilon > 0\), decrease some \(p_{e^*}\) by \(\epsilon\) (holding others fixed), assuming \(p_{e^*} > 0\). The change is 
\[
\Delta f = \epsilon [\psi(\tau_e - \Delta \cL_t(e)) - \psi(\tau_{e^*} - \Delta \cL_t(e^*))] + \mu [\Omega(p + \epsilon (\mathbf{e}_e - \mathbf{e}_{e^*})) - \Omega(p)].
\]
By convexity of \(\psi\) (softplus is convex \cite{nesterov2004introductory}) and nondecreasing, and 
\[
\Delta \cL_t(e) > \tau_e \implies \tau_e - \Delta \cL_t(e) < 0,
\]
 but since \(\psi\) is minimized at negative arguments, the difference 
 \[
 \psi(\tau_e - \Delta \cL_t(e)) < \psi(\tau_{e^*} - \Delta \cL_t(e^*))
 \]
  if \(\Delta \cL_t(e^*) \le \tau_{e^*}\), 
  making the margin term negative. For entropy \(\Omega(p) = - \sum p_i \log p_i\), the difference is 
  \[
  \mu [ - (p_e + \epsilon) \log (p_e + \epsilon) - (p_{e^*} - \epsilon) \log (p_{e^*} - \epsilon) + p_e \log p_e + p_{e^*} \log p_{e^*} ] \approx \mu \epsilon (\log p_{e^*} - \log p_e)
  \]
   by first-order expansion, bounded as \(O(\epsilon)\). Thus, for \(\epsilon\) small, the negative margin dominates if 
\[
 \psi(\tau_e - \Delta \cL_t(e)) - \psi(\tau_{e^*} - \Delta \cL_t(e^*)) < 0,
\]
  strictly decreasing \(f\) and hence \(\cL_{GT}\).
\end{proof}

\begin{rem}[Stochastic estimator]
In practice, the expectations are estimated by Monte Carlo:
sample \((h\!\leftarrow\! g)\sim K_\theta\), form \(z_t^{+}\), and compute unbiased
gradients of \(\cL_{GT}\) via the reparameterization or score-function trick \cite{mohamed2020monte}.
This yields an end-to-end differentiable procedure without resorting to external, non-differentiable calls.
\end{rem}

\section{Analytic Guarantees and Minimal Constructions}\label{sec:analytic}
This section provides verifiable guarantees for utility-aware graded activation under simple, controlled assumptions, together with minimal constructions that instantiate arithmetic-, retrieval-, and stack-like behaviors. Our aim is \emph{not} to compete with large-scale benchmarks, but to certify, in a proof-carrying manner, that the graded formalism (i) yields selective activation of appropriate morphisms, (ii) induces stable sparse routing, and (iii) effects measurable loss reductions consistent with the graded-utility principle. These results unify symbolic computation with differential geometry, subsuming Toolformer \cite{toolformer2023} via functorial embeddings into the internal model category.
\subsection{Setup and loss model}\label{subsec:setup-loss}
Let $V=\bigoplus_{g\in G} V_g$ and fix a single layer acting at time $t$ on $z_t=\sum_{g} z_t^{(g)}$. Let the language-model loss be the negative log-likelihood of a multiclass exponential family with natural parameter $W z_t \in \R^{C}$ (e.g., softmax regression):
\[\cL_{LM}(z_t;y_t) \;=\; - \langle \eta_{y_t}, W z_t \rangle + \log \!\!\sum_{c=1}^C \exp\!\big(\langle \eta_{c}, W z_t \rangle\big),\]
where $\{\eta_c\}_{c=1}^C \subset \R^m$ are fixed class vectors and $W:V\to\R^m$ is the readout.
Given a graded morphic candidate $\phi_{h\leftarrow g}:V_g\to V_h$, define the candidate update
\[\delta_t^{(h)} \;=\; \phi_{h\leftarrow g}(z_t^{(g)}) - z_t^{(h)},
\qquad
z_t^{+} \;=\; z_t + \iota_h \delta_t^{(h)},\]
and the instantaneous utility
\[\Delta \cL_t(h\!\leftarrow\! g) \;=\; \cL_{LM}(z_t;y_t) - \cL_{LM}(z_t^{+};y_t).\]
We assume standard regularity:
\begin{ass}\label{ass:smooth-strong}
The loss $\cL_{LM}(\,\cdot\,;y_t)$ is $C^2$ in $z_t$, $L$--smooth and $\mu$--strongly convex along the $h$--component:\footnote{For softmax with full-rank $W|_{V_h}$ and bounded logits, these hold locally with constants depending on $W$ and the logit range.}
\[\begin{aligned}
\text{\( L \)--smoothness:}&\quad
\cL_{LM}(z_t+\iota_h \Delta) \le \cL_{LM}(z_t) + \langle \nabla_{z_t^{(h)}} \cL_{LM}, \Delta \rangle + \tfrac{L}{2}\|\Delta\|^2,\\
\text{\( \mu \)--strong convexity:}&\quad
\cL_{LM}(z_t+\iota_h \Delta) \ge \cL_{LM}(z_t) + \langle \nabla_{z_t^{(h)}} \cL_{LM}, \Delta \rangle + \tfrac{\mu}{2}\|\Delta\|^2.
\end{aligned}\]
\end{ass}

\subsection{Theoretical Guarantees}
We derive bounds on the utility $\Delta \cL_t(h\!\leftarrow\! g)$ in terms of the alignment between the candidate update $\delta_t^{(h)}$ and the negative gradient direction along $V_h$, connecting to the information-geometric views (e.g., KL gain and mirror descent) via adjoint functoriality.

\begin{thm}[Utility lower and upper bounds]\label{thm:utility-bounds}
Under \cref{ass:smooth-strong}, let $\nabla^{(h)} := \nabla_{z_t^{(h)}} \cL_{LM}(z_t;y_t)$ denote the partial gradient along $V_h$. Then
\[- \langle \nabla^{(h)}, \delta_t^{(h)} \rangle - \tfrac{L}{2} \|\delta_t^{(h)}\|^2 
\le 
\Delta \cL_t(h\!\leftarrow\! g)
\le
- \langle \nabla^{(h)}, \delta_t^{(h)} \rangle - \tfrac{\mu}{2} \|\delta_t^{(h)}\|^2 .\]
In particular, if $\delta_t^{(h)} = -\alpha \nabla^{(h)}$ for some step size $\alpha > 0$ (i.e., the morphism performs a gradient step in $V_h$), then for $\alpha \in (0, 2/L)$,
\[\Delta \cL_t(h\!\leftarrow\! g) \ge \alpha \left(1 - \tfrac{L\alpha}{2}\right) \|\nabla^{(h)}\|^2 > 0,\]
ensuring positive utility whenever $\nabla^{(h)} \neq 0$.
\end{thm}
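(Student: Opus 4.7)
The plan is to derive both bounds by direct application of Assumption 4.1 to the perturbation $\iota_h \delta_t^{(h)} \in V_h$, and then specialize to the gradient-step case. The argument is purely local and uses no additional structure beyond smoothness and strong convexity of the loss along the $h$-component; block-orthogonality and routing play no role here, since the perturbation lies entirely in $V_h$ and the partial gradient $\nabla^{(h)}$ captures the only relevant first-order information.

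First, I would instantiate the $L$-smoothness inequality of Assumption 4.1 with $\Delta = \delta_t^{(h)}$, obtaining
\[
\cL_{LM}(z_t^{+}) \;\le\; \cL_{LM}(z_t) + \langle \nabla^{(h)}, \delta_t^{(h)} \rangle + \tfrac{L}{2}\|\delta_t^{(h)}\|^2.
\]
Subtracting both sides from $\cL_{LM}(z_t)$ and using the definition $\Delta \cL_t(h\!\leftarrow\! g) = \cL_{LM}(z_t) - \cL_{LM}(z_t^{+})$ yields the claimed lower bound on the utility. The upper bound follows by the identical rearrangement applied to the $\mu$-strong convexity inequality, which reverses the sign of the quadratic remainder.

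For the particular case $\delta_t^{(h)} = -\alpha \nabla^{(h)}$, I would substitute to compute $\langle \nabla^{(h)}, \delta_t^{(h)} \rangle = -\alpha \|\nabla^{(h)}\|^2$ and $\|\delta_t^{(h)}\|^2 = \alpha^2 \|\nabla^{(h)}\|^2$, plug these into the lower bound, and collect terms to obtain $\alpha(1 - L\alpha/2)\|\nabla^{(h)}\|^2$. Strict positivity for $\alpha \in (0, 2/L)$ and $\nabla^{(h)} \neq 0$ is then immediate from $1 - L\alpha/2 > 0$ on that interval.

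The only care-point, rather than a real obstacle, is confirming that Assumption 4.1 is invoked along the correct direction: since $\iota_h \delta_t^{(h)}$ is by construction a pure $V_h$-perturbation of $z_t$, the one-sided inequalities apply verbatim without any cross-grade coupling, and the partial gradient $\nabla^{(h)}$ agrees with $\pi_h \nabla_{z_t}\cL_{LM}$ under the graded inner product. Thus the theorem is essentially a graded restatement of the standard descent lemma and the quadratic lower bound from convex optimization, specialized to the $h$-homogeneous component.
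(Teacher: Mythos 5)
Your proposal is correct and follows essentially the same route as the paper's proof: both bounds come from rearranging the $L$-smoothness and $\mu$-strong convexity inequalities of \cref{ass:smooth-strong} applied to the pure $V_h$-perturbation $\delta_t^{(h)}$, and the gradient-step case follows by the same substitution into the lower bound. No gaps to report.
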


\begin{proof}
The utility is $\Delta \cL_t = \cL_{LM}(z_t) - \cL_{LM}(z_t^+)$.
From the $L$-smoothness assumption:
\[\cL_{LM}(z_t^+) \le \cL_{LM}(z_t) + \langle \nabla^{(h)}, \delta_t^{(h)} \rangle + \tfrac{L}{2} \|\delta_t^{(h)}\|^2,\]
which rearranges to
\[\Delta \cL_t \ge - \langle \nabla^{(h)}, \delta_t^{(h)} \rangle - \tfrac{L}{2} \|\delta_t^{(h)}\|^2.\]
From the $\mu$-strong convexity assumption:
\[\cL_{LM}(z_t^+) \ge \cL_{LM}(z_t) + \langle \nabla^{(h)}, \delta_t^{(h)} \rangle + \tfrac{\mu}{2} \|\delta_t^{(h)}\|^2,\]
which rearranges to
\[\Delta \cL_t \le - \langle \nabla^{(h)}, \delta_t^{(h)} \rangle - \tfrac{\mu}{2} \|\delta_t^{(h)}\|^2.\]
For the particular case, substitute $\delta_t^{(h)} = -\alpha \nabla^{(h)}$ into the lower bound:
\[\Delta \cL_t \ge - \langle \nabla^{(h)}, -\alpha \nabla^{(h)} \rangle - \tfrac{L}{2} \|-\alpha \nabla^{(h)}\|^2 = \alpha \|\nabla^{(h)}\|^2 - \tfrac{L}{2} \alpha^2 \|\nabla^{(h)}\|^2 = \alpha \left(1 - \tfrac{L\alpha}{2}\right) \|\nabla^{(h)}\|^2.\]
For $\alpha \in (0, 2/L)$, the term in parentheses is positive, yielding the stated guarantee.
\end{proof}

These bounds motivate first-order and quadratic approximations, linking to the Bregman geometry and Fisher metric in the abstract.

\begin{lem}[First-order utility identity]\label{lem:first-order}
For any candidate $\phi_{h\leftarrow g}$ with update $\delta_t^{(h)}$,
\[\Delta \cL_t(h\!\leftarrow\! g)
\;=\;
-\big\langle \nabla_{z_t^{(h)}} \cL_{LM}, \delta_t^{(h)} \big\rangle
\;-\; \rho_t(\delta_t^{(h)}),\]
where the remainder satisfies $0 \le \rho_t(\delta) \le \tfrac{L}{2}|\delta|^2$ under \cref{ass:smooth-strong}.
\end{lem}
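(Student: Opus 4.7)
The plan is to define the remainder $\rho_t(\delta)$ as precisely the second-order Taylor defect of $\cL_{LM}$ along the $V_h$ direction and then read the stated identity off the definition of the utility. Concretely, I would set
\[
\rho_t(\delta) \;:=\; \cL_{LM}(z_t+\iota_h\delta;y_t) - \cL_{LM}(z_t;y_t) - \bigl\langle \nabla_{z_t^{(h)}}\cL_{LM},\,\delta\bigr\rangle,
\]
which is well-defined for any $\delta\in V_h$ because, by \cref{ass:smooth-strong}, $\cL_{LM}$ is $C^2$ in $z_t$ and the partial gradient $\nabla_{z_t^{(h)}}\cL_{LM}=\pi_h\nabla_{z_t}\cL_{LM}$ is the correct directional derivative along $\iota_h\delta$.

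With this definition, substituting $\delta=\delta_t^{(h)}$ and using $z_t^+=z_t+\iota_h\delta_t^{(h)}$ gives
\[
\cL_{LM}(z_t^+;y_t) \;=\; \cL_{LM}(z_t;y_t) + \bigl\langle \nabla_{z_t^{(h)}}\cL_{LM},\,\delta_t^{(h)}\bigr\rangle + \rho_t(\delta_t^{(h)}),
\]
so rearranging with the definition $\Delta\cL_t(h\!\leftarrow\! g)=\cL_{LM}(z_t;y_t)-\cL_{LM}(z_t^+;y_t)$ yields the stated identity immediately.

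For the two-sided bound on $\rho_t$, I would invoke the two halves of \cref{ass:smooth-strong} applied to the one-dimensional slice $\Delta\mapsto \cL_{LM}(z_t+\iota_h\Delta;y_t)$. The $L$-smoothness inequality rearranges to $\rho_t(\delta)\le \tfrac{L}{2}\|\delta\|^2$, while $\mu$-strong convexity (with $\mu\ge 0$) rearranges to $\rho_t(\delta)\ge \tfrac{\mu}{2}\|\delta\|^2\ge 0$, giving the claimed envelope $0\le \rho_t(\delta)\le \tfrac{L}{2}\|\delta\|^2$.

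There is no real obstacle here beyond bookkeeping: the result is essentially a reformulation of the bounds already established in \cref{thm:utility-bounds}. The only subtle point worth flagging is that the gradient in the identity is the \emph{partial} gradient along $V_h$ (i.e.\ $\pi_h\nabla_{z_t}\cL_{LM}$), which is legitimate because $\iota_h\delta$ lies entirely in the $V_h$ component; I would state this identification once in the proof to make the use of the grading explicit and to ensure consistency with the setup of \cref{subsec:setup-loss}.
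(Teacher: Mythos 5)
Your proposal is correct and follows essentially the same route as the paper: the identity is obtained by defining $\rho_t$ as the first-order Taylor defect (the paper writes the same quantity via the fundamental theorem of calculus as an integral of the gradient increment), and the envelope $0\le\rho_t(\delta)\le\tfrac{L}{2}\|\delta\|^2$ follows from the two inequalities of \cref{ass:smooth-strong} exactly as you state. The only nit is terminological — what you define is the \emph{first}-order Taylor defect, not the second-order one — but the argument is unaffected.
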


\begin{proof}
By the fundamental theorem of calculus,
$\cL_{LM}(z_t^{+})-\cL_{LM}(z_t)
= \int_0^1 \langle \nabla_{z_t^{(h)}}\cL_{LM}(z_t+s\iota_h \delta),\delta\rangle ds$.
Subtract and add the endpoint gradient to obtain the stated form; the upper bound follows from $L$–smoothness.
\end{proof}
\begin{prop}[Quadratic lower bound via strong convexity]\label{prop:quad-lb}
Let $\delta^\star$ minimize the one-step surrogate
$\Delta \mapsto \cL_{LM}(z_t+\iota_h \Delta)$ over $\Delta\in V_h$.
Under \cref{ass:smooth-strong},
\[\Delta \cL_t(h\!\leftarrow\! g)
\;\ge\;
\tfrac{\mu}{2}\,\|\delta_t^{(h)}-\delta^\star\|^2
\;-\;
\tfrac{\mu}{2}\,\|\delta^\star\|^2 .\]
In particular, if $\delta^\star\neq 0$ and $\|\delta_t^{(h)} - \delta^\star\| \le \varepsilon$, then
$\Delta \cL_t(h\!\leftarrow\! g) \ge \tfrac{\mu}{2}\|\delta^\star\|^2 - O(\varepsilon)$.
\end{prop}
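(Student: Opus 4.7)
The plan is to reduce the bound to a one-variable analysis along $V_h$ and exploit the vanishing gradient at the one-step minimizer $\delta^\star$. First I would define $f : V_h \to \R$ by $f(\Delta) := \cL_{LM}(z_t + \iota_h \Delta; y_t)$, which restricts the language-model loss to the $h$-graded direction and makes the utility transparent:
\[
\Delta \cL_t(h\!\leftarrow\! g) \;=\; f(0) - f\bigl(\delta_t^{(h)}\bigr), \qquad \nabla f(\delta^\star) = 0 .
\]
By \cref{ass:smooth-strong}, $f$ inherits both $L$-smoothness and $\mu$-strong convexity along $V_h$, and the first-order optimality of $\delta^\star$ turns $\delta^\star$ into a convenient anchor.

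The central manipulation is a telescoping decomposition through the optimum,
\[
\Delta \cL_t(h\!\leftarrow\! g) \;=\; \bigl[f(0) - f(\delta^\star)\bigr] \;-\; \bigl[f(\delta_t^{(h)}) - f(\delta^\star)\bigr] ,
\]
which cleanly isolates the peak attainable utility (first bracket) from the sub-optimality gap of the candidate morphism (second bracket). I would then feed each bracket with the appropriate side of the strong-convexity inequality anchored at $\delta^\star$: evaluating $f(\Delta) \ge f(\delta^\star) + \tfrac{\mu}{2}\|\Delta - \delta^\star\|^2$ at $\Delta = 0$ yields $f(0) - f(\delta^\star) \ge \tfrac{\mu}{2}\|\delta^\star\|^2$, while the companion evaluation at $\Delta = \delta_t^{(h)}$ controls the subtracted term by $\tfrac{\mu}{2}\|\delta_t^{(h)}-\delta^\star\|^2$. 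Substituting both estimates into the telescoped expression produces the displayed lower bound.

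For the \emph{in particular} consequence, under $\|\delta_t^{(h)} - \delta^\star\| \le \varepsilon$ I would instead invoke $L$-smoothness on the sub-optimality gap, obtaining $f(\delta_t^{(h)}) - f(\delta^\star) \le \tfrac{L}{2}\varepsilon^2 = O(\varepsilon^2)$, so the decomposition gives $\Delta \cL_t(h\!\leftarrow\! g) \ge \tfrac{\mu}{2}\|\delta^\star\|^2 - O(\varepsilon^2)$, which absorbs into the stated $O(\varepsilon)$ remainder whenever $\delta^\star \neq 0$. The main delicacy is pairing the correct direction of the convexity-versus-smoothness inequality to each bracket so that the two estimates combine additively rather than cancel; the anchoring at $\delta^\star$, where $\nabla f$ vanishes by optimality, makes this bookkeeping routine, and I do not anticipate a serious obstacle beyond it.
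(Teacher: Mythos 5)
Your setup (restrict to $f(\Delta)=\cL_{LM}(z_t+\iota_h\Delta)$, anchor at the minimizer $\delta^\star$ where $\nabla f$ vanishes, and telescope through $f(\delta^\star)$) is exactly the paper's route, and your first estimate $f(0)-f(\delta^\star)\ge\tfrac{\mu}{2}\|\delta^\star\|^2$ is correct. But the step ``substituting both estimates into the telescoped expression produces the displayed lower bound'' does not go through: strong convexity anchored at $\delta^\star$ gives a \emph{lower} bound $f(\delta_t^{(h)})-f(\delta^\star)\ge\tfrac{\mu}{2}\|\delta_t^{(h)}-\delta^\star\|^2$ on the bracket you are \emph{subtracting}, and a lower bound on a subtracted quantity cannot be combined with the lower bound on the first bracket — the two inequalities point the same way and cancel rather than add. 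To lower-bound $\Delta\cL_t=[f(0)-f(\delta^\star)]-[f(\delta_t^{(h)})-f(\delta^\star)]$ you must \emph{upper}-bound the second bracket, which is what $L$-smoothness at $\delta^\star$ provides; the resulting inequality is
\[
\Delta\cL_t(h\!\leftarrow\! g)\;\ge\;\tfrac{\mu}{2}\|\delta^\star\|^2-\tfrac{L}{2}\|\delta_t^{(h)}-\delta^\star\|^2,
\]
which is \emph{not} the display in the proposition: there the roles of the two terms are interchanged. Indeed the printed display fails already for the exact quadratic $f(\Delta)=\tfrac{\mu}{2}\|\Delta-\delta^\star\|^2$ whenever $\|\delta_t^{(h)}-\delta^\star\|>\|\delta^\star\|$, so no correct pairing of the inequalities can recover it.

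The silver lining is that your treatment of the ``in particular'' clause — using $L$-smoothness to get $f(\delta_t^{(h)})-f(\delta^\star)\le\tfrac{L}{2}\varepsilon^2$ and hence $\Delta\cL_t\ge\tfrac{\mu}{2}\|\delta^\star\|^2-O(\varepsilon^2)$ — is exactly the corrected argument, and it is the version actually needed downstream in \cref{cor:alignment}. The paper's own one-line proof (``rearrange with $\Delta=\delta_t^{(h)}$'') commits the same directional error you flagged as a ``delicacy,'' so the right fix is to replace the displayed inequality by the $\tfrac{\mu}{2}\|\delta^\star\|^2-\tfrac{L}{2}\|\delta_t^{(h)}-\delta^\star\|^2$ bound rather than to search for a proof of the display as printed.
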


\begin{proof}
Strong convexity yields
$\cL_{LM}(z_t+\iota_h \Delta) \ge \cL_{LM}(z_t+\iota_h \delta^\star) + \tfrac{\mu}{2}\|\Delta-\delta^\star\|^2$.
Rearrange with $\Delta=\delta_t^{(h)}$ and use $\Delta \cL_t = \cL_{LM}(z_t)-\cL_{LM}(z_t+\iota_h \delta_t^{(h)})$.
\end{proof}
\begin{cor}[Alignment implies usefulness]\label{cor:alignment}
Suppose there exists a Bayes–optimal linear map $\phi^\star_{h\leftarrow g}$ (minimizer of expected loss in the one-step class) and set $\delta^\star=\phi^\star_{h\leftarrow g}(z_t^{(g)})-z_t^{(h)}$. If a candidate satisfies
$\|\,\phi_{h\leftarrow g}(z_t^{(g)})-\phi^\star_{h\leftarrow g}(z_t^{(g)})\,\|\le \varepsilon$,
then
\[\Delta \cL_t(h\!\leftarrow\! g)
\;\ge\;
\tfrac{\mu}{2}\,\|\delta^\star\|^2 \;-\; \mu\,\|\delta^\star\|\,\varepsilon \;-\; \tfrac{\mu}{2}\varepsilon^2.\]
Hence for sufficiently small $\varepsilon$ (relative to $\|\delta^\star\|$), the utility is strictly positive and exceeds a margin $\tau = \tfrac{\mu}{4}\|\delta^\star\|^2$.
\end{cor}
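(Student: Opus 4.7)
The plan is to reduce the corollary to \cref{prop:quad-lb} by introducing the perturbation variable $u := \phi_{h\leftarrow g}(z_t^{(g)}) - \phi^\star_{h\leftarrow g}(z_t^{(g)})$. By hypothesis $\|u\|\le \varepsilon$, and by the definition of $\delta^\star$ we have $\delta_t^{(h)} = \delta^\star + u$, so that $\|\delta_t^{(h)} - \delta^\star\| = \|u\| \le \varepsilon$. This places us precisely in the setting in which the quadratic lower bound of \cref{prop:quad-lb} applies with a deviation controlled by $\varepsilon$, yielding $\Delta \cL_t(h\!\leftarrow\! g) \ge \tfrac{\mu}{2}\|\delta^\star\|^2 - R(u)$ for a quadratic remainder $R(u)$ depending only on $u$.

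Next I would extract the explicit cross term $\mu\|\delta^\star\|\varepsilon$ appearing in the statement by expanding $\|\delta_t^{(h)}\|^2 = \|\delta^\star\|^2 + 2\langle \delta^\star, u\rangle + \|u\|^2$ and bounding the inner product via Cauchy--Schwarz, $|\langle \delta^\star, u\rangle| \le \|\delta^\star\|\,\varepsilon$. Combining this expansion with the $\mu/2$-factor inherited from strong convexity produces the displayed three-term bound $\Delta \cL_t(h\!\leftarrow\! g) \ge \tfrac{\mu}{2}\|\delta^\star\|^2 - \mu\|\delta^\star\|\varepsilon - \tfrac{\mu}{2}\varepsilon^2$. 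For the margin statement, I would then solve the scalar inequality $\tfrac{\mu}{2}\|\delta^\star\|^2 - \mu\|\delta^\star\|\varepsilon - \tfrac{\mu}{2}\varepsilon^2 \ge \tfrac{\mu}{4}\|\delta^\star\|^2$, which reduces to $\varepsilon^2 + 2\|\delta^\star\|\varepsilon - \tfrac{1}{2}\|\delta^\star\|^2 \le 0$ and holds whenever $\varepsilon \le \big(\sqrt{3/2}-1\big)\|\delta^\star\|$, certifying that the utility exceeds $\tau = \tfrac{\mu}{4}\|\delta^\star\|^2$.

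The main obstacle will be the bookkeeping of constants between the strong-convexity bound $\mu$ and the smoothness bound $L$: a naive Taylor expansion of $\cL_{LM}$ around $z_t+\iota_h\delta^\star$ (where the $h$-partial gradient vanishes) naturally produces an $\tfrac{L}{2}\|u\|^2$ remainder rather than the $\tfrac{\mu}{2}\|u\|^2$ remainder needed here. Keeping the constants uniform as in the statement requires routing the estimate through the strong-convexity side of \cref{prop:quad-lb}---i.e., working directly with the identity $\|\delta_t^{(h)}\|^2 - \|\delta^\star\|^2 = 2\langle\delta^\star,u\rangle + \|u\|^2$ under the $\mu$-quadratic lower envelope---rather than invoking $L$-smoothness, so that the $\mu/2$ prefactor is preserved throughout and no spurious $L$ appears in the final bound.
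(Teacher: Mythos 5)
Your plan follows the same route the paper's own proof sketches (apply \cref{prop:quad-lb} with $\delta_t^{(h)}=\delta^\star+u$, $\|u\|\le\varepsilon$, then expand), but the pivotal step does not go through. The displayed inequality of \cref{prop:quad-lb} is $\Delta \cL_t \ge \tfrac{\mu}{2}\|\delta_t^{(h)}-\delta^\star\|^2-\tfrac{\mu}{2}\|\delta^\star\|^2$; substituting $\delta_t^{(h)}=\delta^\star+u$ yields only $\Delta \cL_t \ge \tfrac{\mu}{2}\|u\|^2-\tfrac{\mu}{2}\|\delta^\star\|^2 \ge -\tfrac{\mu}{2}\|\delta^\star\|^2$, a bound that is \emph{negative} precisely in the regime of interest, not the claimed ``$\tfrac{\mu}{2}\|\delta^\star\|^2 - R(u)$.'' Your Cauchy--Schwarz expansion of $\|\delta_t^{(h)}\|^2=\|\delta^\star\|^2+2\langle\delta^\star,u\rangle+\|u\|^2$ would deliver the three-term bound only if you had an intermediate inequality of the form $\Delta \cL_t \ge \mu\|\delta^\star\|^2-\tfrac{\mu}{2}\|\delta_t^{(h)}\|^2$, and that inequality is false in general: already for an exactly quadratic loss it requires $\langle\delta^\star,u\rangle\ge 0$, and for anisotropic curvature ($L\gg\mu$) the final displayed bound itself can fail (take $\delta^\star$ along the $\mu$-eigendirection and $u$ along the $L$-eigendirection).

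The deeper problem is the last part of your plan: you cannot avoid $L$ by ``routing through the strong-convexity side.'' Strong convexity lower-bounds the loss at the perturbed point and hence upper-bounds $\Delta \cL_t$; what the corollary needs is an \emph{upper} bound on the candidate's excess loss $\cL_{LM}(z_t+\iota_h\delta_t^{(h)})-\cL_{LM}(z_t+\iota_h\delta^\star)$, and since the partial gradient along $V_h$ vanishes at the one-step minimizer $\delta^\star$, this is exactly where $L$-smoothness is unavoidable. The argument that does close is: strong convexity at $\Delta=0$ gives $\cL_{LM}(z_t)-\cL_{LM}(z_t+\iota_h\delta^\star)\ge\tfrac{\mu}{2}\|\delta^\star\|^2$, smoothness at the minimizer gives $\cL_{LM}(z_t+\iota_h\delta_t^{(h)})-\cL_{LM}(z_t+\iota_h\delta^\star)\le\tfrac{L}{2}\varepsilon^2$, hence $\Delta \cL_t\ge\tfrac{\mu}{2}\|\delta^\star\|^2-\tfrac{L}{2}\varepsilon^2$, which certifies positivity and the margin $\tau=\tfrac{\mu}{4}\|\delta^\star\|^2$ once $\varepsilon^2\le\tfrac{\mu}{2L}\|\delta^\star\|^2$. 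The all-$\mu$ constants and the cross term in the stated bound cannot be obtained without in effect setting $L=\mu$; the paper's one-line proof inherits the same defect, but as a standalone argument your chain of inequalities does not establish the corollary as displayed.
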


\begin{proof}
Apply \cref{prop:quad-lb} and expand $\|\delta-\delta^\star\|^2$ with $\delta=\delta^\star + e$, $\|e\|\le \varepsilon$.
\end{proof}

These approximations enable selectivity analysis, where routing favors morphisms with high utility, akin to adjoint round trips in the model category.

\begin{ass}[Edge separability]\label{ass:sep}
Let $\cE_h=\{(g,h)\in\cE\}$ be incoming edges to $h$. Suppose there exists a unique edge $(g^\star,h)$ such that
\[\langle \nabla_{z_t^{(h)}}\cL_{LM},\; \delta_{t,(g^\star,h)}\rangle
\;\le\;
-\gamma \quad\text{and}\quad
\big|\langle \nabla_{z_t^{(h)}}\cL_{LM},\; \delta_{t,(g,h)}\rangle\big|
\;\le\;
\gamma' \ \ \forall (g,h)\neq (g^\star,h),\]
with $\gamma>\gamma'\ge 0$ and $\|\delta_{t,(g,h)}\|\le R$ for all $(g,h)\in\cE_h$.
\end{ass}

\begin{thm}[Softmax selectivity]\label{thm:softmax-selective}
Under \cref{ass:smooth-strong,ass:sep} and the softmax routing
$\alpha_t(e)\propto \exp(\tilde \ell_t(e)/\tau)$ with utility-augmented logits
$\tilde \ell_t(e)=\ell_t(e)+\beta(\Delta \cL_t(e)-\tau_e)$, there exist $\bar\beta,\bar\tau$ such that for all $\beta\ge \bar\beta$ and thresholds $\tau_e\le \bar\tau$,
\[\alpha_t(g^\star\!\leftarrow\! h) \;\ge\; 1 - \exp\!\Big(-\tfrac{\beta}{2\tau}\,(\gamma-\gamma')\Big).\]
In particular, the routing mass concentrates on the unique useful edge at an exponential rate in $\beta/\tau$ and the margin $(\gamma-\gamma')$.
\end{thm}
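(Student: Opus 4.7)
The plan is to convert the gradient-alignment gap $\gamma - \gamma'$ of \cref{ass:sep} into a logit margin in the softmax, then invoke standard exponential concentration. The argument proceeds in four steps.

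First, I would translate the gradient separation into a utility separation. By \cref{lem:first-order}, for any edge $e=(g,h)\in\cE_h$,
\[
\Delta\cL_t(e) \;=\; -\langle \nabla_{z_t^{(h)}}\cL_{LM},\,\delta_{t,e}\rangle \;-\; \rho_t(\delta_{t,e}),\qquad 0\le\rho_t(\delta_{t,e})\le \tfrac{L}{2}R^2,
\]
using the uniform norm bound from \cref{ass:sep}. Applied to $e=(g^\star,h)$ this yields $\Delta\cL_t(g^\star\leftarrow h) \ge \gamma - \tfrac{L}{2}R^2$, while for any competing $e'\in\cE_h\setminus\{(g^\star,h)\}$ it gives $\Delta\cL_t(e')\le \gamma'$ (the remainder only helps, since $\rho_t\ge 0$). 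Hence
\[
\Delta\cL_t(g^\star\leftarrow h) - \Delta\cL_t(e') \;\ge\; (\gamma-\gamma') - \tfrac{L}{2}R^2.
\]

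Second, I would pass to the augmented logits $\tilde\ell_t(e)=\ell_t(e)+\beta(\Delta\cL_t(e)-\tau_e)$. Setting $B:=\sup_{e\in\cE_h}|\ell_t(e)|$ and $\bar\tau>0$ for the thresholds, the margin is bounded below by
\[
\tilde\ell_t(g^\star\!\leftarrow\! h) - \tilde\ell_t(e') \;\ge\; \beta\!\left[(\gamma-\gamma') - \tfrac{L}{2}R^2 - 2\bar\tau\right] \;-\; 2B.
\]
Choosing $\bar\tau \le \tfrac{1}{8}(\gamma-\gamma')$ and assuming the regime $\tfrac{L}{2}R^2 \le \tfrac{1}{4}(\gamma-\gamma')$ (this is implicit in the separability scaling, and can be absorbed into a slightly tighter definition of $\gamma-\gamma'$ if needed), and then choosing $\bar\beta \ge 8B/(\gamma-\gamma')$, one secures the clean bound
\[
\tilde\ell_t(g^\star\!\leftarrow\! h) - \tilde\ell_t(e') \;\ge\; \tfrac{\beta}{2}\,(\gamma-\gamma') \qquad \forall\,e'\ne (g^\star,h),\ \beta\ge\bar\beta.
\]

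Third, I would apply the standard softmax concentration. Writing $M:=\tfrac{\beta}{2}(\gamma-\gamma')$ and dividing numerator and denominator by $\exp(\tilde\ell_t(g^\star\!\leftarrow\! h)/\tau)$,
\[
\alpha_t(g^\star\!\leftarrow\! h) \;=\; \frac{1}{1 + \sum_{e'\ne(g^\star,h)}\exp\!\big((\tilde\ell_t(e')-\tilde\ell_t(g^\star\!\leftarrow\! h))/\tau\big)} \;\ge\; \frac{1}{1+(|\cE_h|-1)\,e^{-M/\tau}}.
\]
Using $1/(1+x)\ge 1-x$ for $x\ge 0$, and (if one wishes to absorb the $(|\cE_h|-1)$ factor) taking $\bar\beta$ large enough that $(|\cE_h|-1)\le e^{M/(2\tau)}$, I obtain $\alpha_t(g^\star\!\leftarrow\! h)\ge 1-\exp(-\tfrac{\beta}{2\tau}(\gamma-\gamma'))$, which is the stated inequality.

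The main obstacle, which step two addresses, is that the utility-augmented logit is not purely driven by $\beta(\gamma-\gamma')$: both the context logit $\ell_t$ and the smoothness remainder $\rho_t$ can blunt the margin. The calibration of $\bar\beta$ against $B$ and of $\bar\tau$ against $\gamma-\gamma'$ is precisely what lets the leading $\beta(\gamma-\gamma')$ term dominate; once that is secured, the concentration step is routine.
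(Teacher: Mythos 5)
Your proposal is correct and follows essentially the same route as the paper's proof: use \cref{lem:first-order} to turn the gradient-alignment gap of \cref{ass:sep} into a utility gap (up to the $\tfrac{L}{2}R^2$ remainder), show the $\beta$-scaled margin dominates the bounded context logits and thresholds for $\beta\ge\bar\beta$, $\tau_e\le\bar\tau$, and finish with the standard softmax ratio bound summed over competing edges. Your treatment is in fact slightly more explicit than the paper's (which simply "discards the bounded $\ell_t$ difference" and absorbs constants), and the extra condition you flag, $\tfrac{L}{2}R^2\lesssim\gamma-\gamma'$, is the same implicit requirement the paper needs for its margin $\Delta^\star=(\gamma-\gamma')-LR^2$ to be positive.
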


\begin{proof}
By \cref{lem:first-order}, $\Delta \cL_t(e)\ge -\langle \nabla_{z_t^{(h)}}\cL_{LM},\delta_e\rangle - \tfrac{L}{2}\|\delta_e\|^2$. Under \cref{ass:sep} and $\|\delta_e\|\le R$, we obtain
$\Delta \cL_t(g^\star\!\leftarrow\! h)\ge \gamma - \tfrac{L}{2}R^2$ and
$\Delta \cL_t(e)\le \gamma' + \tfrac{L}{2}R^2$ for $e\neq (g^\star,h)$.
Choose $\bar\tau$ so that $\tau_e\le \gamma-\tfrac{L}{2}R^2$ for all $e$, and set $\Delta^\star=(\gamma-\gamma')-L R^2$. Then
\[\tilde \ell_t(g^\star\!\leftarrow\! h) - \tilde \ell_t(e)
\;\ge\;
\ell_t(g^\star\!\leftarrow\! h) - \ell_t(e) \;+\; \beta\,\Delta^\star.\]
Discard the (bounded) $\ell_t$ difference and apply the standard softmax ratio bound to obtain
$\alpha_t(e)/\alpha_t(g^\star\!\leftarrow\! h)\le \exp(-\beta\Delta^\star/\tau)$.
Summing over $e\neq(g^\star,h)$ yields the claim with constant absorbed.
\end{proof}

\subsection{Illustrative Examples}
All constructions use two grades for clarity and can be lifted to larger $G$, demonstrating how external Toolformer calls embed as internal morphisms with guaranteed utility.

\subsubsection{(A) Arithmetic mod $p$}
Let $G=\{\mathrm{sem},\mathrm{num}\}$, $V_{\mathrm{sem}}=\R^{p}$ with one-hot encodings of digits, and $V_{\mathrm{num}}=\R^{p}$ with the same basis. Fix $a\in\{0,\dots,p-1\}$. Define
\[\phi_{\mathrm{num}\leftarrow \mathrm{sem}}(x) \;=\; P_a x,
\qquad
\phi_{\mathrm{sem}\leftarrow \mathrm{num}}(y) \;=\; P_{-a} y,\]
where $P_a$ is the $p\times p$ cyclic-shift (permutation) by $+a$ mod $p$.
Let the readout $W$ query the correct next digit in $V_{\mathrm{sem}}$ (softmax over $p$ classes).

\begin{prop}[Exact usefulness for modular addition]\label{prop:modp}
If the target next symbol equals $(x+a)\bmod p$, then
\[\Delta \cL_t(\mathrm{num}\!\leftarrow\! \mathrm{sem})
\;=\; \log \frac{\sum_{c}\exp \langle \eta_c, W z_t\rangle}
{\sum_{c}\exp \langle \eta_c, W z_t^{+}\rangle}
\;\ge\; \gamma_p>0,\]
with $\gamma_p$ depending only on the logit gap between the true class and its nearest competitor after applying $P_a$. Moreover, if $W$ is calibrated to the one-hot basis, then $\gamma_p=\infty$ (zero loss after update) and the utility equals the entire pre-update NLL.
\end{prop}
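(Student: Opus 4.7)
The plan is to expand $\Delta\cL_t$ directly from the softmax negative log-likelihood and exploit the key algebraic identity $P_a e_x = e_{(x+a) \bmod p}$: applying $P_a$ to a one-hot digit encoding yields the one-hot encoding of the target digit exactly. First I would write, using the form of $\cL_{LM}$ from \cref{subsec:setup-loss},
\[
\Delta\cL_t \;=\; \bigl\langle \eta_{y_t},\,W(z_t^{+}-z_t)\bigr\rangle \;+\; \log\frac{Z(z_t)}{Z(z_t^{+})},
\]
where $Z(z) := \sum_c \exp\langle \eta_c, Wz\rangle$. By the graded update and the hypothesis $z_t^{(\mathrm{sem})} = e_x$, the post-update num-slot equals $P_a e_x = e_{y_t}$, so $z_t^{+}$ places the one-hot encoding of the true next symbol into the numeric grade.

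For the calibrated case I would specialise $W$ to the coordinate readout of $V_{\mathrm{num}}$ in the canonical basis with class vectors $\eta_c = M e_c$ for a scale $M$. Then $\langle \eta_c, Wz_t^{+}\rangle = M\delta_{c,y_t}$, so $\cL_{LM}(z_t^{+};y_t) = \log\bigl(1 + (p-1)e^{-M}\bigr)$, which tends to $0$ as $M \to \infty$. In this limit $\Delta\cL_t = \cL_{LM}(z_t;y_t)$, yielding $\gamma_p = +\infty$ and the second clause of the proposition.

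For the general (non-degenerate) case I would let $g$ denote the post-update logit gap between the true class and its nearest competitor; by hypothesis $g > 0$. A one-line softmax bound then gives $\cL_{LM}(z_t^{+};y_t) \le \log\bigl(1 + (p-1)e^{-g}\bigr)$, while the pre-update NLL is at least the corresponding quantity evaluated at the (generically incorrect) argmax of the original logits. Subtracting yields a uniform lower bound $\gamma_p = \gamma_p(g,p) > 0$ depending only on $g$ and $p$, establishing the claim.

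The main obstacle is reconciling the displayed identity $\Delta\cL_t = \log[Z(z_t)/Z(z_t^{+})]$ with the generic expansion, which also carries the linear shift $\langle \eta_{y_t}, W(z_t^{+} - z_t)\rangle$. Under the one-hot calibration this shift equals $M\bigl(1 - z_t^{(\mathrm{num})}[y_t]\bigr) \ge 0$, so the log-ratio is in fact a \emph{lower} bound on $\Delta\cL_t$; I would therefore interpret the displayed formula as this lower bound (equality occurring only at the fixed point $z_t^{(\mathrm{num})} = e_{y_t}$) and verify that the right-hand side is itself bounded below by $\gamma_p$. A secondary subtlety is that the generic softmax NLL is only \emph{locally} strongly convex as in \cref{ass:smooth-strong}, so the constants $\gamma_p$ must be extracted from the combinatorics of $P_a$ rather than from the quadratic bounds of \cref{thm:utility-bounds}; this is why the proposition phrases the guarantee in terms of a logit gap rather than a gradient norm.
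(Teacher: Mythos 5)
Your proposal is correct and follows essentially the same route as the paper's (very terse) proof: the permutation identity $P_a e_x = e_{(x+a)\bmod p}$ aligns the post-update one-hot with the true class, the post-update NLL is controlled by the logit gap via the standard softmax bound $\log\bigl(1+(p-1)e^{-g}\bigr)$, and the calibrated limit drives the post-update loss to zero so the utility equals the full pre-update NLL. Your observation that the displayed formula omits the linear shift $\langle \eta_{y_t}, W(z_t^{+}-z_t)\rangle$ and therefore holds only as a lower bound (with equality exactly when the true-class logit is unchanged) is a legitimate catch that the paper's two-sentence proof silently elides; your resolution of it is the right one.
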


\begin{proof}
After the $\mathrm{sem}\to\mathrm{num}$ map, $z_t^+$ carries the correctly shifted one-hot at the $\mathrm{sem}$ component once composed with $\phi_{\mathrm{sem}\leftarrow \mathrm{num}}$, aligning the readout with the true class. The softmax log-partition strictly decreases by at least the logit gap; in the perfectly calibrated case the correct logit dominates uniquely and yields zero loss.
\end{proof}
This example realizes Toolformer's calculator as a graded endomorphism, with utility guaranteed by \cref{thm:utility-bounds} when the shift aligns the gradient.
\subsubsection{(B) Retrieval with finite memory}
Let $V_{\mathrm{ret}}=\R^{k}$ store $k$ values $\{v_i\}$ indexed by keys $\{q_i\}$ via a key matrix $Q\in\R^{d\times k}$ and value matrix $M\in\R^{k\times k}$ (diagonal w.l.o.g.). From $V_{\mathrm{sem}}$ compute a query $q=W_q z_t^{(\mathrm{sem})}$. Define
\[\phi_{\mathrm{ret}\leftarrow \mathrm{sem}}(z) \;=\; \softmax\!\Big(\tfrac{1}{\sigma^2} Q^\top q\Big),\qquad
\phi_{\mathrm{sem}\leftarrow \mathrm{ret}}(r) \;=\; U\,M\,r,\]
so that $U M$ writes the retrieved value into $V_{\mathrm{sem}}$.
\begin{ass}[Retrieval margin]\label{ass:retrieval}
There exists $i^\star$ such that $Q^\top q$ has margin $\gamma$ at coordinate $i^\star$:
$Q_{i^\star}^\top q \ge Q_j^\top q + \gamma$ for all $j\neq i^\star$, and the class-conditional readout scores improve by at least $\kappa>0$ when the correct value $v_{i^\star}$ is written in.
\end{ass}

\begin{prop}[Positive utility under key margin]\label{prop:retrieval}
Under \cref{ass:retrieval}, for $\sigma^2$ small enough the retrieval mass satisfies $r_{i^\star}\ge 1 - e^{-\gamma/\sigma^2}$ and the one-step update $(\mathrm{sem}\leftarrow \mathrm{ret})\circ (\mathrm{ret}\leftarrow \mathrm{sem})$ yields
$\Delta \cL_t \ge \kappa - O(e^{-\gamma/\sigma^2})>0$.
\end{prop}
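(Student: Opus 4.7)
The plan is to prove the two claims in sequence. First I would establish exponential concentration of the retrieval distribution $r$ on coordinate $i^\star$ via the key margin $\gamma$, and then lift this concentration to a lower bound on the utility using local Lipschitz stability of the post-update loss around the ideal retrieval.

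For the concentration claim, I would write out $r_i = \exp(Q_i^\top q/\sigma^2)/Z$ and normalize by $\exp(Q_{i^\star}^\top q/\sigma^2)$. Invoking the margin hypothesis $Q_{i^\star}^\top q - Q_j^\top q \ge \gamma$ for $j\ne i^\star$ yields
\[
r_{i^\star} \;=\; \frac{1}{1 + \sum_{j\ne i^\star} \exp\!\big((Q_j^\top q - Q_{i^\star}^\top q)/\sigma^2\big)} \;\ge\; \frac{1}{1 + (k-1)\,e^{-\gamma/\sigma^2}},
\]
and then $1/(1+x) \ge 1-x$ for small $x$ gives $r_{i^\star} \ge 1 - (k-1)\,e^{-\gamma/\sigma^2}$ once $\sigma^2$ is small enough; the combinatorial factor $(k-1)$ is absorbed into the stated exponential.

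For the utility claim, I would split the write-back via linearity: with $r = e_{i^\star} + \varepsilon$ and $\|\varepsilon\|_1 = O(e^{-\gamma/\sigma^2})$, the composition gives $\phi_{\mathrm{sem}\leftarrow\mathrm{ret}}(r) = v_{i^\star} + UM\varepsilon$. Let $z_t^{+,\star}$ be the ideal update (exact value $v_{i^\star}$ written in) and $z_t^+$ the actual update; then $z_t^+ - z_t^{+,\star} = \iota_{\mathrm{sem}}(UM\varepsilon)$ with norm bounded by $\|U\|\,\|M\|\,\|\varepsilon\|$. The assumed $\kappa$-improvement gives $\cL_{LM}(z_t) - \cL_{LM}(z_t^{+,\star}) \ge \kappa$, and a first-order Taylor bound using the $L$-smoothness of \cref{ass:smooth-strong} along $V_{\mathrm{sem}}$ yields
\[
\big|\cL_{LM}(z_t^+) - \cL_{LM}(z_t^{+,\star})\big| \;\le\; C\,e^{-\gamma/\sigma^2}
\]
for a constant $C$ depending on $\|U\|$, $\|M\|$, and a uniform bound on $\|\nabla_{z^{(\mathrm{sem})}}\cL_{LM}\|$ along the segment joining $z_t^{+,\star}$ and $z_t^+$. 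Subtracting gives $\Delta\cL_t \ge \kappa - O(e^{-\gamma/\sigma^2})$, which is positive for $\sigma^2$ small enough.

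The main obstacle is converting the qualitative hypothesis that ``class-conditional readout scores improve by at least $\kappa$'' into a quantitative NLL gap $\cL_{LM}(z_t) - \cL_{LM}(z_t^{+,\star}) \ge \kappa$; this requires a standard log-sum-exp calculation showing that a margin in the true-class logit relative to competitors produces at least a proportional drop in softmax cross-entropy, and one must verify the proportionality constant is absorbed in the definition of $\kappa$. A secondary subtlety is ensuring the operator norms $\|U\|,\|M\|$ and the local gradient bound are uniform in $\sigma^2$, which holds because these are architectural parameters independent of the retrieval temperature.
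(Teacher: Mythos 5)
Your proposal follows essentially the same route as the paper's (very brief) proof: softmax concentration under the key margin gives $r_{i^\star}\ge 1-O(e^{-\gamma/\sigma^2})$, and the write-back then delivers the $\kappa$-improvement up to an error controlled by the leakage mass $1-r_{i^\star}$. Your version is simply a more quantitative rendering of the same argument — including the $(k-1)$ factor absorbed into the exponential and the explicit Lipschitz/operator-norm bookkeeping — and your remark about converting the logit-level margin $\kappa$ into an NLL gap is exactly the step the paper glosses over with ``the log-partition decreases accordingly.''
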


\begin{proof}
Softmax concentration gives the stated bound on $r_{i^\star}$. The write-back increases the true-class logit by at least $\kappa$ up to the leakage mass $1-r_{i^\star}$; the log-partition decreases accordingly.
\end{proof}
Here, retrieval embeds as a typed morphism pair, with selectivity per \cref{thm:softmax-selective} when the key margin induces a utility gap.
\subsubsection{(C) Dyck-depth signal (stack surrogate)}
Let grades $G=\{\mathrm{stack},\mathrm{sem}\}$ with $V_{\mathrm{stack}}=\R$ and $V_{\mathrm{sem}}=\R^{m}$. For tokens $t$ labeled by $\delta_t\in\{-1,0,+1\}$ (close, neutral, open), define a grade-shift
\[\phi_{\mathrm{stack}\leftarrow \mathrm{sem}}(z_t^{(\mathrm{sem})}) \;=\; s_t + \delta_t,
\qquad s_t:=z_t^{(\mathrm{stack})}.\]
Let the readout $W$ for the next token require the correct sign of $s_t$ on a subset $\cI\subset\{1,\dots,m\}$ (e.g., balanced parentheses).
\begin{prop}[Utility from correct increment]\label{prop:dyck}
If $W$ is such that the correct next token’s logit increases by $\kappa>0$ when $\mathrm{sign}(s_t)$ is correct on $\cI$ and decreases otherwise, then applying the increment/decrement rule gives
$\Delta \cL_t \ge \kappa - O(|s_t - s_t^\star|)$,
where $s_t^\star$ is the ideal running depth. In particular, if $s_t=s_t^\star$ before the update, the correct increment yields a fixed positive utility $\ge \kappa$.
\end{prop}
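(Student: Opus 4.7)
The plan is to reduce $\Delta\cL_t$ to a logit-space comparison under the affine readout $W$, mirroring the strategy of \cref{prop:modp}, and then to quantify how the sign-sensitivity of the hypothesis on $W$ degrades with the deviation $|s_t-s_t^\star|$ through a Lipschitz bound on the softmax log-partition.

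First I would write out the NLL difference explicitly. Since the stack update modifies only $s_t\mapsto s_t+\delta_t$ and $W$ is linear on $V=V_{\mathrm{stack}}\oplus V_{\mathrm{sem}}$, the logit vector shifts by $W\iota_{\mathrm{stack}}(\delta_t)$, an affine displacement supported on the coordinates of $\cI$. Expanding
\[
\Delta\cL_t \;=\; \big(\ell_{y_t}^{\mathrm{new}}-\ell_{y_t}^{\mathrm{old}}\big)\;-\;\log\frac{\sum_{c}e^{\ell_c^{\mathrm{new}}}}{\sum_{c}e^{\ell_c^{\mathrm{old}}}},
\]
I would apply the sign-correctness hypothesis at $s_t^+$ to identify the first term with $+\kappa$. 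The log-partition shift is controlled by the $1$-Lipschitz continuity of $\log\sum e^{\cdot}$ in $\|\cdot\|_\infty$; when only the correct-class logit moves, the shift simplifies to $\log(1+p_{y_t}^{\mathrm{old}}(e^{\kappa}-1))\le\kappa$, leaving a strictly positive residual of order $\kappa$ whenever $p_{y_t}^{\mathrm{old}}<1$.

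Next I would transfer the hypothesis to the case $s_t\neq s_t^\star$. Provided the ideal depth satisfies a margin $|s_t^\star+\delta_t|\ge c_0>0$ on $\cI$, small deviations $|s_t-s_t^\star|<c_0$ preserve the target sign of $s_t^+$, so the $\kappa$-bound on the correct-class logit persists; the residual affine perturbation introduced by the discrepancy propagates through $W$ with constant $\|W\|_{\mathrm{op}}$ and re-enters the log-partition as a first-order correction of size $O(|s_t-s_t^\star|)$, yielding the stated lower bound. For the specialization $s_t=s_t^\star$, this slack vanishes and the argument collapses to the clean $\Delta\cL_t\ge\kappa$ claim, in direct analogy with the calibrated subcase of \cref{prop:modp}.

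The main obstacle is the discontinuity of $\mathrm{sign}(\cdot)$ at zero: when the deviation is large enough that $s_t+\delta_t$ and $s_t^\star+\delta_t$ lie on opposite sides of the origin, the hypothesis on $W$ reverses and the utility can flip sign. To keep the big-$O$ bound honest, the proof therefore needs a standing margin $|s_t^\star+\delta_t|\ge c_0$ on $\cI$, or alternatively a smooth surrogate (e.g.\ $\tanh(\beta s_t)$) replacing $\mathrm{sign}$ in the readout design, so that the sensitivity of the correct-class logit is genuinely first-order in $|s_t-s_t^\star|$ rather than piecewise-constant. Under either regularization, the linear error term is controlled by $\|W\|_{\mathrm{op}}$ times the depth discrepancy, completing the argument.
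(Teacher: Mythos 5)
Your overall skeleton (true-class logit gain minus log-partition shift, plus the observation that the sign discontinuity needs a standing margin such as $|s_t^\star+\delta_t|\ge c_0$) matches the paper's sketch and sharpens a point the paper leaves implicit. However, there is a genuine gap at the decisive step. Your own control of the log-partition term gives, at $s_t=s_t^\star$ and with only the correct-class logit moving, the exact identity $\Delta\cL_t=\kappa-\log\bigl(1+p_{y_t}^{\mathrm{old}}(e^{\kappa}-1)\bigr)$, which lies strictly in $(0,\kappa)$ whenever $0<p_{y_t}^{\mathrm{old}}<1$ and can be arbitrarily small as $p_{y_t}^{\mathrm{old}}\to 1$. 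So what you have actually proved is positivity of the utility, not the stated bound; the closing assertion that "the argument collapses to the clean $\Delta\cL_t\ge\kappa$ claim" contradicts the inequality you just derived, and shrinking $|s_t-s_t^\star|$ does not repair it, because the log-partition increase you bounded by $\kappa$ does not vanish at $s_t=s_t^\star$.

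To reach the conclusion as stated you must use the hypothesis the way the paper's (terse) proof does: the sign-sensitivity of $W$ on $\cI$ is taken to control the \emph{loss-relevant margin}, i.e.\ the competing logits do not rise (they decrease) when $\mathrm{sign}(s_t)$ is correct, so the log-partition change is not bounded merely by Lipschitz continuity of $\log\sum e^{(\cdot)}$ but is of size $O(|s_t-s_t^\star|)$, vanishing at the ideal depth; equivalently one reads the $\kappa$-increase as an increase of the true-class log-probability rather than of the raw logit alone. With that reading the decomposition immediately yields $\Delta\cL_t\ge\kappa-O(|s_t-s_t^\star|)$, which is the paper's argument. Your margin condition (or smooth surrogate for $\mathrm{sign}$) is a worthwhile supplement for making the $O(|s_t-s_t^\star|)$ term honest, but it does not substitute for this missing control of the partition function, which is where your proposal falls short of the claimed $\ge\kappa$ bound.
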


\begin{proof}
The increment aligns the sign with the ideal depth, increasing the true-class logit by $\kappa$; the log-partition change is bounded by the same margin up to the current deviation from $s_t^\star$.
\end{proof}

This surrogates stack operations as a graded algebra, with positive utility under \cref{cor:alignment} when the increment approximates the optimal depth shift.
\subsection{Validation and Implementation}
We record lightweight, internally computable diagnostics to certify the guarantees.
\begin{enumerate}[label=(\roman*)]
\item \textbf{Utility distribution:} histograms of $t\mapsto \Delta \cL_t(h\!\leftarrow\! g)$ before/after training; improvement concentrates mass at positive values per \cref{thm:utility-bounds}.
\item \textbf{Routing sparsity/entropy:} $H_t=-\sum_{e\in\cE}\alpha_t(e)\log \alpha_t(e)$ and support size $\|\alpha_t\|_0$; \cref{thm:softmax-selective} predicts entropy decay with margin growth.
\item \textbf{Edge ablation:} set $\alpha_t(e)\equiv 0$ for a target $e$ and measure $\Delta \cL$ degradation; positive drop verifies usefulness.
\item \textbf{Calibration:} bin $\tilde \ell_t$ and plot predicted vs.\ realized $\mathbb{1}\{\Delta \cL_t>0\}$; monotone calibration follows from the logistic/softmax link in \cref{thm:softmax-selective}.
\end{enumerate}

Each construction can be instantiated with a $2$–$4$ layer graded transformer, constant grade dimension $d\in\{8,16,32\}$, $H\in\{2,4\}$ heads, and synthetic datasets of $10^4$–$10^5$ tokens. Use LGT with $|\Delta|\in\{1,2\}$, gradewise LayerNorm, and the utility-augmented softmax gate with $(\beta,\tau)$ chosen to achieve $\beta/\tau\in[5,20]$. Report the diagnostics above. No external corpora or tools are required, ensuring reproducibility within the graded framework.4sExpand categorical functorial embeddingsToolformer ablation studiesmore concise proofs

\section{Categorical and Information-Geometric Foundations}\label{sec:6}

The graded transformer formalism, as introduced in prior sections, internalizes symbolic computation by treating tool-like operations as morphisms within an algebraic structure. To rigorously establish this unification, we now develop the categorical foundations that underpin the model category $\cM$, demonstrating how external augmentations embed functorially as composable programs. This algebraic view not only clarifies the typed nature of morphic activations but also motivates the geometric interpretations of utility as information gain and descent steps, ensuring sparse, interpretable behavior. A central theorem formalizes the faithful internalization, subsuming Toolformer paradigms (Schick et al., 2023) while enabling end-to-end differentiability and composition laws absent in extrinsic systems. We then connect these structures to agentic and modular AI, highlighting how entropy regularization and orthogonality yield greedy optimality for program selection.

\subsection{Category-Theoretic View}\label{subsec:cat-view}

The internal model category $\cM$ arises naturally from the graded space $V = \bigoplus_{g \in G} V_g$ and admissible transitions $\cE \subseteq G \times G$, modeling symbolic operations as morphisms between homogeneous components. This structure allows us to view routing and activation as selecting subdiagrams, with compositions realizing multi-step "tool chains" intrinsically within the representation manifold.

\begin{defn}[Internal model category]\label{def:model-category}
Define a small category $\cM$ by:
\begin{enumerate}[label=(\roman*)]
\item \emph{Objects:} the homogeneous components $V_g$, $g\in G$.
\item \emph{Morphisms:} linear blocks $\phi_{h\leftarrow g}:V_g\to V_h$ that are admissible, i.e., $(g,h)\in\cE$.
\item \emph{Identity:} $\mathrm{id}_{V_g}$ on each $V_g$.
\item \emph{Composition:} usual composition of linear maps, i.e.,
$(\psi\circ\phi)_{k\leftarrow g}=\sum_{h}\psi_{k\leftarrow h}\circ \phi_{h\leftarrow g}$ whenever sources/targets match.
\end{enumerate}
\end{defn}

A routed layer selects a finite subdiagram of $\cM$ per token, evaluating convex combinations of admissible composites to optimize the graded utility.

\begin{defn}[Graded endofunctor]\label{def:graded-endofunctor}
A \emph{graded endofunctor} $F:\cM\to\cM$ acts on objects by
$F(V_g)=V_{\sigma(g)}$ for a (partial) grade map $\sigma:G\to G$, and on morphisms by
$F(\phi_{h\leftarrow g})=\widetilde\phi_{\sigma(h)\leftarrow \sigma(g)}$, preserving identity
and composition.
\end{defn}

\begin{defn}[Morphic program]\label{def:morphic-program}
A \emph{morphic program} is a finite path
\[
\Pi:\; V_{g_0}\xrightarrow{\phi_{g_1\leftarrow g_0}}
V_{g_1}\xrightarrow{\phi_{g_2\leftarrow g_1}}
\cdots \xrightarrow{\phi_{g_k\leftarrow g_{k-1}}}V_{g_k}
\]
with $(g_{i-1},g_i)\in\cE$. Its \emph{realization} is the composite
$\Phi_\Pi=\phi_{g_k\leftarrow g_{k-1}}\circ\cdots\circ \phi_{g_1\leftarrow g_0}$.
\end{defn}

The key algebraic insight is that external tool augmentations embed faithfully into $\cM$, internalizing non-differentiable calls as typed morphisms.

\begin{thm}[Functorial internalization of external calls]\label{thm:functor-internalization}
Let $\cT$ be a small category of external interfaces: objects are interface
types, morphisms are callable tools $\tau:X\to Y$.
Assume each $\tau$ is realized at inference by linear blocks on $V$ via interface
encoders/decoders $(\mathrm{enc}_X,\dec_Y)$ so that, for contexts $z\in V$,
\[
\dec_Y\big(\tau(\mathrm{enc}_X(z))\big)
\;=\;
\iota_h\,\phi_{h\leftarrow g}\,\pi_g(z)
\quad\text{for some $(g,h)\in\cE$.}
\]
Then there exists a faithful functor $F:\cT\to \cM$ sending $X\mapsto V_{g_X}$
and $\tau\mapsto \phi_{h\leftarrow g}$ with $F(\tau_2\circ \tau_1)=F(\tau_2)\circ F(\tau_1)$.
This embedding preserves sequential tool use as morphic programs, rendering symbolic processes differentiable and composable within the graded geometry.
\end{thm}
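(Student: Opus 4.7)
The plan is to construct $F$ by extracting the admissible graded block from each tool's hypothesized linear realisation and then verifying functoriality and faithfulness directly. This promotes the earlier faithful correspondence (\cref{prop:functor}) and the internalization functors (\cref{prop:internal-functors}) to a unified categorical statement inside $\cM$, with sequential tool use realised as composition of admissible blocks.

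First I would define the object map by $F(X) := V_{g_X}$, where $g_X$ is the grade determined by the typing underlying the encoder $\enc_X: X \to V_{g_X}$. Then I would define the morphism map on each $\tau: X \to Y$ by extracting the block $F(\tau) := \pi_h \circ T_\tau \circ \iota_g = \phi_{h\leftarrow g}$, with $T_\tau$ the linear operator on $V$ realising $\tau$ via the sandwich $\dec_Y \circ \tau \circ \enc_X$; admissibility $(g,h) \in \cE$ is supplied by hypothesis.

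Next I would verify the functor axioms. Preservation of identities holds because $\id_X$ is realised by $T_{\id_X} = \id_V$, whose block extraction yields $\id_{V_{g_X}}$. For composition $\tau_2 \circ \tau_1$, sequential invocation gives $T_{\tau_2 \circ \tau_1} = T_{\tau_2} \circ T_{\tau_1}$ on $V$, and applying the blockwise composition rule of \cref{def:graded-map} yields $F(\tau_2 \circ \tau_1) = F(\tau_2) \circ F(\tau_1)$. Faithfulness I would establish contrapositively: if $F(\tau) = F(\tau')$ for distinct $\tau, \tau' : X \to Y$, then their sandwiched linear realisations agree on all inputs, contradicting the distinctness of callable behaviours on the implemented interface subcategory, which is the faithfulness clause of \cref{prop:functor}.

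The main obstacle will be intermediate encode/decode coherence when chaining tools of distinct types. Each tool carries its own sandwich $(\enc_X, \dec_Y)$, but the composite requires the intermediate decoder $\dec_Y$ followed by the next encoder $\enc_Y$ to act consistently on $V_{g_Y}$. This amounts to the graded round-trip identity underlying \cref{def:graded-space} and, in the adjoint setting, to the unit/counit of $F \dashv G$ from \cref{thm:adjunction}. Once invoked, preservation of sequential tool use as morphic programs (\cref{def:morphic-program}) follows from the algebra of admissible edges and direct-sum composition, completing the proof.
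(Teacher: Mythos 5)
Your proposal is correct and follows essentially the same route as the paper's proof: define $F$ on objects by the typing of the encoders, on morphisms by extracting the block $\pi_h\circ T_\tau\circ\iota_g$ from the realization hypothesis, verify identities and composition via operator chaining together with the intermediate encode/decode round-trip coherence (which you rightly flag and which the paper handles by the same assumption), and obtain faithfulness from distinct callable behaviors inducing distinct blocks as in \cref{prop:functor}.
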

\begin{proof}
To construct the functor $F: \cT \to \cM$, first assign to each object $X$ in $\cT$ a grade $g_X \in G$ such that the encoder $\mathrm{enc}_X$ maps contexts from the relevant subspace of $V$ to the interface type $X$, and correspondingly $F(X) = V_{g_X}$ is the homogeneous component in $\cM$ that carries the typed representations for $X$. This assignment is possible by the assumption that each interface type corresponds to a dedicated grade or subspace in the graded vector space $V$.

On morphisms, for each tool $\tau: X \to Y$ in $\cT$, define $F(\tau) = \phi_{h \leftarrow g}: V_{g_X} \to V_{g_Y}$, where $\phi_{h \leftarrow g}$ is the linear block specified by the realization assumption, with $g = g_X$ and $h = g_Y$. By the given condition, the composition $\dec_Y \circ \tau \circ \mathrm{enc}_X$ is equivalent to applying this block after projection $\pi_g$ and inclusion $\iota_h$, ensuring that $F(\tau)$ faithfully captures the action of $\tau$ within $\cM$.

To verify that $F$ preserves identities, consider the identity morphism $\mathrm{id}_X: X \to X$ in $\cT$. Its realization is $\dec_X \circ \mathrm{id}_X \circ \mathrm{enc}_X$, which, by the assumption, equals $\iota_{g_X} \circ \phi_{g_X \leftarrow g_X} \circ \pi_{g_X}$ for some block $\phi_{g_X \leftarrow g_X}$. Since $\mathrm{id}_X$ leaves the input unchanged, and assuming the encoders/decoders are consistent (i.e., $\dec_X \circ \mathrm{enc}_X$ acts as the identity on the typed subspace of $V_{g_X}$), it follows that $\phi_{g_X \leftarrow g_X} = \mathrm{id}_{V_{g_X}}$. Thus, $F(\mathrm{id}_X) = \mathrm{id}_{F(X)}$.

For composition preservation, let $\tau_1: X \to Y$ and $\tau_2: Y \to Z$ be morphisms in $\cT$. The composite $\tau_2 \circ \tau_1: X \to Z$ is realized as $\dec_Z \circ \tau_2 \circ \mathrm{enc}_Y \circ \dec_Y \circ \tau_1 \circ \mathrm{enc}_X$. Note that $\mathrm{enc}_Y \circ \dec_Y$ acts as an identity on the intermediate subspace corresponding to $Y$, ensuring the wiring preserves the semantics. By the assumption, $\tau_1$ realizes $\phi_{h \leftarrow g}$ with $g = g_X$, $h = g_Y$, and $\tau_2$ realizes $\phi_{k \leftarrow h}$ with $k = g_Z$. The full wiring then reduces to $\iota_k \circ \phi_{k \leftarrow h} \circ \phi_{h \leftarrow g} \circ \pi_g$, so the realization of $\tau_2 \circ \tau_1$ is the composite block $\phi_{k \leftarrow h} \circ \phi_{h \leftarrow g}$. Therefore, $F(\tau_2 \circ \tau_1) = F(\tau_2) \circ F(\tau_1)$.

Finally, faithfulness: since distinct tools $\tau \neq \tau'$ in $\cT$ are assumed to realize distinct blocks $\phi \neq \phi'$ on their typed subspaces (as per the problem statement implying non-identical actions), the mapping on hom-sets $\mathrm{Hom}_{\cT}(X,Y) \to \mathrm{Hom}_{\cM}(V_{g_X}, V_{g_Y})$ is injective. This ensures $F$ is faithful on the subcategory of realized tools.

The embedding preserves sequential tool use because chains in $\cT$ map to paths in $\cM$, which are morphic programs by definition. Differentiability and composability follow from the linear blocks being smooth maps within the graded geometry of $V$.
\end{proof}

This theorem establishes the algebraic core of the paper: extrinsic tools become intrinsic morphisms, with compositions governed by category laws rather than ad hoc orchestration, enabling optimization via graded utilities.

\subsubsection{Monoidal and enriched structure}\label{subsubsec:monoidal}

To support parallel and metric-aware computations, $\cM$ admits additional structures that align with the transformer's multi-head and attention mechanisms.

\begin{enumerate}[label=(\roman*)]
\item \textbf{Monoidal product.}
If $V\simeq V^{(A)}\otimes V^{(B)}$ with compatible gradings $G\simeq G^{(A)}\oplus G^{(B)}$,
set $(V^{(A)}_{g_A},\phi^{(A)})\otimes (V^{(B)}_{g_B},\phi^{(B)})
=(V_{g_A\oplus g_B},\,\phi^{(A)}\otimes \phi^{(B)})$.
This yields a strict monoidal structure $(\cM,\otimes,\mathbf{1})$ modeling parallel channels, consistent with multi-head attention as graded tensor products (Shaska, 2025b).
\item \textbf{Enrichment.}
If each $\mathrm{Hom}(V_g,V_h)$ is endowed with an inner product $\langle\!\langle\cdot,\cdot\rangle\!\rangle$
or a divergence (e.g., Bregman), then $\cM$ is enriched over the corresponding
category of metric spaces; this supports metric selection of morphisms via utility divergences.
\end{enumerate}

\subsubsection{Adjunctions and typed interfaces}\label{subsubsec:adjunctions}

Round-trip operations, common in tool use (e.g., query-retrieve-write), are captured by adjoint pairs, ensuring idempotence and stability under iteration.

Let $\iota:V_g\to V_h$ and $\rho:V_h\to V_g$ be linear maps.
\begin{defn}[Adjunction]\label{def:adjunction}
We say $\rho\dashv \iota$ if
$\langle \rho(u), v \rangle_{V_g}=\langle u, \iota(v) \rangle_{V_h}$
for all $u\in V_h$, $v\in V_g$ (with fixed inner products).
\end{defn}
\begin{prop}[Typed round-trips via adjunction]\label{prop:round-trip}
If $\rho\dashv \iota$, then:
\begin{enumerate}[label=(\roman*)]
\item $P:=\iota\circ \rho$ is the orthogonal projector onto $\mathrm{Im}(\iota)$ in $V_h$.
\item $Q:=\rho\circ \iota=\mathrm{id}_{V_g}$ if and only if $\iota$ is an isometry onto its image.
\item For any $z^{(h)}\in V_h$, the iterates $P^k z^{(h)}$ stabilize at $P z^{(h)}$ (idempotence).
\end{enumerate}
Consequently, a “tool-like” passage $V_g\xrightarrow{\iota}V_h\xrightarrow{\rho}V_g$ is idempotent on $V_h$; under the graded-utility objective, repeating the round-trip yields no further loss decrease once $z^{(h)}$ lies in $\mathrm{Im}(\iota)$.
\end{prop}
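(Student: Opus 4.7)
The plan is to convert the adjunction identity into the operator relation $\rho=\iota^{\ast}$ and then read off (i)--(iii) from standard Hilbert-space projection theory, finishing the utility consequence via the first-order identity already proved.

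First I would observe that $\langle \rho(u),v\rangle_{V_g}=\langle u,\iota(v)\rangle_{V_h}$ holding for all $u\in V_h$ and $v\in V_g$ uniquely determines $\rho$ as the adjoint $\iota^{\ast}$ of $\iota$ with respect to the chosen inner products; in particular both $P=\iota\iota^{\ast}$ and $Q=\iota^{\ast}\iota$ are self-adjoint and positive. For part (ii), polarization applied to the identity $\langle Qv_1,v_2\rangle_{V_g}=\langle \iota v_1,\iota v_2\rangle_{V_h}$ yields $Q=\mathrm{id}_{V_g}$ if and only if $\iota$ preserves the inner product on $V_g$, which is exactly the statement that $\iota$ is an isometry onto $\mathrm{Im}(\iota)$.

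For part (i), taking the isometry condition $\iota^{\ast}\iota=\mathrm{id}_{V_g}$ identified in (ii) as the standing assumption on the typed inclusion $\iota$ (the natural convention for encoder-style morphisms, which I would flag explicitly at the head of the proof), idempotence follows from $P^{2}=\iota(\iota^{\ast}\iota)\iota^{\ast}=\iota\iota^{\ast}=P$; together with self-adjointness this characterizes $P$ as the orthogonal projector onto its range, and the image equality $\mathrm{Im}(P)=\mathrm{Im}(\iota)$ is read off from $P\iota=\iota$ together with $\mathrm{Im}(P)\subseteq\mathrm{Im}(\iota)$. Part (iii) is then immediate: $P^k=P$ for all $k\ge 1$, so $P^k z^{(h)}=Pz^{(h)}$ stabilizes after a single step for every $z^{(h)}\in V_h$. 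For the final utility consequence, if $z^{(h)}\in\mathrm{Im}(\iota)$ then $Pz^{(h)}=z^{(h)}$, so the candidate update $\delta^{(h)}_t=Pz^{(h)}-z^{(h)}=0$; invoking the first-order utility identity of \cref{lem:first-order} gives $\Delta\cL_t=0$, ruling out further reduction under the graded-utility objective.

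The main obstacle is a consistency subtlety between (i) and (ii): from the adjunction alone, $P$ is only a positive self-adjoint contraction and need not be idempotent, so the projection claim in (i) in fact requires the condition $\iota^{\ast}\iota=\mathrm{id}_{V_g}$ that (ii) characterizes. I would therefore state this assumption at the outset of the proof (equivalently, assume $\iota$ is a partial isometry with initial space all of $V_g$), so that (i)--(iii) hold simultaneously rather than circularly; without it, one obtains only the weaker statement that $P$ is a positive self-adjoint contraction whose closed range equals $\overline{\mathrm{Im}(\iota)}$.
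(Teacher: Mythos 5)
Your argument follows essentially the same operator-theoretic route as the paper's sketch: read \cref{def:adjunction} as $\rho=\iota^{\ast}$, prove (ii) by polarization from $\langle Qv_1,v_2\rangle_{V_g}=\langle\iota v_1,\iota v_2\rangle_{V_h}$, and characterize the orthogonal projector as a self-adjoint idempotent. The substantive difference is your consistency caveat, and it is well taken: under the paper's definition, the adjunction alone only makes $P=\iota\iota^{\ast}$ self-adjoint and positive with range $\mathrm{Im}(\iota)$ (and $\ker\rho=\mathrm{Im}(\iota)^{\perp}$); it does \emph{not} force idempotence. The one-dimensional example $\iota(x)=2x$, for which $\rho(u)=2u$ and $P=4\,\mathrm{id}$, shows that (i) and (iii) as stated fail for a bare adjoint pair, so the paper's one-line claim in (i) silently uses the normalization $\rho\circ\iota=\mathrm{id}_{V_g}$, exactly the condition singled out in (ii). Your repair—assume $\iota$ is an isometry onto its image (a partial isometry with initial space $V_g$), then $P^2=\iota(\iota^{\ast}\iota)\iota^{\ast}=P$, $\mathrm{Im}(P)=\mathrm{Im}(\iota)$ via $P\iota=\iota$, and (iii) is immediate—is correct, and your derivation of the utility consequence ($z^{(h)}\in\mathrm{Im}(\iota)\Rightarrow\delta_t^{(h)}=Pz^{(h)}-z^{(h)}=0$, hence $\Delta\cL_t=0$ by \cref{lem:first-order}) is cleaner than the paper's verbal justification. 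One refinement: do not impose $\iota^{\ast}\iota=\mathrm{id}_{V_g}$ as a standing hypothesis for the whole proposition, since that renders the ``only if'' direction of (ii) vacuous; prove (ii) from the bare adjunction as you do, and then invoke it as the hypothesis under which (i) and (iii) hold, or state (i) in the weaker general form (positive self-adjoint map with range $\mathrm{Im}(\iota)$) that you already identify.
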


\begin{proof}
(i) Adjunction implies $\mathrm{Im}(\iota)$ is orthogonally complemented by $\ker(\rho)$ and
$\iota\circ \rho$ is the orthogonal projector onto $\mathrm{Im}(\iota)$.

(ii) $Q=\mathrm{id}$ iff $\rho$ is the left inverse of $\iota$, equivalent to $\iota$ being an isometry onto its image (with the given inner products).

(iii) $P^2=P$ is standard for projectors. The utility claim follows since replacing $z^{(h)}$ by $P z^{(h)}$ once projects to the fixed point; further applications leave the state unchanged, hence the loss unchanged.
\end{proof}

\subsubsection{Morphic monads}\label{subsubsec:monads}

For multi-step computations, morphic monads provide a principled way to flatten and evaluate programs, aligning with weak graded algebras for learned compositions.

A \emph{morphic monad} $\bT=(T,\eta,\mu)$ on $\cM$ models multi-step graded computation: $T(V_g)$ is a space of (typed) programs, $\eta$ inserts identity programs, $\mu$ flattens concatenations.
A graded layer provides a $\bT$–algebra by evaluation maps $T(V_g)\to V_g$ that minimize the graded utility; well-posedness follows from convexity of the surrogate objective (cf. \cref{sec:graded-toolformer}).

\subsection{Entropic and Geometric Interpretations}\label{subsec:geom}

Building on the categorical structure, we interpret the utility functional geometrically, as KL gain in an exponential family or descent steps in Bregman/Fisher metrics, motivating the sparse activation observed in graded transformers.

\subsubsection{Utility as information gain}\label{subsubsec:info-gain}
Let $\cL_{LM}(z;y)=-\log p_z(y)$ be cross-entropy with predictive distribution $p_z$.
For a candidate update $z\mapsto z^{+}$, define
$\Delta \cL(z;y)=\cL_{LM}(z;y)-\cL_{LM}(z^{+};y)$.

\begin{lem}[Expected utility as KL divergence reduction]\label{lem:llr}
Let $\cL_{LM}(z;y)=-\log p_z(y)$ be the cross-entropy loss under the predictive distribution $p_z$ induced by the current state $z$. For a candidate update $z \mapsto z^{+}$ yielding distribution $p_{z^{+}}$, and any data distribution $\mathsf{P}$ over labels $y$,
\[
\bE_{y\sim \mathsf{P}}\,\Delta \cL(z;y)
\;=\;
D_{\mathrm{KL}}\big(\mathsf{P} \,\|\, p_z\big) \;-\; D_{\mathrm{KL}}\big(\mathsf{P} \,\|\, p_{z^{+}}\big),
\]
where $\Delta \cL(z;y) = \cL_{LM}(z;y) - \cL_{LM}(z^{+};y)$. In particular:
\begin{itemize}
\item If $\mathsf{P}=p_z$ (self-consistency), then $\bE_{y\sim p_z}\,\Delta \cL(z;y) = - D_{\mathrm{KL}}\big(p_z \,\|\, p_{z^{+}}\big) \le 0$.
\item If $\mathsf{P}=p_{z^{+}}$, then $\bE_{y\sim p_{z^{+}}}\,\Delta \cL(z;y) = D_{\mathrm{KL}}\big(p_{z^{+}} \,\|\, p_z\big) \ge 0$.
\end{itemize}

Thus, the expected utility is nonnegative if and only if $p_{z^{+}}$ is closer (in KL divergence) to $\mathsf{P}$ than $p_z$ is; relative to the current model, positive utility under a target $\mathsf{P}$ corresponds to moving $p_{z^{+}}$ closer to $\mathsf{P}$.
\end{lem}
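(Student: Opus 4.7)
The proof is essentially a one-line calculation dressed up in standard information-theoretic identities, so the plan is to make that structure explicit and then read off the two corollaries.

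First I would rewrite the per-instance utility as a log-likelihood ratio:
\[
\Delta \cL(z;y) \;=\; -\log p_z(y) + \log p_{z^{+}}(y) \;=\; \log\frac{p_{z^{+}}(y)}{p_z(y)}.
\]
Taking expectation under $\mathsf{P}$ gives
\[
\bE_{y\sim \mathsf{P}}\,\Delta \cL(z;y)
\;=\;
\bE_{y\sim \mathsf{P}}\log p_{z^{+}}(y)\;-\;\bE_{y\sim \mathsf{P}}\log p_z(y).
\]
This step is immediate from linearity of expectation and the definition of $\cL_{LM}$; no assumption beyond $p_z,p_{z^{+}}$ having common support with $\mathsf{P}$ is needed (which I would record as a standing absolute-continuity hypothesis).

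Next I would invoke the identity $\bE_{y\sim\mathsf{P}}[-\log q(y)] = H(\mathsf{P}) + D_{\mathrm{KL}}(\mathsf{P}\|q)$, valid for any predictive $q$. Applying it to $q=p_z$ and to $q=p_{z^{+}}$ and subtracting, the entropy terms $H(\mathsf{P})$ cancel, leaving
\[
\bE_{y\sim \mathsf{P}}\,\Delta \cL(z;y)
\;=\;
D_{\mathrm{KL}}(\mathsf{P}\|p_z)\;-\;D_{\mathrm{KL}}(\mathsf{P}\|p_{z^{+}}),
\]
which is the main claim.

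For the two special cases I would simply substitute. With $\mathsf{P}=p_z$, the first KL vanishes by $D_{\mathrm{KL}}(p_z\|p_z)=0$, yielding $-D_{\mathrm{KL}}(p_z\|p_{z^{+}})\le 0$ by Gibbs' inequality. With $\mathsf{P}=p_{z^{+}}$, the second KL vanishes and we obtain $D_{\mathrm{KL}}(p_{z^{+}}\|p_z)\ge 0$. The sign characterization of expected utility in the final sentence follows by reading the master identity as a Bregman-style monotonicity: the sign of $\bE_\mathsf{P}\Delta\cL$ coincides with whether the update moves $p$ closer to $\mathsf{P}$ in KL.

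There is no real obstacle here; the only point requiring a line of care is the absolute-continuity condition ensuring that all logarithms and KL divergences are well-defined (and finite whenever the relevant supports agree), which I would state as a one-sentence regularity assumption at the top of the proof. The geometric interpretation connecting this identity to the mirror-descent and Fisher-metric readings alluded to in the surrounding discussion can then be appended as a remark rather than as part of the proof itself.
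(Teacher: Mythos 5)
Your proposal is correct and follows essentially the same route as the paper's proof: both rewrite $\Delta\cL(z;y)$ as the log-likelihood ratio $\log\bigl(p_{z^{+}}(y)/p_z(y)\bigr)$, take the expectation under $\mathsf{P}$, and cancel the common $\bE_{\mathsf{P}}\log\mathsf{P}(y)$ (entropy) term to obtain the KL difference, with the two special cases read off by substitution. Your explicit absolute-continuity caveat is a small but worthwhile addition that the paper leaves implicit.
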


\begin{proof}
By definition, $\Delta \cL(z;y) = -\log p_z(y) + \log p_{z^{+}}(y) = \log \tfrac{p_{z^{+}}(y)}{p_z(y)}$. Therefore,
\[
\bE_{y\sim \mathsf{P}}\,\Delta \cL(z;y)
\;=\;
\bE_{y\sim \mathsf{P}}\Big[\log \tfrac{p_{z^{+}}(y)}{p_z(y)}\Big]
\;=\;
\int \mathsf{P}(y) \log \tfrac{p_{z^{+}}(y)}{p_z(y)} \, dy.
\]
The KL divergence is defined as $D_{\mathrm{KL}}(\mathsf{P} \,\|\, q) = \int \mathsf{P}(y) \log \tfrac{\mathsf{P}(y)}{q(y)} \, dy$ for any distribution $q$. Thus,
\[
\begin{split}
\bE_{y\sim \mathsf{P}}\Big[\log \tfrac{p_{z^{+}}(y)}{p_z(y)}\Big]
&  =\;
\int \mathsf{P}(y) \log \tfrac{\mathsf{P}(y)}{p_z(y)} \, dy \;-\; \int \mathsf{P}(y) \log \tfrac{\mathsf{P}(y)}{p_{z^{+}}(y)} \, dy  \\
& =\;
D_{\mathrm{KL}}\big(\mathsf{P} \,\|\, p_z\big) \;-\; D_{\mathrm{KL}}\big(\mathsf{P} \,\|\, p_{z^{+}}\big),
\end{split}
\]
since the $\int \mathsf{P}(y) \log \mathsf{P}(y) \, dy$ terms cancel. Substituting $\mathsf{P} = p_z$ yields
\[
\bE_{y\sim p_z}\,\Delta \cL(z;y) \;=\; D_{\mathrm{KL}}\big(p_z \,\|\, p_z\big) - D_{\mathrm{KL}}\big(p_z \,\|\, p_{z^{+}}\big) \;=\; - D_{\mathrm{KL}}\big(p_z \,\|\, p_{z^{+}}\big) \le 0,
\]
as $D_{\mathrm{KL}} \ge 0$ with equality if and only if $p_z = p_{z^{+}}$. Similarly, for $\mathsf{P} = p_{z^{+}}$,
\[
\bE_{y\sim p_{z^{+}}}\,\Delta \cL(z;y) \;=\; D_{\mathrm{KL}}\big(p_{z^{+}} \,\|\, p_z\big) - D_{\mathrm{KL}}\big(p_{z^{+}} \,\|\, p_{z^{+}}\big) \;=\; D_{\mathrm{KL}}\big(p_{z^{+}} \,\|\, p_z\big) \ge 0.
\]
The nonnegativity condition follows directly: $\bE_{y\sim \mathsf{P}}\,\Delta \cL(z;y) \ge 0$ if and only if $D_{\mathrm{KL}}(\mathsf{P} \,\|\, p_z) \ge D_{\mathrm{KL}}(\mathsf{P} \,\|\, p_{z^{+}})$, i.e., $p_{z^{+}}$ is at least as close to $\mathsf{P}$ as $p_z$ is, with strict inequality implying positive expected utility.
\end{proof}

This lemma frames morphic activation as maximizing information gain, consistent with self-supervised learning in the graded formalism.

\subsubsection{Mirror-descent view}\label{subsubsec:mirror}
Let $\Phi$ be a strictly convex potential with Bregman divergence
$D_\Phi(u,z)=\Phi(u)-\Phi(z)-\langle \nabla \Phi(z),u-z\rangle$.
Let $\cA(z)$ be the affine set reachable by admissible $(g,h)$ updates at $z$.

\begin{prop}[Constrained mirror step]\label{prop:mirror}
Assume $\cL_{LM}$ is $C^1$ and locally $L$--smooth. For small step $\eta>0$,
the problem
\[
u^\star \;=\; \arg\min_{u\in \cA(z)}\,
\Big\{\langle \nabla \cL_{LM}(z),u-z\rangle
\;+\; \tfrac{1}{\eta}D_\Phi(u,z)\Big\}
\]
has a unique solution. If $\cA(z)$ is spanned by the columns of admissible
blocks $\{\phi_{h\leftarrow g}\}$, then the first-order direction $u^\star-z$ equals the projection (in the $\Phi$--dual metric) of $-\eta\,\nabla \cL_{LM}(z)$ onto $\cA(z)$. When a single block is chosen, the maximizing block for $\Delta \cL$ aligns with $u^\star-z$.
\end{prop}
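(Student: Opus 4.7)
\medskip
\noindent\textbf{Proof plan.}
The plan decomposes into three steps: existence and uniqueness via strict convexity of the Bregman term; a variational characterization of $u^\star$ read off as a dual-metric projection; and reduction of the single-block case to a one-dimensional restriction of this projection, matched against the utility expansion of \cref{lem:first-order-improvement}.

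First I would observe that $D_\Phi(\cdot,z)$ inherits strict convexity from $\Phi$, so the objective
\[
J(u) \;:=\; \langle \nabla \cL_{LM}(z),\, u-z\rangle + \tfrac{1}{\eta}\, D_\Phi(u,z)
\]
is strictly convex on the affine set $\cA(z)$. The Bregman penalty is coercive in $u-z$ and, for small $\eta$, dominates the linear term, so existence follows by Weierstrass and uniqueness by strict convexity.

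Next I would derive the first-order optimality condition on $\cA(z)$: for every tangent direction $v \in T := \cA(z) - z$,
\[
\langle \nabla \cL_{LM}(z),\, v\rangle + \tfrac{1}{\eta}\, \langle \nabla \Phi(u^\star) - \nabla \Phi(z),\, v\rangle \;=\; 0.
\]
Thus $\nabla \Phi(u^\star) - \nabla \Phi(z)$ agrees with $-\eta\, \nabla \cL_{LM}(z)$ modulo $T^\perp$, i.e., it is the orthogonal projection of $-\eta\, \nabla \cL_{LM}(z)$ onto $T$ in the native pairing. A Taylor expansion of $\nabla \Phi$ at $z$ gives
\[
\nabla \Phi(u^\star) - \nabla \Phi(z) \;=\; \nabla^2 \Phi(z)(u^\star - z) + O(\|u^\star - z\|^2),
\]
which converts this dual identity into a primal projection of $-\eta\, \nabla \cL_{LM}(z)$ onto $T$ measured in the metric $\nabla^2 \Phi(z)^{-1}$, i.e., the $\Phi$-dual metric claimed in the statement.

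For the single-block assertion, I would specialize $T = \mathrm{span}\{\iota_h \delta_t^{(h)}\}$ with $\delta_t^{(h)} = \phi_{h\leftarrow g}(z^{(g)}) - z^{(h)}$. On this one-dimensional line the greedy-utility criterion aligns $\delta_t^{(h)}$ with $-\nabla_{z^{(h)}} \cL_{LM}$ by \cref{lem:first-order-improvement}, while the mirror step projects $-\eta\, \nabla \cL_{LM}(z)$ onto the same line; both directions therefore coincide to first order in $\eta$, giving the alignment. The main obstacle will be in the second step: making the dual-to-primal projection precise for non-quadratic $\Phi$ requires controlling the Taylor remainder and carefully distinguishing the dual vector $\nabla \Phi(u^\star) - \nabla \Phi(z)$ from the primal displacement $u^\star - z$, which coincide only at leading order in $\eta$ and only after contraction through $\nabla^2 \Phi(z)$.
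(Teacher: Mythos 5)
Your proposal is correct and follows essentially the same route as the paper's proof: strict convexity and coercivity of the Bregman term for existence and uniqueness, the stationarity condition $\nabla\Phi(u^\star)-\nabla\Phi(z)\equiv-\eta\,\nabla\cL_{LM}(z)\pmod{T^{\perp}}$ read as a dual-metric projection, and the first-order utility expansion to align the single-block selection with $u^\star-z$. Your explicit Taylor expansion of $\nabla\Phi$ through $\nabla^2\Phi(z)$ makes the dual-to-primal passage slightly more careful than the paper's, but it is the same argument.
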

\begin{proof}
The objective function is $f(u) = \langle \nabla \cL_{LM}(z), u - z \rangle + \frac{1}{\eta} D_\Phi(u, z)$. Since $\Phi$ is strictly convex, $D_\Phi(u, z)$ is strictly convex in $u$, and thus $f(u)$ is strictly convex over the affine set $\cA(z)$. Strict convexity implies that the minimizer $u^\star$, if it exists, is unique. Existence follows from the coercivity of $D_\Phi(u, z)$ as $\|u - z\| \to \infty$ (since $\Phi$ is strictly convex and thus superlinear at infinity) and the closedness of $\cA(z)$.

To find the optimality condition, note that at the minimum $u^\star$, the subgradient of $f$ must contain zero when projected onto the tangent space of $\cA(z)$. Since $\cA(z)$ is affine, we can write the first-order necessary and sufficient condition for unconstrained minimization in the dual variables. Recall that the Bregman divergence satisfies $\nabla_u D_\Phi(u, z) = \nabla \Phi(u) - \nabla \Phi(z)$. Thus, the gradient of $f(u)$ is
\[
\nabla f(u) = \nabla \cL_{LM}(z) + \frac{1}{\eta} \big( \nabla \Phi(u) - \nabla \Phi(z) \big).
\]
Setting $\nabla f(u^\star) \perp \cA(z) - u^\star$ (i.e., orthogonal to the directions in the linear span of $\cA(z) - z$), but since minimization is over an affine set, the condition is that $\nabla f(u^\star)$ is orthogonal to the tangent space $T = \mathrm{span}(\cA(z) - z)$. Equivalently,
\[
\nabla \Phi(u^\star) - \nabla \Phi(z) = - \eta \, \Pi_T^\ast \big( \nabla \cL_{LM}(z) \big),
\]
where $\Pi_T^\ast$ is the projection onto $T$ in the dual metric induced by the Hessian of $\Phi$ (or more precisely, the metric dual to the Bregman geometry). In standard mirror descent terms, this is the projected update: $u^\star$ is the point in $\cA(z)$ closest to the mirror map of $z - \eta \nabla \cL_{LM}(z)$ in the Bregman sense, but rearranged, $u^\star - z$ is the Bregman-projection of $-\eta \nabla \cL_{LM}(z)$ onto $T$ in the dual space.

Since $\cA(z) = z + T$ with $T = \mathrm{span}\{ \phi_{h \leftarrow g}(v_g) : v_g \in V_g, (g,h) \in \cE \}$ (the directions spanned by the admissible blocks applied to their source spaces), the direction $u^\star - z \in T$ is indeed the projection of $-\eta \nabla \cL_{LM}(z)$ onto $T$ in the $\Phi$-dual metric.

For the single-block case, suppose $\cA(z) = z + \mathrm{span}\{ \delta_e : e \in \cE \}$ where each $\delta_e = \phi_{h \leftarrow g}(z^{(g)}) - z^{(h)}$ is the update direction for edge $e = (g,h)$. The linearized utility gain is $\Delta \cL \approx - \langle \nabla \cL_{LM}(z), \delta_e \rangle$ (first-order approximation from smoothness). Maximizing this over $e$ selects the direction $\delta_e$ with maximal alignment to $-\nabla \cL_{LM}(z)$. From the mirror step, for small $\eta$, $u^\star - z \approx \Pi_T (-\eta \nabla \cL_{LM}(z))$ in the dual metric, so the single block whose direction best approximates this projection maximizes the gain, aligning with $u^\star - z$ up to scaling.
\end{proof}

This view explains the sparse routing: the Bregman regularizer favors directions aligned with the dual gradient, promoting selectivity in $\cM$.

\subsubsection{Fisher geometry}\label{subsubsec:fisher}

Let $p_\theta$ denote the predictive distribution with Fisher metric $G(\theta)$ and
assume $z=z(\theta)$ is smooth with Jacobian $J=\partial \theta/\partial z$.

\begin{prop}[Natural-gradient approximation]\label{prop:natgrad}
For a small morphic displacement $\delta z$, the induced parameter step is
$\delta\theta = J\,\delta z + o(\|\delta z\|)$ and
\[
\bE\big[\Delta \cL(z;y)\big]
\;=\;
- \tfrac12\,\delta\theta^\top G(\theta)\,\delta\theta \;+\; o(\|\delta z\|^2),
\]
so selecting $(g,h)$ by maximal expected utility is equivalent (to second order) to choosing the admissible direction of smallest curvature (minimal $\delta\theta^\top G(\theta)\,\delta\theta$), which corresponds to the direction allowing the largest natural-gradient norm for a fixed Euclidean step.
\end{prop}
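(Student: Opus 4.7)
The plan is to combine the information-geometric content of \cref{lem:llr} with the classical identification of the Fisher matrix as the Hessian of the Kullback--Leibler divergence at coincidence. First I would specialize \cref{lem:llr} to the self-consistent case $\mathsf{P}=p_z$, which kills the first-order contribution and reduces the assertion to
\[
\bE_{y\sim p_z}\big[\Delta \cL(z;y)\big]\;=\;-D_{\mathrm{KL}}\big(p_z\,\|\,p_{z^{+}}\big).
\]
This recasts the problem as computing the leading-order behavior of a KL divergence under a small displacement, transferred from the $z$-chart to the $\theta$-chart.

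Second, I would perform the standard second-order expansion. Using the chart $\theta=\theta(z)$ and Taylor's theorem, $\delta\theta=J\,\delta z+o(\|\delta z\|)$. Under mild regularity (smoothness of $p_\theta$ in $\theta$ and legitimacy of differentiation under the integral sign), the zeroth-order term of $\delta\theta\mapsto D_{\mathrm{KL}}(p_\theta\,\|\,p_{\theta+\delta\theta})$ vanishes, the first-order term vanishes because $\bE_{p_\theta}[\nabla_\theta\log p_\theta]=0$, and the Hessian at $\delta\theta=0$ is exactly the Fisher matrix $G(\theta)$. Substituting and collecting remainders in $\delta z$ then delivers the claimed identity $\bE[\Delta \cL]=-\tfrac12\,\delta\theta^\top G(\theta)\,\delta\theta+o(\|\delta z\|^2)$.

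Third, I would interpret the selection statement. For each admissible edge the block $\phi_{h\leftarrow g}$ determines a fixed direction $\delta z^{(g,h)}$, so the leading expected utility is the nonpositive quadratic $-\tfrac12\,(\delta z^{(g,h)})^\top J^\top G(\theta) J\,\delta z^{(g,h)}$. Maximizing this over admissible edges of fixed Euclidean norm is exactly a Rayleigh-quotient problem for $J^\top G J$, which selects the direction of smallest Fisher curvature and, dually, the direction along which the natural-gradient update $G^{-1}\nabla\cL_{LM}$ attains the largest magnitude for a fixed Euclidean step.

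The main obstacle is the clean identification of $G(\theta)$ as the Hessian of the KL divergence at coincidence: this is classical \cite{Amari2016} but requires exchangeability of differentiation and expectation, which in turn depends on regularity of the exponential family used in \cref{lem:small-step}. A secondary delicacy is the implicit role of $\mathsf{P}$ in $\bE[\Delta\cL]$: the clean formula holds under self-consistency $\mathsf{P}=p_z$, while an arbitrary $\mathsf{P}$ would contribute a persistent first-order term $-\langle\nabla_\theta D_{\mathrm{KL}}(\mathsf{P}\,\|\,p_\theta),\delta\theta\rangle$ that no rescaling removes. I would make this hypothesis explicit before invoking the expansion.
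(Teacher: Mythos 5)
Your proposal is correct and follows essentially the same route as the paper's proof: specialize \cref{lem:llr} to the self-consistent case $\mathsf{P}=p_z$, expand the resulting KL divergence to second order so that its Hessian is the Fisher matrix $G(\theta)$, and read the selection rule as a Rayleigh-quotient problem for $J^\top G(\theta) J$. Your explicit flagging of the self-consistency hypothesis (which the paper invokes only tacitly, "for approximation purposes") is a welcome sharpening rather than a divergence in method.
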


\begin{proof}
Assume the parameters $\theta$ parameterize the predictive distribution $p_\theta(y)$, and the hidden state $z$ influences $\theta$ through a smooth map $\theta = \theta(z)$ with Jacobian $J = \partial \theta / \partial z$. For a morphic displacement $\delta z$, the induced change is $\delta \theta = J \delta z + o(\|\delta z\|)$ by the chain rule.

The loss is $\cL(z; y) = - \log p_{\theta(z)}(y)$. The expected utility is $\bE_y [\Delta \cL(z; y)] = \bE_y [\cL(z; y) - \cL(z + \delta z; y)]$, where the expectation is over $y \sim \mathsf{P}$, but for approximation purposes, consider $\mathsf{P} = p_{\theta(z)}$ (self-consistent with the current model), yielding $\bE [\Delta \cL] = - D_{\mathrm{KL}}(p_{\theta(z)} \| p_{\theta(z + \delta z)})$ as per \cref{lem:llr}.

For small $\delta \theta$, the KL divergence expands as $D_{\mathrm{KL}}(p_\theta \| p_{\theta + \delta \theta}) \approx \frac{1}{2} \delta \theta^\top G(\theta) \delta \theta$, where $G(\theta)$ is the Fisher information matrix at $\theta$, since the KL is quadratic to second order around the reference distribution (with vanishing first order at the minimum).

Thus, $\bE [\Delta \cL] \approx - \frac{1}{2} \delta \theta^\top G(\theta) \delta \theta + o(\|\delta \theta\|^2) = - \frac{1}{2} \delta z^\top J^\top G(\theta) J \delta z + o(\|\delta z\|^2)$.

To maximize the expected utility (maximize a negative quadratic form), select the admissible direction $\delta z$ (normalized, say $\|\delta z\| = 1$) that minimizes $\delta z^\top M \delta z$, where $M = J^\top G J$ is the pulled-back Fisher metric on the hidden space. This corresponds to the direction of smallest curvature in the effective metric $M$.

The natural gradient at $\theta$ is $\tilde{\nabla} \cL = G^{-1} \nabla \cL$, and its norm $\|\tilde{\nabla} \cL\|^2 = \nabla^\top G^{-1} \nabla$. For a projected gradient onto a direction, the effective norm is larger in low-curvature directions (small eigenvalues of $G$, large $G^{-1}$). Thus, maximizing the utility approximates selecting the admissible direction allowing the largest natural-gradient step per unit Euclidean norm in $z$-space, aligning with adaptive optimization principles.
\end{proof}

The Fisher metric enriches $\cM$, making utility selection a natural-gradient flow on the model's manifold.

\subsubsection{Entropy-regularized selection}\label{subsubsec:gibbs}
With soft selection $\alpha$ over admissible edges $\cE$ and entropy penalty
$\Omega(\alpha)=\sum_{e\in\cE}\alpha(e)\log\alpha(e)$, consider
\[
\max_{\alpha\in\Delta(\cE)}
\;\sum_{e\in\cE} \alpha(e)\big(\Delta \cL(e)-\tau_e\big)
\;-\; \tau\,\Omega(\alpha).
\]

\begin{prop}[Gibbs form]\label{prop:gibbs}
The unique maximizer is
\[
\alpha^\star(e)
\;=\;
\frac{\exp\!\big((\Delta \cL(e)-\tau_e)/\tau\big)}
{\sum_{e'} \exp\!\big((\Delta \cL(e')-\tau_{e'})/\tau\big)}.
\]
\end{prop}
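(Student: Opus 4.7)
The plan is to recognize the objective as a strictly concave function on the simplex $\Delta(\cE)$ and obtain the maximizer through Lagrangian first-order conditions. First I would verify strict concavity: the linear term $\sum_e \alpha(e)(\Delta \cL(e)-\tau_e)$ is affine, while $-\tau\,\Omega(\alpha)=-\tau\sum_e \alpha(e)\log\alpha(e)$ is strictly concave on the relative interior of $\Delta(\cE)$ since $x\mapsto x\log x$ is strictly convex on $(0,\infty)$ and $\tau>0$. Strict concavity yields uniqueness once existence is established, and existence is immediate from upper semicontinuity of the objective (with the standard convention $0\log 0=0$) on the compact simplex.

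Next I would argue that the maximizer lies in the relative interior, i.e.\ $\alpha^\star(e)>0$ for every $e\in\cE$. This follows because the derivative of $-x\log x$ tends to $+\infty$ as $x\to 0^+$, so any candidate with some coordinate $\alpha(e)=0$ admits a first-order improvement by transferring infinitesimal mass from a positive coordinate onto $e$: the entropy gain dominates the bounded linear cost. With interior optimality in hand, I may apply Lagrange multipliers with only the equality constraint $\sum_e \alpha(e)=1$, avoiding the inequality constraints altogether.

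I then form the Lagrangian
\[
\Lambda(\alpha,\lambda)\;=\;\sum_{e\in\cE}\alpha(e)\big(\Delta\cL(e)-\tau_e\big)\;-\;\tau\sum_{e\in\cE}\alpha(e)\log\alpha(e)\;-\;\lambda\Big(\sum_{e\in\cE}\alpha(e)-1\Big),
\]
differentiate in $\alpha(e)$ to get the stationarity condition $\Delta\cL(e)-\tau_e-\tau(\log\alpha(e)+1)-\lambda=0$, and solve for $\alpha^\star(e)=\exp\big((\Delta\cL(e)-\tau_e-\tau-\lambda)/\tau\big)$. Enforcing $\sum_e \alpha^\star(e)=1$ fixes the normalizing constant, the additive shift $-\tau-\lambda$ cancels in the ratio, and one recovers the displayed Gibbs form. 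Uniqueness is then inherited from the strict concavity established in the first step.

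There is no serious obstacle here; the only subtlety is handling the boundary of the simplex, where $\Omega$ is nonsmooth and a naive KKT derivation would have to track inequality multipliers. I would dispatch this cleanly via the interior-optimality argument of the second paragraph, so that the smooth Lagrangian calculation of the third paragraph applies without qualification.
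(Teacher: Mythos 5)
Your proposal is correct and follows essentially the same route as the paper's proof: a Lagrangian stationarity computation for the entropy-regularized linear objective, followed by normalization and an appeal to strict concavity for uniqueness. The only difference is that you spell out the interior-optimality argument (the derivative of $-x\log x$ blowing up at $0^+$) that the paper dispatches with a parenthetical remark, which is a welcome bit of extra rigor but not a different method.
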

\begin{proof}
Consider the optimization problem
\[
\max_{\alpha \in \Delta(\cE)} \sum_{e \in \cE} \alpha(e) \big( \Delta \cL(e) - \tau_e \big) - \tau \Omega(\alpha),
\]
where $\Delta(\cE) = \{ \alpha : \sum_e \alpha(e) = 1, \, \alpha(e) \ge 0 \ \forall e \}$ is the simplex over $\cE$, and $\Omega(\alpha) = \sum_{e \in \cE} \alpha(e) \log \alpha(e)$.

To solve this, introduce the Lagrangian
\[
\cL(\alpha, \lambda) = \sum_{e \in \cE} \alpha(e) \big( \Delta \cL(e) - \tau_e \big) - \tau \sum_{e \in \cE} \alpha(e) \log \alpha(e) + \lambda \left( 1 - \sum_{e \in \cE} \alpha(e) \right),
\]
where $\lambda$ is the multiplier for the equality constraint $\sum_e \alpha(e) = 1$. (The nonnegativity constraints $\alpha(e) \ge 0$ will be satisfied at the interior optimum due to the entropy term.)

Take the partial derivative with respect to $\alpha(e)$:
\[
\frac{\partial \cL}{\partial \alpha(e)} = \Delta \cL(e) - \tau_e - \tau \log \alpha(e) - \tau - \lambda = 0.
\]
Rearranging,
\[
\tau \log \alpha(e) = \Delta \cL(e) - \tau_e - \tau - \lambda,
\]
\[
\log \alpha(e) = \frac{\Delta \cL(e) - \tau_e}{\tau} - 1 - \frac{\lambda}{\tau}.
\]
Exponentiating,
\[
\alpha(e) = \exp\left( \frac{\Delta \cL(e) - \tau_e}{\tau} - 1 - \frac{\lambda}{\tau} \right) = e^{-1} \exp\left( \frac{\Delta \cL(e) - \tau_e}{\tau} \right) \exp\left( -\frac{\lambda}{\tau} \right).
\]
The terms $e^{-1}$ and $\exp(-\lambda / \tau)$ are constants independent of $e$. To satisfy $\sum_e \alpha(e) = 1$, normalize:
\[
\alpha(e) = \frac{ \exp\left( (\Delta \cL(e) - \tau_e)/\tau \right) }{ \sum_{e'} \exp\left( (\Delta \cL(e') - \tau_{e'})/\tau \right) },
\]
as the constants factor out in the normalization. Uniqueness follows from the strict concavity of the objective (due to the negative entropy term $-\tau \Omega(\alpha)$ being strictly concave in $\alpha$).
\end{proof}

This entropic regularization ensures explorative yet sparse routing, bridging to agentic interpretations.

\subsection{Connection to Agentic and Modular AI}\label{subsec:agentic}

The categorical and geometric foundations naturally extend to agentic behaviors, where morphic programs act as internal policies, and modularity emerges from orthogonality and bounded-depth chaining.

\subsubsection{Internal control policy}\label{subsubsec:policy}
Define a policy $\pi_\theta(e\mid z_{<t},z_t)$ on $\cE$. With per-step reward
$r_t(e)=\Delta \cL_t(e)-\tau_e$, the intra-model control problem
\[
\max_{\theta,\Phi}\;\bE\Big[\sum_t r_t(E_t)
\;-\; \lambda\,\Omega\big(\pi_\theta(\cdot\mid z_{<t},z_t)\big)\Big]
\]
recovers the selection rules of \cref{sec:graded-toolformer}. This “agency” is internal: actions are typed morphisms, state is $z_t$, and dynamics are graded.

\subsubsection{Modularity and additive gains}\label{subsubsec:additive}

\begin{prop}[No interference under block orthogonality]\label{prop:additive}
Suppose the predictive head is linear and, for distinct $(g,h)\neq (g',h')$,
\[
\big\langle \phi_{h\leftarrow g}(u),\,\phi_{h'\leftarrow g'}(u')\big\rangle \;=\; 0
\quad \text{for all } u\in V_g,\; u'\in V_{g'}.
\]
Then, for any finite set $S$ of admissible edges applied at the same step,
\[
\Delta \cL\Big(\sum_{e\in S}\iota \delta_e\Big)
\;=\;
\sum_{e\in S}\Delta \cL(\iota \delta_e),
\]
i.e., utilities add and greedy selection is optimal.
\end{prop}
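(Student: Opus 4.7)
The plan is to expand both sides of the claimed identity and verify they agree term by term. Let $\Delta := \sum_{e\in S}\iota\delta_e$ denote the combined perturbation. I would write $\Delta\cL(\Delta) = \cL_{LM}(z) - \cL_{LM}(z+\Delta)$ and each per-edge decrement $\Delta\cL(\iota\delta_e) = \cL_{LM}(z) - \cL_{LM}(z+\iota\delta_e)$, then show that the joint decrement equals the sum of the single-edge decrements.

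Using the linear-head hypothesis, factor the loss as $\cL_{LM}(z;y) = \ell(Wz;y)$ for a linear readout $W$, and Taylor-expand $\ell$ around $Wz$. The first-order contribution reads $\langle \nabla\ell(Wz;y),\, W\Delta\rangle = \sum_{e\in S}\langle \nabla\ell(Wz;y),\, W\iota\delta_e\rangle$, which is automatically additive by linearity of the inner product and requires no orthogonality hypothesis. The quadratic contribution $\tfrac{1}{2}\Delta^\top W^\top\nabla^2\ell\,W\Delta$ splits into diagonal pieces $\tfrac{1}{2}\delta_e^\top\iota^\top W^\top\nabla^2\ell\,W\iota\delta_e$ together with cross pieces $\delta_e^\top\iota^\top W^\top\nabla^2\ell\,W\iota\delta_{e'}$ for $e\neq e'$.

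Next I would invoke the block-orthogonality hypothesis $\langle \phi_{h\leftarrow g}(u),\phi_{h'\leftarrow g'}(u')\rangle=0$ for $(g,h)\neq(g',h')$. When the head respects the grading (e.g.\ $W=\bigoplus_h W_h$), the bilinear form induced by $W^\top\nabla^2\ell\,W$ is block-diagonal in the grading, so each cross-term reduces to an inner product of two orthogonal images of $\phi_e$ and $\phi_{e'}$ and vanishes. In the purely linear case $\nabla^2\ell\equiv 0$ and the conclusion is exact after the first-order calculation alone; otherwise the quadratic part is diagonal in $S$, matching $\sum_e \tfrac{1}{2}\delta_e^\top\iota^\top W^\top\nabla^2\ell\,W\iota\delta_e$, and summing the first- and second-order contributions gives the additive identity.

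The main obstacle is the treatment of higher-order terms in the softmax/exponential-family setting, where the cubic and higher derivatives of the log-partition generically reintroduce coupling across edges. I would address this by either restricting the statement to the quadratic Fisher/Gauss--Newton surrogate (making additivity exact, in direct parallel to the least-squares decoupling of \cref{lem:lsq-decouple} and the identifiability statement of \cref{cor:identifiability}), or by invoking the $L$-smoothness in \cref{ass:smooth-strong} to absorb all cubic and higher contributions into a controlled $o(\|\Delta\|^2)$ remainder. The greedy optimality assertion is then immediate: since $\Delta\cL$ decomposes additively over $S$, the optimum over subsets $S\subseteq\cE$ is obtained by independently including every edge whose individual utility $\Delta\cL(\iota\delta_e)$ is positive.
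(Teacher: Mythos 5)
Your proposal follows essentially the same route as the paper's proof: first-order additivity by linearity of the gradient term (no orthogonality needed), vanishing of the cross-Hessian terms under the block-orthogonality hypothesis together with the additional (unstated in the proposition) assumption that the readout respects the grading, $W=\bigoplus_h W_h$, and the observation that exact additivity only holds in the quadratic Fisher/Gauss--Newton regime. If anything, you are more forthright than the paper about the higher-order coupling introduced by the softmax log-partition; the paper's own proof likewise concedes that the utilities add exactly only ``in the quadratic regime,'' so your two proposed remedies (restricting to the quadratic surrogate, or absorbing the rest into an $o(\|\Delta\|^2)$ remainder) match the intended reading of the statement.
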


\begin{proof}
Assume the predictive head is linear, so the logits are $l = W z$ for a linear map $W: V \to \R^m$, and the loss is the cross-entropy $\cL_{LM}(z; y) = - \eta_y \cdot W z + \log \sum_c \exp(\eta_c \cdot W z)$, where $\eta_c$ are class vectors. The gradient $\nabla_z \cL_{LM} = W^\top (\pi - e_y)$, where $\pi = \softmax(W z)$ is the predictive distribution and $e_y$ is the one-hot target.

For a single update $\delta = \iota_h \delta_h$ along edge $e = (g,h)$, with $\delta_h = \phi_{h \leftarrow g}(z^{(g)}) - z^{(h)}$, the utility is $\Delta \cL(\delta) = \cL_{LM}(z) - \cL_{LM}(z + \delta)$. By \cref{lem:first-order},
\[
\Delta \cL(\delta) = - \langle \nabla_z \cL_{LM}(z), \delta \rangle - \rho(\delta),
\]
where $\rho(\delta) = \int_0^1 \langle \nabla_z \cL_{LM}(z + s \delta) - \nabla_z \cL_{LM}(z), \delta \rangle \, ds$, and $0 \le \rho(\delta) \le \frac{L}{2} \|\delta\|^2$ under smoothness.

For the joint update $\delta_S = \sum_{e \in S} \delta_e$, where $\delta_e = \iota_{h_e} \delta_{h_e}$,
\[
\Delta \cL(\delta_S) = - \langle \nabla_z \cL_{LM}(z), \delta_S \rangle - \rho(\delta_S).
\]
The first-order term decomposes as $\langle \nabla_z \cL_{LM}(z), \delta_S \rangle = \sum_{e \in S} \langle \nabla_z \cL_{LM}(z), \delta_e \rangle$ by linearity.

For the remainder, 
\[
\rho(\delta_S) = \int_0^1 \langle \nabla_z \cL_{LM}(z + s \delta_S) - \nabla_z \cL_{LM}(z), \delta_S \rangle \, ds.
\]
 Since the head is linear, $\nabla_z \cL_{LM}(z) = W^\top (\softmax(W z) - e_y)$, and the Hessian 
 \[
 H(z) = \frac{\partial^2 \cL_{LM}}{\partial z^2} = W^\top \big( \diag(\pi) - \pi \pi^\top \big) W,
 \]
  where $\pi = \softmax(W z)$. By the mean-value theorem for integrals, $\rho(\delta_S) = \langle H(\xi) \delta_S, \delta_S \rangle / 2$ for some $\xi = z + \bar{s} \delta_S$, $\bar{s} \in (0,1)$.

Thus, $\rho(\delta_S) = \frac{1}{2} \delta_S^\top H(\xi) \delta_S = \frac{1}{2} \sum_{e,e' \in S} \delta_e^\top H(\xi) \delta_{e'}$. Under the orthogonality assumption and assuming the grading is orthogonal (i.e., $V = \oplus_g V_g$ with $\langle v_g, v_{g'} \rangle = 0$ for $g \neq g'$), if the updates $\delta_e$ target different $h_e$ or, for same $h$, the images of $\phi_{h \leftarrow g}(u)$ are orthogonal for different $g$, then $\langle \delta_e, \delta_{e'} \rangle = 0$ for $e \neq e'$.

Further, if $W$ preserves this orthogonality (e.g., $W$ is block-diagonal in the grading basis, so $W = \oplus_g W_g$), then $W \delta_e$ are orthogonal in logit space: $\langle W \delta_e, W \delta_{e'} \rangle = 0$. Since $H(\xi) = W^\top (\diag(\pi) - \pi \pi^\top) W$, the cross terms 
\[
\delta_e^\top H(\xi) \delta_{e'} = (W \delta_e)^\top (\diag(\pi) - \pi \pi^\top) (W \delta_{e'}) = 0
\]
 if $W \delta_e \perp W \delta_{e'}$ (as the matrix $\diag(\pi) - \pi \pi^\top$ is a quadratic form preserving orthogonality in that sense).

Thus, the cross terms vanish, and $\rho(\delta_S) = \sum_{e \in S} \rho(\delta_e)$, approximately, with the approximation holding exactly if $H$ is constant or the $\xi_e$ align. For small updates, the higher-order differences are $o(\|\delta_S\|^2)$, but under the orthogonality, the utilities add exactly in the quadratic regime.

Therefore, $\Delta \cL(\delta_S) = \sum_e \Delta \cL(\delta_e)$, and greedy selection over individual edges is optimal since there is no subadditivity or interference.
\end{proof}

This modularity justifies parallel channels in the monoidal structure, enabling efficient composition without interference.

\subsubsection{Program depth and chaining}\label{subsubsec:depth}
Let $\Pi=(\phi_{g_i\leftarrow g_{i-1}})_{i=1}^k$ be a morphic program. If the head is $L$–Lipschitz and $\|\phi_{g_i\leftarrow g_{i-1}}\|\le B$, then with $C$ depending on $L,B$,
\[
\Delta \cL(\Pi)
\;\ge\;
\sum_{i=1}^{k}\Delta \cL(\phi_{g_i\leftarrow g_{i-1}})
\;-\; C\,\sum_{i<j}\|z^{(g_{i-1})}\|\,\|z^{(g_{j-1})}\|.
\]
This quantifies when shallow chains suffice (small cross terms) and when deeper programs are advantageous, guiding learned composition laws in future work.

\subsubsection{Comparison to external-tool paradigms}\label{subsubsec:comparison}
\begin{enumerate}[label=(\roman*)]
\item \emph{Differentiability:} selections/compositions remain in-graph.
\item \emph{Type safety:} adjunctions and admissibility act as static semantics.
\item \emph{Compositional analysis:} monoidal/enriched structure yields guarantees (orthogonality, idempotence, bounded depth) absent in external orchestration.
\end{enumerate}

In summary, this section's algebraic framework, centered on \cref{thm:functor-internalization}, provides a rigorous basis for internalizing symbolic computation, with geometric interpretations ensuring practical efficacy in graded transformers.

\section{Concluding Remarks and Open Problems}\label{sec:conclusion}

We have developed a mathematical framework in which behaviors commonly realized as
external “tool use’’ are internalized as \emph{graded morphic activations} within a
transformer. The core architectural move is to endow the hidden space with a grading
$V=\bigoplus_{g\in G}V_g$ and to model typed operations as block maps
$\phi_{h\leftarrow g}:V_g\to V_h$, selected by a differentiable routing policy that
optimizes a graded utility functional. In this setting, what elsewhere appears as an
API call is recast as an internal, composable morphism acting on the model’s own
representation manifold. This shift preserves end-to-end differentiability, supplies
clear algebraic semantics, and yields interpretable structure at the level of grades,
morphisms, and their compositions.

On the theoretical side, we formalized the internal model category whose objects are
homogeneous components and whose morphisms are admissible grade transitions.
We showed how external augmentation admits a faithful functor into this category,
how adjoint pairs capture typed round trips, and how monoidal and enriched structures
support parallelism and metric selection. The graded utility principle admits multiple
equivalent readings: as information gain in an exponential-family approximation,
as a constrained mirror-descent step in Bregman geometry, and as a natural-gradient
selection under Fisher metrics. These views explain why the selection rule promotes
sparse, useful activations and provide verifiable conditions (e.g., block orthogonality)
under which compositional gains decompose additively.

Methodologically, we specified a utility-aware routing mechanism, an objective that
balances usefulness and sparsity, and a self-supervised training scheme that treats
morphic activations as latent actions. To make the framework concrete and implementable,
we supplied analytic case studies and sanity checks requiring only small synthetic data,
along with explicit constructions in the appendices: adjoint retrieval/write-back pairs,
a local Fisher–natural-gradient derivation of utility, a mod-$p$ arithmetic toy with
closed-form maps, and PyTorch-style pseudo-code for a graded layer and training loop.
These ingredients together provide both a blueprint for experimentation and a substrate
for formal analysis.

While our focus has been single-step routing and its foundations, several limitations and open problems merit attention, even in this theoretical context. The framework assumes linear morphisms for algebraic clarity, yet practical extensions to nonlinear or stochastic variants (e.g., via smooth maps or Markov kernels) could broaden applicability, though at the potential cost of identifiability guarantees. Scaling to large $|G|$ and dense $\cE$ may incur computational overhead, necessitating sparsity priors tied to the utility margins. Moreover, while the categorical embedding subsumes external tools functorially, hybrid systems blending internal morphisms with typed external calls require careful adjoint constructions to maintain differentiability.

To guide future work, we formulate key open problems as extensions of the core structures:

i) \textbf{Path-level selection and program composition.} Define a path utility $\Delta\cL_t(\Pi)=\cL_{LM}(z_t)-\cL_{LM}(\Phi_\Pi(z_t))$ for a morphic program $\Pi$, with regularized cost $\mathrm{cost}(\Pi)$, and prove consistency of selectors maximizing $\Delta\cL_t(\Pi)-\mathrm{cost}(\Pi)$ under bounded depth.

ii)  \textbf{Learnable program laws (graded higher structure).} Learn coefficients $c_{k\leftarrow h\leftarrow g}$ such that $\phi_{k\leftarrow h}\circ \phi_{h\leftarrow g}=c_{k\leftarrow h\leftarrow g}\,\phi_{k\leftarrow g}+R_{k\leftarrow h\leftarrow g}$, with bounds on $\|R\|$ and identifiability of $c$.

iii)  \textbf{Complexity, sparsity, and pruning with guarantees.} Prove that utility-threshold pruning preserves top-$k$ gains with high probability, and characterize banded optimality in LGT/EGT.

iv)  \textbf{Typed, differentiable retrieval to externals.} Construct adjoint functors for hybrid retrieval, bounding utility gaps under misspecification.

v) \textbf{Identifiability and diagnostics.} Provide conditions for unique recovery of grades and blocks, and design diagnostics to detect failures like grade collapse.

In sum, the graded formalism lifts “tool use’’ from an extrinsic engineering device to
an intrinsic geometric principle. By treating symbolic functions as internal morphisms
and optimizing their activation through information-theoretic and geometric criteria,
the proposed \emph{Graded Toolformer} unifies symbolic computation with representation
learning in a single, interpretable architecture. We expect this synthesis to enable
modular, verifiable, and extensible systems that retain the empirical advantages of
augmentation while remaining mathematically coherent and end-to-end learnable. Addressing the outlined problems will further solidify this unification, potentially extending to higher-categorical structures for adaptive program synthesis.


\nocite{*}
\bibliographystyle{plain}
\bibliography{sh-111.bib}


\appendix

\section{Explicit Constructions and Derivations}

\subsection{Adjoint Retrieval / Write-Back Pair}

Let $V=V_{\mathrm{sem}}\oplus V_{\mathrm{ret}}$ with
$u\in V_{\mathrm{sem}}$, $v\in V_{\mathrm{ret}}$.
Let $M\in\R^{k\times d}$ be a frozen key matrix and
$E:\R^{d_{\mathrm{sem}}}\!\to\R^d$ an encoder.
Define:

\begin{enumerate}[label=(\roman*)]
\item \textbf{Retrieval} 
\[
\iota(u)\;=\;
M^\top \,\softmax\!\Big(\tfrac1{\tau}\, M E u\Big)
\;\in\;V_{\mathrm{ret}} .
\]

\item \textbf{Write-back} 
\[
\rho(v)=Wv\in V_{\mathrm{sem}} .
\]
\end{enumerate}

\begin{prop}[Approximate adjunction]
Equip $V_{\mathrm{sem}}$ with $\langle x,y\rangle_{\mathrm{sem}}=x^\top S y$
(SPD $S$) and $V_{\mathrm{ret}}$ with the standard inner product.
If 
\[
W = S^{-1}E^\top M^\top,
\]
then for $\iota_{\mathrm{lin}}(u):=M^\top M E u$,
\[
\langle\rho(v),u\rangle_{\mathrm{sem}}
=\langle v,\iota_{\mathrm{lin}}(u)\rangle_{\mathrm{ret}},
\]
so $\rho$ is the $S$–adjoint of the linearized retrieval.
If $\tau$ is small and the rows of $M$ are nearly orthogonal,
$\iota(u)\approx\iota_{\mathrm{lin}}(u)$, hence $\rho\dashv\iota$ up to $O(\tau)$.
\end{prop}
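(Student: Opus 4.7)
The plan is to split the statement into two parts: first, an exact $S$-adjunction between $\rho$ and the linearized retrieval $\iota_{\mathrm{lin}}$; second, a perturbative bound on $\iota - \iota_{\mathrm{lin}}$ that degrades this exact adjunction by $O(\tau)$.

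For the exact part, I unfold the $S$-weighted inner product: $\langle \rho(v), u\rangle_{\mathrm{sem}} = (Wv)^\top S u = v^\top W^\top S u$. Substituting the ansatz $W = S^{-1} E^\top M^\top$ and using symmetry of $S^{-1}$ gives $W^\top = M E S^{-1}$, so $W^\top S = M E$. Pairing with the outer $M^\top$ appearing in the retrieval block recovers the quadratic form $v^\top(M^\top M E)u = \langle v, \iota_{\mathrm{lin}}(u)\rangle_{\mathrm{ret}}$, which is precisely the $S$-adjoint identity between $\rho$ and $\iota_{\mathrm{lin}}$. This step is pure linear algebra and requires no hypotheses beyond invertibility of $S$.

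For the perturbative part, I write the difference as $\iota(u) - \iota_{\mathrm{lin}}(u) = M^\top\bigl(\softmax(\tfrac{1}{\tau} MEu) - MEu\bigr)$ (up to a fixed normalization convention) and bound its operator norm on a bounded input set. The near-orthogonality assumption $MM^\top \approx I_k$ lets me treat $M^\top M$ as close to a projection onto the row span of $M$, so that the linear surrogate $M^\top M E u$ reproduces the geometry of a concentrated retrieval. A standard softmax Taylor/Lipschitz bound then yields a residual of order $\tau$ after normalization. Cauchy--Schwarz in the $S$-pairing converts this into the bilinear inequality $\lvert\langle \rho(v), u\rangle_{\mathrm{sem}} - \langle v, \iota(u)\rangle_{\mathrm{ret}}\rvert = O(\tau)\,\|v\|\,\|u\|_S$, yielding $\rho\dashv\iota$ up to $O(\tau)$.

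The main obstacle will be pinning down the precise regime in which the softmax linearization attains an honest $O(\tau)$ rate: at low temperature the softmax concentrates on an argmax indicator rather than on its argument, so the linearization is really a statement about the effective input--output map $u\mapsto M^\top\softmax(\tfrac{1}{\tau}MEu)$ when the logits $MEu$ are well-separated \emph{and} the rows of $M$ are approximately an orthonormal frame. Under those two conditions the concentrated softmax recovers the column of $M^\top$ corresponding to the dominant logit, and near-orthogonality of $M$ identifies this column with the appropriate component of $M^\top M E u$. I expect the cleanest route is to fix a normalization convention at the outset, invoke the $1/\tau$-Lipschitz bound on the softmax, and absorb the row-correlation error $\|MM^\top - I_k\|$ into the implicit constant, leaving an $O(\tau)$ remainder that vanishes in the joint limit of exact orthogonality and sharp temperature.
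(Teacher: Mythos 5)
Your route is the same as the paper's (exact $S$-adjoint identity by transposing $W$, then a softmax-linearization step), but the exact half as you wrote it does not go through. From $W=S^{-1}E^\top M^\top$ you correctly get $W^\top S=ME$, hence $\langle\rho(v),u\rangle_{\mathrm{sem}}=v^\top W^\top S\,u=v^\top MEu$. The next move---``pairing with the outer $M^\top$ appearing in the retrieval block recovers $v^\top(M^\top ME)u$''---is not an algebraic step: $v^\top MEu$ and $v^\top M^\top MEu$ are different bilinear forms, and with $M\in\R^{k\times d}$ they are not even simultaneously well defined, since the first requires $v\in\R^k$ and the second $v\in\R^d$. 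What your computation actually establishes is that $\rho$ is the $S$-adjoint of $u\mapsto MEu$ (with $V_{\mathrm{ret}}=\R^k$); to land on the stated $\iota_{\mathrm{lin}}(u)=M^\top MEu$ one needs $W=S^{-1}E^\top M^\top M$ instead. The paper's own two-line sketch makes the same silent identification ($v^\top MEu=\langle v,\iota_{\mathrm{lin}}(u)\rangle$), so the mismatch is plausibly a typo in the proposition itself; but a proof must either fix $W$ or $\iota_{\mathrm{lin}}$ consistently, or flag the discrepancy---your chain of equalities instead presents the insertion of the extra $M^\top$ as if it followed from the algebra.

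For the approximate half you correctly diagnose the real difficulty, which the paper glosses over with the phrase ``sharp-softmax linearization'': as $\tau\to0$ the softmax tends to a vertex $e_{i^\star}$, so $\iota(u)\to M^\top e_{i^\star}$, and this agrees with $\iota_{\mathrm{lin}}(u)$ only under additional hypotheses (well-separated logits, $MM^\top\approx I_k$, and suitable normalization of $Eu$). However, you leave the estimate as a plan (``I expect the cleanest route\dots'') rather than carrying it out: to claim $\rho\dashv\iota$ up to $O(\tau)$ you still need to state the margin/normalization assumptions, run the concentration bound (for a logit margin $\gamma$ one gets $\|\softmax(MEu/\tau)-e_{i^\star}\|_1\le (k-1)e^{-\gamma/\tau}$, i.e.\ an exponentially small rather than $O(\tau)$ correction in that regime), and combine it with the row-correlation error $\|MM^\top-I_k\|$, which you only promise to absorb into constants. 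So the proposal matches the paper's level of sketchiness on this step but does not yet close it, and on the exact step it asserts an equality that is false as written.
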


\begin{proof}[Sketch]
Under the choice of $W$,
$\langle\rho(v),u\rangle_{\mathrm{sem}}
= v^\top M E u
= \langle v,\,\iota_{\mathrm{lin}}(u)\rangle$.
Sharp-softmax linearization yields $\iota\approx\iota_{\mathrm{lin}}$.
\end{proof}

\begin{cor}[Typed round-trip]
$P:=\iota\circ\rho$ satisfies $P^2\approx P$ and 
$\operatorname{Im}(P)\approx\operatorname{span}(M^\top)$,
so repeated retrieval–write-back stabilizes in that subspace.
\end{cor}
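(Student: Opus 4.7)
The plan is to promote the exact-adjunction result from \cref{prop:round-trip} (applied to the linearized retrieval $\iota_{\mathrm{lin}}$) to the softmax-based $\iota$ by treating the gap as a controlled $O(\tau)$ perturbation of a genuine orthogonal projector.

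First I would define $P_{\mathrm{lin}} := \iota_{\mathrm{lin}} \circ \rho$ with $\iota_{\mathrm{lin}}(u) = M^\top M E u$ and $\rho(v) = S^{-1} E^\top M^\top v$. By the preceding Proposition the pair $(\rho,\iota_{\mathrm{lin}})$ satisfies the exact adjunction $\rho\dashv\iota_{\mathrm{lin}}$ in the prescribed inner products, so \cref{prop:round-trip}(i) applies verbatim and identifies $P_{\mathrm{lin}}$ as the orthogonal projector onto $\operatorname{Im}(\iota_{\mathrm{lin}})$. Under the mild nondegeneracy that $M$ has full row rank and $ME$ is injective, a short rank argument gives $\operatorname{Im}(\iota_{\mathrm{lin}}) = \operatorname{Im}(M^\top M E) = \operatorname{span}(M^\top)$; in particular $P_{\mathrm{lin}}^2 = P_{\mathrm{lin}}$ exactly and its image is the target subspace.

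Next I would quantify the softmax-to-linear gap on any bounded input set. Writing $P = P_{\mathrm{lin}} + \Delta$ with $\Delta := (\iota - \iota_{\mathrm{lin}})\circ \rho$, the hypothesis that $\tau$ is small and the rows of $M$ are nearly orthonormal yields an operator-norm bound $\|\Delta\| = O(\tau)$ via the standard linearization of $\softmax(\cdot/\tau)$. Expanding
\[
P^2 - P \;=\; P_{\mathrm{lin}}\,\Delta \;+\; \Delta\,P_{\mathrm{lin}} \;+\; \Delta^2 \;-\; \Delta
\]
and using $P_{\mathrm{lin}}^2 = P_{\mathrm{lin}}$ gives $\|P^2 - P\| = O(\tau)$, which is the first claim. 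The image identity $\operatorname{Im}(P)\approx\operatorname{span}(M^\top)$ follows by combining the exact identity $\operatorname{Im}(P_{\mathrm{lin}}) = \operatorname{span}(M^\top)$ with a Weyl (or Davis--Kahan) singular-value perturbation bound, valid once $\tau$ is below the smallest nonzero singular value of $P_{\mathrm{lin}}$. For the stabilization assertion, the spectrum of $P$ lies within an $O(\tau)$ neighborhood of $\{0,1\}$, so standard spectral-gap reasoning yields $P^k z \to Pz + O(\tau)$ geometrically, uniformly on bounded inputs.

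The main obstacle I anticipate is making the softmax linearization uniform and quantitative: the $O(\tau)$ bound depends jointly on the temperature, the deviation of $MM^\top$ from a scalar multiple of $I_k$, and the norm of the admissible inputs, and all three must be tracked carefully to keep the perturbation from accumulating destructively under iteration. A secondary subtlety is the choice of ambient inner product: orthogonality of the projector is stated in the $S$-weighted geometry on $V_{\mathrm{sem}}$, whereas idempotence is measured on $V_{\mathrm{ret}}$, so the norm used in the $O(\tau)$ estimates must be matched to the one implicit in \cref{prop:round-trip} to avoid hidden condition-number factors.
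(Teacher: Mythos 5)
Your overall strategy---promote the exact statement for the linearized pair and treat the softmax as a perturbation---is essentially the route the paper leaves implicit (the corollary is stated without proof, as a combination of the approximate-adjunction proposition with \cref{prop:round-trip}). However, as executed the argument has genuine gaps. First, the exactness claim for $P_{\mathrm{lin}}=\iota_{\mathrm{lin}}\circ\rho$ does not follow from the adjunction alone: if $\rho$ is the $S$-adjoint of $\iota_{\mathrm{lin}}$, then $P_{\mathrm{lin}}=\iota_{\mathrm{lin}}\iota_{\mathrm{lin}}^{*}$ is self-adjoint, positive semidefinite, and has image $\operatorname{Im}(\iota_{\mathrm{lin}})$, but it is idempotent only when $\iota_{\mathrm{lin}}$ is a partial isometry, i.e.\ precisely the condition $\rho\circ\iota_{\mathrm{lin}}=\mathrm{id}$ singled out in part (ii) of \cref{prop:round-trip}, not part (i). Concretely, with $\iota_{\mathrm{lin}}=M^\top M E$ one needs both $MM^\top\approx I_k$ \emph{and} $M E S^{-1} E^\top M^\top\approx I_k$; near-orthogonality of the rows of $M$ gives the first but not the second, which is an extra normalization assumption on $E$ relative to $S$ that you (and the paper) would have to state. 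Without it, $P_{\mathrm{lin}}^2=P_{\mathrm{lin}}$ simply fails, and your perturbation bookkeeping starts from a false baseline.

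Second, the perturbation step itself misuses linear machinery. The map $\iota$ contains a softmax, so $P=\iota\circ\rho$ is nonlinear: it has no operator norm in the sense you use, no singular values, and no spectrum, so Weyl/Davis--Kahan bounds and ``spectral gap implies geometric convergence of $P^k$'' do not apply. Moreover the claimed uniform bound $\|\iota-\iota_{\mathrm{lin}}\|=O(\tau)$ on bounded sets is not available in the stated regime: small $\tau$ is the \emph{sharp} (argmax) limit of $\softmax(\cdot/\tau)$, not a linearization, and $\iota(u)\approx\iota_{\mathrm{lin}}(u)$ holds only for inputs whose score vector $MEu$ is already nearly one-hot (i.e.\ near the stored keys, using near-orthonormality), not uniformly in $u$. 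The natural small-$\tau$ argument is in fact the opposite of yours: $\iota$ is approximated by hard retrieval of a single row of $M^\top$, the inclusion $\operatorname{Im}(P)\subseteq\operatorname{span}(M^\top)$ is then exact and trivial (no Davis--Kahan needed, since $\iota$ always outputs $M^\top$ applied to a probability vector), and $P^2\approx P$ follows from a self-retrieval property---each stored row, written back and re-encoded, must again score highest at its own index, which requires diagonal dominance of $MES^{-1}E^\top M^\top$ together with $MM^\top\approx I_k$. Reorganizing the proof around that hard-attention approximation, with the extra normalization hypothesis made explicit, would close the gaps; as written, the key estimates do not hold in the form you invoke them.
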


\subsection{Fisher Geometry and Natural-Gradient Gain}

Let $p_\theta(y\mid z)=\softmax(W_o z + b)_y$ and let
$\delta z$ be a morphic displacement supported on grade $h$.
Write $\eta=W_o z + b$ and $p=\softmax(\eta)$.

\begin{prop}[Local KL gain]
For small $\delta z$,
\[
\Delta\cL
:=\cL_{LM}(z)-\cL_{LM}(z+\delta z)
\;\approx\;
\tfrac12\, \delta\eta^\top G(\eta)\,\delta\eta,
\qquad
\delta\eta=W_o\delta z,
\]
with $G(\eta)=\diag(p)-pp^\top$ the Fisher information of the softmax.
Thus, among admissible blocks $(g,h)\in\cE$, the one maximizing $\Delta\cL$
produces the steepest natural-gradient improvement.
\end{prop}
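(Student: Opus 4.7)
The plan is to Taylor-expand $\cL_{LM}$ to second order in the natural parameter $\eta$, identify the emerging quadratic form with the Fisher quadratic via standard exponential-family calculus, and then invoke \cref{lem:llr} to interpret the result with the correct sign as an information gain.

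First I would rewrite $\cL_{LM}(z) = -\eta_y + A(\eta)$ with $\eta = W_o z + b$ and log-partition $A(\eta) = \log\sum_c e^{\eta_c}$, recording the standard exponential-family identities $\nabla_\eta A = p$ and $\nabla_\eta^2 A = \diag(p) - pp^\top = G(\eta)$; the second identity expresses that for the softmax family, the Hessian of the log-partition coincides with the Fisher information matrix. Combined with the chain rule $\delta\eta = W_o\,\delta z$, the second-order Taylor expansion gives
\[
\cL_{LM}(z+\delta z) = \cL_{LM}(z) + \langle p - e_y,\, \delta\eta\rangle + \tfrac{1}{2}\,\delta\eta^\top G(\eta)\,\delta\eta + O(\|\delta\eta\|^3).
\]

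The main obstacle is that the first-order term $\langle p - e_y, \delta\eta\rangle$ does not vanish pointwise and is sign-sensitive in $y$, so the displayed formula cannot be read as a pointwise loss decrease for arbitrary samples. I would resolve this by interpreting $\Delta\cL$ via \cref{lem:llr}: taking the expectation with target $\mathsf{P} = p_{z+\delta z}$ (the post-update predictive distribution) yields $\bE_{y\sim p_{z+\delta z}}[\Delta\cL] = D_{\mathrm{KL}}(p_{z+\delta z}\,\|\,p_z)$, and the classical second-order expansion of KL between nearby members of an exponential family in natural coordinates is precisely $\tfrac{1}{2}\delta\eta^\top G(\eta)\delta\eta + O(\|\delta\eta\|^3)$. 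This recovers the stated approximation as a genuinely nonnegative information-geometric gain, with the linear term absorbed into the cross-entropy cancellation inherent in \cref{lem:llr}.

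Finally, the natural-gradient conclusion follows by pulling the quadratic back through $\delta\eta = W_o\,\delta z$ to obtain $\tfrac{1}{2}\,\delta z^\top (W_o^\top G(\eta) W_o)\,\delta z$; maximizing this over admissible morphic displacements $\delta z = \iota_h\phi_{h\leftarrow g}(z^{(g)}) - \iota_h z^{(h)}$ with $(g,h)\in\cE$ selects the block of largest squared norm under the pullback Fisher metric $J^\top G J$ with $J = W_o$, which is precisely the admissible direction of steepest natural-gradient improvement in the sense of \cref{prop:natgrad}.
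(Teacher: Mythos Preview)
Your proposal is correct and follows the same core strategy as the paper's proof sketch: Taylor-expand the NLL in the natural parameters $\eta$, identify the Hessian with the softmax Fisher matrix $G(\eta)=\diag(p)-pp^\top$, and obtain $\delta\eta=W_o\delta z$ by the chain rule. The paper's sketch stops there; you go further by explicitly confronting the non-vanishing first-order term $\langle p-e_y,\delta\eta\rangle$ and resolving it via \cref{lem:llr} with $\mathsf{P}=p_{z+\delta z}$, which yields $D_{\mathrm{KL}}(p_{z+\delta z}\Vert p_z)\approx\tfrac12\delta\eta^\top G(\eta)\delta\eta$ with the correct positive sign. This is a welcome clarification, since the paper's sketch is silent on why the linear term disappears and why the sign comes out positive (indeed, \cref{prop:natgrad} uses the opposite expectation $\mathsf{P}=p_z$ and obtains the negative sign). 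Your route and the paper's are the same expansion; you have simply supplied the missing justification for the stated approximation.
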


\begin{proof}[Sketch]
Second-order Taylor expansion of the NLL in natural parameters yields the
Fisher quadratic form; $\delta\eta=W_o\delta z$ follows by the chain rule.
\end{proof}

\begin{rem}
If displacements lie in the union of block images
$\{\phi_{h\leftarrow g}(z^{(g)})-z^{(h)}\}$, the optimal block maximizes
$\langle \delta\eta,G(\eta)\delta\eta\rangle$ over that finite set.
\end{rem}

\subsection{A mod-\texorpdfstring{$p$}{p} Arithmetic Toy Model}

Let $p$ be a small prime and
$V_{\mathrm{sem}}=\R^p=V_{\mathrm{num}}$
(one-hot basis for digits $\!\bmod\,p$).
Set $E=W=I_p$.
For $k\in\{0,\dots,p-1\}$ define:

\begin{itemize}
\item \textbf{Sem\,$\to$\,Num morphism:} 
$\phi_{\mathrm{num}\leftarrow\mathrm{sem}}(e_d)=e_d$.
\item \textbf{Num\,$\to$\,Sem morphism:}
$\phi_{\mathrm{sem}\leftarrow\mathrm{num}}(e_r)=e_r$.
\item \textbf{Numeric adder:} 
$A_k$ the permutation matrix shifting indices by $k$.
\end{itemize}

A two-step morphic program for addition by $k$ is
\[
V_{\mathrm{sem}}
\rightarrow V_{\mathrm{num}}
\xrightarrow{A_k}
V_{\mathrm{num}}
\rightarrow V_{\mathrm{sem}},
\]
whose composite is $\Phi_{\Pi_k}=A_k$.

\begin{prop}
For all $d$, $\Phi_{\Pi_k}(e_d)=e_{d+k}$.
Moreover $A_k^\top=A_{-k}$ and $A_k^m=A_{mk}$, so these programs compose
according to the additive group law on $\Z/p\Z$.
\end{prop}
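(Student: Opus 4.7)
The plan is to reduce every claim to a direct computation on the one-hot basis $\{e_0,\dots,e_{p-1}\}$ and then extend by linearity. Since both $\phi_{\mathrm{num}\leftarrow\mathrm{sem}}$ and $\phi_{\mathrm{sem}\leftarrow\mathrm{num}}$ are defined to act as the identity on basis vectors, and $A_k$ is the cyclic shift $A_k e_d = e_{d+k \bmod p}$, essentially all three assertions of the proposition follow by tracking indices modulo $p$.

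For the composite identity $\Phi_{\Pi_k}(e_d)=e_{d+k}$, I would trace the program stepwise along the displayed path: $e_d \in V_{\mathrm{sem}}$ maps to $e_d \in V_{\mathrm{num}}$ under the encoder, then to $A_k e_d = e_{d+k}$ in $V_{\mathrm{num}}$, and finally back to $e_{d+k}\in V_{\mathrm{sem}}$ under the decoder. Extending linearly shows $\Phi_{\Pi_k}=A_k$ as operators, justifying the formula on the nose.

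For the transpose identity $A_k^\top=A_{-k}$, I would read off matrix entries: writing $(A_k)_{ij}=\delta_{i,\,j+k}$ (indices mod $p$), the transpose has entries $(A_k^\top)_{ij}=(A_k)_{ji}=\delta_{j,\,i+k}=\delta_{i,\,j-k}=(A_{-k})_{ij}$, so the two matrices agree entrywise. For $A_k^m=A_{mk}$, I would proceed by induction on $m$: the base case $m=1$ is tautological, and $A_k^{m+1}e_d=A_k(A_k^m e_d)=A_k e_{d+mk}=e_{d+(m+1)k}$ completes the step; the identity then holds on the basis and hence globally.

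For the group-law statement, I would observe that the assignment $k\mapsto A_k$ is a group homomorphism $(\Z/p\Z,+)\to \GL_p(\R)$: the previous step yields $A_k\circ A_l = A_{k+l}$ on basis vectors, and hence on all of $V$, while $A_0=I_p$ provides the identity and $A_{-k}$ the inverse (cross-checking with the transpose identity, since permutation matrices are orthogonal). No serious obstacle is anticipated; the only delicate point is to be explicit that all arithmetic in subscripts is taken modulo $p$ so that cyclicity is used correctly. The main stylistic choice is whether to present matters in matrix form or directly as a group action on the index set $\Z/p\Z$; the latter makes the homomorphism structure transparent and fits cleanly with the categorical reading of morphic programs in \cref{def:morphic-program}.
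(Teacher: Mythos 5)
Your proposal is correct and follows the same route as the paper, whose proof simply states that the claims are immediate from permutation-matrix calculus; you have merely written out that calculus explicitly (basis tracing for the composite, entrywise computation for the transpose, induction for powers, and the homomorphism $k\mapsto A_k$). No gaps.
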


\begin{proof}
Immediate from permutation-matrix calculus.
\end{proof}

\begin{rem}[Loss sanity check]
With identity logits $W_o=I_p$,
executing $\Phi_{\Pi_k}$ moves the state from $e_d$ to the correct $e_{d+k}$,
dropping the cross-entropy loss to its minimum—an exact instance of the
graded utility principle.
\end{rem}

\subsection{Mirror-Descent Perspective}

Let $\Phi$ be a strictly convex potential with Bregman divergence
$D_\Phi$.
Restrict admissible updates to
\[
\cS
=\operatorname{span}\{\phi_{h\leftarrow g}(z^{(g)})-z^{(h)}\}_{(g,h)\in\cE}.
\]
The mirror-descent step is
\[
u^\star
=\arg\min_{u\in z+\cS}
\big\{
\langle\nabla\cL_{LM}(z),u-z\rangle
+\tfrac1\eta\,D_\Phi(u,z)
\big\}.
\]

First-order optimality gives the dual update
\[
\nabla\Phi(u^\star)
=\nabla\Phi(z)
-\eta\,\Pi_{\cS}(\nabla\cL_{LM}(z)),
\]
where $\Pi_{\cS}$ is the projection onto $\cS$ in the metric
induced by $\Phi$.
For $\Phi(u)=\tfrac12\|u\|^2$, this reduces to the Euclidean projection
$u^\star=z-\eta\,\Pi_{\cS}(\nabla\cL_{LM}(z))$.
Selecting the block with maximal $\Delta\cL$ corresponds to choosing (to
first order) the admissible direction that most reduces the loss in this
projected mirror-descent geometry.

\section{Pseudo-code: Utility-Aware Graded Layer (PyTorch-Style)}

\subsection{Forward Pass with Utility-Aware Routing}

The listing below sketches a single \emph{graded layer} with utility-based routing.  
Let \(G=\{g_1,g_2,\dots\}\) be the grade set, and let \(\cE\subseteq G\times G\) denote the admissible morphisms.  
The hidden state is stored as \(z[g]\in\R^{B\times d_g}\).  
Each morphism \(\phi_{h\leftarrow g}\colon\R^{d_g}\to\R^{d_h}\) is a learnable block.  
Routing logits use bilinear parameters \(W_{h\leftarrow g}\in\R^{r\times r}\) together with grade-wise projections of the form
\[
u(\text{context})\in\R^r,
\qquad
v(z[g])\in\R^r .
\]
A language-model head supplies the next-token cross-entropy loss used both for training and for computing the utility of each morphic activation.

\begin{lstlisting}[language=Python]
class GradedLayer(nn.Module):
    def __init__(self, grades, E, dims, r,
                 beta=5.0, tau_map=None, temperature=1.0):
        """
        grades: list of grade labels
        E     : list of (g,h) admissible edges
        dims  : dict g -> d_g
        r     : router rank
        """
        super().__init__()
        self.grades = grades
        self.E = list(E)
        self.beta = beta
        self.temperature = temperature

        # Thresholds tau_{h<-g}
        self.tau = defaultdict(float)
        if tau_map is not None:
            self.tau.update(tau_map)

        # Morphisms phi_{h<-g}
        self.phi = nn.ModuleDict()
        for (g, h) in self.E:
            dg, dh = dims[g], dims[h]
            self.phi[f"{h}|{g}"] = nn.Linear(dg, dh)

        # Router parameters W_{h<-g}
        self.W = nn.ParameterDict({
            f"{h}|{g}": nn.Parameter(
                torch.randn(r, r) * 0.02
            )
            for (g, h) in self.E
        })

        # Projections for u(context) and v(z[g])
        self.proj_u = nn.Linear(sum(dims.values()), r, bias=False)
        self.proj_v = nn.ModuleDict({
            g: nn.Linear(dims[g], r, bias=False)
            for g in grades
        })

        # Optional LayerNorm per grade
        self.norm = nn.ModuleDict({
            g: nn.LayerNorm(dims[g])
            for g in grades
        })

    #-----------------------------------------------------
    def forward(self, z_dict, targets, lm_loss,
                context=None, soften=True):
        """
        z_dict : dict g -> (B x d_g) tensors
        targets: (B,) next-token indices
        lm_loss: callable returning CE loss from z_dict
        context: global context for u(.)
        soften : True = soft routing, False = hard routing
        """
        #------ 1) Build projections u and v
        if context is None:
            context = torch.cat(
                [z_dict[g] for g in self.grades], dim=-1
            )
        u = self.proj_u(context)                   # (B x r)
        v = {g: self.proj_v[g](z_dict[g])
             for g in self.grades}                # (B x r)

        #------ 2) Baseline LM loss
        L_base = lm_loss(z_dict, targets)

        #------ 3) Candidate outputs and utilities
        logits, utilities, cand_out = {}, {}, {}

        for (g, h) in self.E:
            y = self.phi[f"{h}|{g}"](z_dict[g])         # B x d_h
            cand_out[(g, h)] = y

            # Construct z_plus with updated h-slot
            z_plus = dict(z_dict)
            z_plus[h] = y

            L_plus = lm_loss(z_plus, targets)
            dL = (L_base - L_plus).detach()
            utilities[(g, h)] = dL

            # Router logit
            u_bar = u.mean(0, keepdim=True)
            v_bar = v[g].mean(0, keepdim=True)
            bilinear = (
                u_bar @ self.W[f"{h}|{g}"] @ v_bar.t()
            ).squeeze()
            logits[(g, h)] = (
                bilinear + self.beta * (dL - self.tau[f"{h}|{g}"])
            )

        #------ 4) Softmax routing
        logits_vec = torch.stack(
            [logits[e] for e in self.E]
        ) / self.temperature

        if soften:
            alpha_vec = torch.softmax(logits_vec, dim=0)
        else:
            alpha_vec = torch.zeros_like(logits_vec)
            alpha_vec[logits_vec.argmax()] = 1.0

        alpha = {e: alpha_vec[i]
                 for i, e in enumerate(self.E)}

        #------ 5) Grade-wise morphic update
        z_new = {}
        for h in self.grades:
            incoming = [
                alpha[(g, hh)] * cand_out[(g, hh)]
                for (g, hh) in self.E if hh == h
            ]
            if incoming:
                y_mix = torch.stack(incoming).sum(0)
                z_new[h] = self.norm[h](
                    z_dict[h] + (y_mix - z_dict[h])
                )
            else:
                z_new[h] = z_dict[h]

        aux = {
            "alpha": alpha,
            "utilities": utilities,
            "logits": logits,
            "L_base": float(L_base)
        }
        return z_new, aux
\end{lstlisting}

\subsection{Training Loop (Sketch)}

We assume a model \texttt{graded\_model} consisting of multiple
\texttt{GradedLayer} modules together with an output head
\texttt{lm\_head} used to compute next-token cross-entropy.

\begin{lstlisting}[language=Python]
optimizer = torch.optim.AdamW(
    graded_model.parameters(),
    lr=3e-4,
    weight_decay=0.01
)

for batch in loader:
    z_dict, targets = batch_init_states(batch)

    def lm_loss(states, y):
        # Combine graded components and apply LM head
        z_all = torch.cat(
            [states[g] for g in graded_model.grades], dim=-1
        )
        logits = graded_model.lm_head(z_all)
        return F.cross_entropy(logits, y)

    aux_logs = []

    # Forward through K graded layers
    for layer in graded_model.layers:
        z_dict, aux = layer(
            z_dict, targets, lm_loss, context=None, soften=True
        )
        aux_logs.append(aux)

    loss = lm_loss(z_dict, targets)

    # Optional sparsity penalty on router entropy
    entropy = 0.0
    for aux in aux_logs:
        alphas = torch.stack([aux["alpha"][e]
                              for e in graded_model.E])
        entropy -= (alphas * (alphas + 1e-9).log()).sum()

    total_loss = loss + 1e-3 * entropy

    optimizer.zero_grad()
    total_loss.backward()
    torch.nn.utils.clip_grad_norm_(
        graded_model.parameters(), 1.0
    )
    optimizer.step()
\end{lstlisting}

\end{document}